\newcommand{\sX}{{\mathcal X}} 
\newcommand{\sY}{{\mathcal Y}}
\newcommand{\sG}{\mathcal{G}}
\newcommand{\sP}{\mathcal{P}}
\newcommand{\sS}{\mathcal{S}}
\newcommand{\sO}{\mathcal{O}}
\newcommand{\sM}{\mathcal{M}}
\newcommand{\ba}{\bm{a}}
\newcommand{\bg}{\bm{g}}
\newcommand{\bu}{\bm{u}}
\newcommand{\bv}{\bm{v}}
\newcommand{\bx}{\bm{x}}
\newcommand{\by}{\bm{y}}
\newcommand{\bz}{\bm{z}}
\newcommand{\bOmega}{\bm{\Omega}}
\newcommand{\bbE}{\mathbb{E}} 
\newcommand{\bbR}{\mathbb{R}} 
\newcommand{\bbN}{\mathbb{N}} 
\newcommand{\argmin}{\operatornamewithlimits{arg\ min}}
\newcommand{\dis}{\mathop { \textnormal{dis}}}
\newcommand{\edis}{\mathop { \widehat{\textnormal{dis}}}}
\newcommand{\ind}[1]{\bm{1}{\{#1\}}} 
\newcommand{\set}[1]{\{#1\}}
\newcommand{\norm}[1]{\left\|#1\right\|}
\renewcommand{\set}[1]{\{#1\}}
\newtheorem{thm}{Theorem}
\newtheorem{prop}{Proposition}
\newtheorem{lemma}{Lemma}
\newtheorem{cor}{Corollary}
\newtheorem{defn}{Definition}
\newcommand{\nrho}{\mathop{\rho^\prime}}
\newcommand{\diverse}{discerning}
\newcommand{\discern}{discriminative}
\newcommand{\local}{consistent}
\newcommand{\bern}{\mathop { \text{Bern}}}
\newcommand{\multirank}{collection of rankings}
\title{Nonparametric Preference Completion}
\author{Julian Katz-Samuels}
\author{Clayton Scott \\ \texttt{\{jkatzsam,clayscot\}@umich.edu}}
\affil{Department of Electrical Engineering and Computer Science, \\
University of Michigan}
\begin{document}
\maketitle

\begin{abstract}
We consider the task of collaborative preference completion: given a pool of items, a pool of users and a partially observed item-user rating matrix, the goal is to recover the \emph{personalized ranking} of each user over all of the items. Our approach is nonparametric: we assume that each item $i$ and each user $u$ have unobserved features $x_i$ and $y_u$, and that the associated rating is given by $g_u(f(x_i,y_u))$ where $f$ is Lipschitz and $g_u$ is a monotonic transformation that depends on the user. We propose a $k$-nearest neighbors-like algorithm and prove that it is consistent. To the best of our knowledge, this is the first consistency result for the collaborative preference completion problem in a nonparametric setting. Finally, we demonstrate the performance of our algorithm with experiments on the Netflix and Movielens datasets. 
\end{abstract}

\section{Introduction}

In the preference completion problem, there is a pool of items and a pool of users. Each user rates a subset of the items and the goal is to recover the personalized ranking of each user over all of the items. This  problem is fundamental to recommender systems, arising in tasks such as movie recommendation and news personalization. A common approach is to first estimate the ratings through either a matrix factorization method or a neighborhood-based method and to output personalized rankings from the estimated ratings \citep{koren2009,zhou2008,ning2011,breese1998}. Recent research has observed a number of shortcomings of this approach \citep{weimer2007,liu2008}; for example, many ratings-oriented algorithms minimize the RMSE, which does not necessarily produce a good ranking \citep{cremonesi2010}. This observation has sparked a number of proposals of algorithms that aim to directly recover the rankings \citep{weimer2007,liu2008,lu2014,park2015,oh2015,gunasekar2016}. Although these ranking-oriented algorithms have strong empirical performance, there are few theoretical guarantees to date and they all make specific distributional assumptions (discussed in more detail below). In addition, these results have focused on low-rank methods, while ranking-oriented neighborhood-based methods have received little theoretical attention.

In this paper, we consider a statistical framework for nonparametric preference completion. We assume that each item $i$ and each user $u$ have unobserved features  $x_i$ and $y_u$, respectively, and that the associated rating is given by $g_u(f(x_i,y_u))$ where $f$ is Lipschitz and $g_u$ is a monotonic transformation that depends on the user. We make the following contributions. (i) We propose a simple $k$-nearest neighbors-like algorithm, (ii) we provide, to the best of our knowledge, the first consistency result for ranking-oriented algorithms in a nonparametric setting, and (iii) we provide a necessary and sufficient condition for the optimality of a solution (defined below) to the preference completion problem.

\section{Related Work}
\label{related_work_section}

The two main approaches to preference completion are matrix factorization methods (e.g., low-rank approximation) and neighborhood-based methods. Recently, there has been a surge of research with many theoretical advances in low-rank approximation for collaborative filtering, e.g., \citep{recht2011,keshavan2010}. These methods tend to focus on minimizing the RMSE even though applications usually use ranking measures to evaluate performance.  While recent work has developed ranking-oriented algorithms that outperform ratings-oriented algorithms \citep{gunasekar2016,liu2008,rendle2009,pessiot2007,cremonesi2010,weimer2007}, many of these proposals lack basic theoretical guarantees such as consistency. A recent line of work has begun to fill this gap by establishing theoretical results under specific generative models. \citet{lu2014} and \citet{park2015} provided consistency guarantees using a low-rank approach and the Bradley-Terry-Luce model. Similarly, \citet{oh2015} established a consistency guarantee using a low rank approach and the MultiNomial Logit model. By contrast, our approach forgoes such strong parametric assumptions. 

Neighborhood-based algorithms are popular methods, e.g. \citep{das2007}, because they are straightforward to implement, do not require expensive model-training, and generate interpretable recommendations \citep{ning2011}. There is an extensive experimental literature on neighborhood-based collaborative filtering methods. The most common approach is the user-based model; it is based on the intuition that if two users give similar ratings to items in the observed data, then their unobserved ratings are likely to be similar. This approach  employs variants of $k$ nearest-neighbors. Popular similarity measures include the Pearson Correlation coefficient and cosine similarity. There are a large number of schemes for predicting the unobserved ratings using the $k$ nearest neighbors, including taking a weighted average of the ratings of the users and majority vote of the users \citep{ning2011}. 

\sloppy Recently, researchers have sought to develop neighborhood-based collaborative filtering algorithms that aim to learn a personalized ranking for each user instead of each user's ratings \citep{liu2008,wang2014,wang2016}. Eigenrank, proposed by \citet{liu2008}, is structurally similar to our algorithm. It measures the similarity between users with the Kendall rank correlation coefficient, a measure of the similarity of two rankings. Then, it computes a utility function $\psi : [n_1] \times [n_1] \longrightarrow \bbR$ for each user that estimates his pairwise preferences over the items. From the estimated pairwise preferences, it constructs a personalized ranking for each user by either using a greedy algorithm or random walk model. In contrast, our algorithm uses the average number of agreements on pairs of items to measure similarity between users and a majority vote approach to predict pairwise preferences.

\sloppy Neighborhood-based collaborative filtering has not received much theoretical attention. \citet{kleinberg2003,kleinberg2004} model neighborhood-based collaborative filtering as a latent mixture model and prove consistency results in this specific generative setting. Recently, \citet{lee2016}, who inspired the framework in the current paper, studied rating-oriented neighborhood-based collaborative filtering in a more general nonparametric setting. Their approach assumes that  each item $i$ and each user $u$ have unobserved features $x_i$ and $y_u$, respectively, and that the associated rating is given by $f(x_i,y_u)$ where $f$ is Lipschitz, whereas we assume that the associated rating is given by $g_u(f(x_i,y_u))$ where $g_u$ is a user-specific monotonic transformation. As we demonstrate in our experiments, their algorithm is not robust to monotonic transformations of the columns, but this robustness is critical for many applications. For example, consider the following implicit feedback problem \citep{hu2008}. A recommender system for news articles measures how long users read articles as a proxy for item-user ratings. Because reading speeds and attention spans vary dramatically, two users may actually have very similar preferences despite substantial differences in reading times.

Even though our method is robust to user-specific monotonic transformations, we do not require observing many more entries of the item-user matrix than \citet{lee2016} in the regime where there are many more users than items (e.g., the Netflix dataset). If there are $n_1$ items and $n_2$ users, \citet{lee2016} requires that there exists $\frac{1}{2} > \alpha > 0$ such that the probability of observing an entry is greater than $\max(n_1^{-\frac{1}{2} +\alpha}, n_2^{-1 + \alpha})$, whereas we require that this probability is greater than $\max(n_1^{-\frac{1}{2} +\alpha}, n_2^{-\frac{1}{2} + \alpha})$. 

Our work is also related to the problem of Monotonic Matrix Completion (MMC) where a single monotonic Lipschitz function is applied to a low rank matrix and the goal is rating estimation \citep{ganti2015}. In contrast, we allow for distinct monotonic, possibly non-Lipschitz functions for every user and pursue the weaker goal of preference completion. 

To the best of our knowledge, there is no theoretically supported, nonparametric method for preference completion. Our work seeks to address this issue.


\section{Setup}

\textbf{Notation:} Define $[n] = \set{1, \ldots, n}$. Let $\Omega \subset [n_1] \times [n_2]$. If $X \in \bbR^{n_1 \times n_2}$, let $\sP_{\Omega}(X) \in (\bbR \cup \set{?})^{n_1 \times n_2}$ be defined as $[\sP_{\Omega}(X)]_{i,j} = \left\{
     \begin{array}{lr}
       X_{i,j} & \text{if }  (i,j) \in \Omega \\
      ? & \text{if } (i,j) \not \in \Omega
     \end{array}
   \right.$. 
If $f$ is some function and $U$ a finite collection of objects belonging to the domain of $f$, let ${\max}^{(l)}_{u \in U}f(u)$ denote the $l$th largest value of $f$ over $U$. Let $\bern(p)$ denote a realization of a Bernoulli random variable with parameter $p$. For a metric space $\sM$ with metric $d_{\sM}$, let $B_\epsilon(z) = \set{z^\prime \in \sM : d_{\sM}(z,z^\prime) < \epsilon}$. We use bold type to indicate random variables. For example, $\bz$ denotes a random variable and $z$ a realization of $\bz$.

\textbf{Nonparametric Model:} Suppose that there are $n_1$ items and $n_2$ users. Furthermore,
\begin{enumerate}
\item The items are associated with unobserved features $\bx_1, \ldots, \bx_{n_1} \in \sX$, and the users are associated with unobserved features $\by_1, \ldots, \by_{n_2} \in \sY$ where $\sX$ and $\sY$ are compact metric spaces with metrics $d_\sX$ and $d_\sY$, respectively.
\item \sloppy $\bx_1, \ldots, \bx_{n_1}, \by_1, \ldots, \by_{n_2}$ are independent random variables such that $\bx_1,\ldots, \bx_{n_1} \overset{i.i.d.}{\sim} \sP_{\sX}$ and $\by_1, \ldots, \by_{n_2} \overset{i.i.d.}{\sim} \sP_{\sY}$ where $\sP_{\sX}$ and $\sP_{\sY}$ denote Borel probability measures over $\sX$ and $\sY$, respectively. We assume that for all $\epsilon >0$ and $y \in \sY$, $\sP_{\sY}(B_\epsilon(y)) > 0$. 
\item The complete ratings matrix is $H \coloneq [h_u(x_i, y_u)]_{i \in [n_1], u \in [n_2]}$ where $h_u = g_u \circ f$, $f: \sX \times \sY \longrightarrow \bbR$ is a Lipschitz function with respect to the induced metric $d_{\sX \times \sY}((x_1, y_1), (x_2,y_2)) \coloneq \max(d_{\sX}(x_1, x_2), d_{\sY}(y_1, y_2))$ with Lipschitz constant $1$,\footnote{We could develop our framework with an arbitrary Lipschitz constant $L$, but for ease of presentation, we fix $L = 1$.} i.e., $\forall y_1, y_2 \in \sY$ and $\forall x_1, x_2 \in \sX$, $|f(x_1, y_1) -f(x_2, y_2)| \leq \max(d_\sX(x_1, x_2), d_\sY(y_1, y_2))$,
and $g_u$ is a nondecreasing function.  Note that each $h_u$ need not be Lipschitz. 
\item Each entry of the matrix $H$ is observed independently with probability $p$. Let $\bOmega \subset [n_1] \times [n_2]$ be a random variable denoting the indices of the observed ratings.
\end{enumerate}

\sloppy Whereas \citet{lee2016} considers the task of completing a partially observed matrix $F \coloneq [f(x_i,y_u)]_{i \in [n_1], u \in [n_2]}$ when $\set{x_i}_{i \in [n_1]}$ and $\set{y_u}_{u \in [n_2]}$ are unobserved, we aim to recover the ordering of the elements in each column of $H$ when $\set{x_i}_{i \in [n_1]}$ and $\set{y_u}_{u \in [n_2]}$ are unobserved. In our setup, we view $F$ as an ideal preference matrix representing how much users like items and $H$ as how those preferences are expressed based on user-specific traits (see the news recommender system example in Section \ref{related_work_section}).

This framework subsumes various parametric models. For example, consider a matrix factorization model that assumes that there is a matrix $H \in \bbR^{n_1 \times n_2}$ of rank $d \leq \min(n_1,n_2)$ such that user $u$ prefers item $i$ to item $j$ if and only if $H_{i,u} > H_{j,u}$. Then, we can factorize $H$ such that $H_{i,u} = x_i^t y_u$ where $x_i, y_u \in \bbR^d$ for all $i \in [n_1]$ and $u \in [n_2]$. In our setup, we have $f(x_i,y_u) = x_i^t y_u$ and $g_u(z) = z$.

\textbf{Task:} Let $\sS^{n_1} = \set{ \sigma : \sigma: [n_1] \longrightarrow [n_1], \,  \sigma \text{ is a permutation}}$ denote the set of permutations on $n_1$ objects. We call $\sigma \in \sS^{n_1}$ a \emph{ranking}. Let $\sS^{n_1 \times n_2}  = (\sS^{n_1})^{n_2}$. That is, $\sigma \in \sS^{n_1 \times n_2}$ if $\sigma: [n_1] \times [n_2] \longrightarrow [n_1]$ and for fixed $u \in [n_2]$, $\sigma(\cdot, u)$ is a permutation on $[n_1]$. We call $\sigma \in \sS^{n_1 \times n_2}$ a \emph{\multirank}. Let $\epsilon > 0$. Our goal is to learn $\sigma \in \sS^{n_1 \times n_2}$ that minimizes the number of pairwise ranking disagreements per user with some slack, i.e.,
\begin{align*}
{\dis}_\epsilon(\sigma, H) = \sum_{u=1}^{n_2} \sum_{i < j } &  \ind{|f(x_i,y_u) - f(x_j,y_u)| > \epsilon} \bm{1}\{(h_u(x_i, y_u) - h_u(x_j, y_u)) (\sigma(i,u) - \sigma(j,u)) < 0 \} .
\end{align*}

\section{Algorithm}
\label{algorithm_section}

Our algorithm, Multi-Rank (Algorithm \ref{multi_rank_algorithm}), has two stages: first it estimates the pairwise preferences of each user and, second, it constructs a full ranking for each user from its estimated pairwise preferences. In the first stage, Multi-Rank computes $A \in \set{0,1}^{n_2 \times n_1 \times n_1}$ where $A_{u,i,j} = 1$ denotes that user $u$ prefers item $i$ to item $j$ and $A_{u,i,j} = 0$ denotes that user $u$ prefers item $j$ to item $i$. If a user has provided distinct ratings for a pair of items, Multi-Rank fills in the corresponding entries of $A$. Otherwise, Multi-Rank uses a subroutine called Pairwise-Rank that we will describe shortly. Once Multi-Rank has constructed $A$, it applies the Copeland ranking procedure to the pairwise preferences of each user (discussed at the end of the section).

\begin{algorithm}
\caption{Multi-Rank}
\label{multi_rank_algorithm}
\begin{algorithmic}[1]
\STATE \textbf{Input: $\sP_{\Omega}(H), \beta \geq 2, k > 0$}
\FOR{$u \in [n_2]$, $i,j \in [n_1], i < j$}
\IF{$(i,u) \in \Omega$, $(j,u) \in \Omega$ and $H_{i,u} \neq H_{j,u}$}
\STATE Set $A_{u,i,j} = \ind{H_{i,u} > H_{j,u}}$
\STATE Set $A_{u,j,i} = 1 - A_{u,i,j}$
\ELSE
\STATE Set $A_{u,i,j} = \text{Pairwise-Rank}(u,i,j,\beta, k)$
\STATE Set $A_{u,j,i} = 1 -A_{u,i,j}$ 
\ENDIF
\ENDFOR
\FOR{$u \in [n_2]$}
\STATE $\widehat{\sigma}_u = \text{Copeland}(A_{u,:,:})$
\ENDFOR
\RETURN $\widehat{\sigma} \coloneq (\widehat{\sigma}_1, \ldots, \widehat{\sigma}_{n_2})$
\end{algorithmic}
\end{algorithm}

The Pairwise-Rank algorithm predicts whether a user $u$ prefers item $i$ to item $j$ or vice versa. It is similar to $k$-nearest neighbors where we use the forthcoming ranking measure as our distance measure. Let $N(u)$ denote the set of items that user $u$ has rated, i.e., 
\begin{align*}
N(u) = \set{l : (l,u) \in \Omega},
\end{align*}
and $N(u,v) = N(u) \cap N(v)$ denote the set of items that users $u$ and $v$ have both rated. Viewing $N(u,v)$ as an ordered array where $N(u,v)[\ell]$ denotes the $(\ell+1)$th element, let 
\begin{align*}
I(u,v) = \set{(s,t) : \, & s = N(u,v)[\ell], t = N(u,v)[\ell+1]  \text{ for some } \ell \in \set{2k : k \in \bbN \cup \set{0} }}.
\end{align*}
In words, $I(u,v)$ is formed by sorting the indices of $N(u,v)$ and selecting nonoverlapping pairs in the given order. Note that there is no overlap between the indices in the pairs in $I(u,v)$.\footnote{We select nonoverlapping pairs to preserve independence in the estimates for the forthcoming analysis.} Fix $y_u, y_v \in \sY$. If $I(u,v) = \emptyset$, define $R_{u,v} = 0$ and if $I(u,v) \neq \emptyset$, let $R_{u,v} \coloneq$
\begin{align*}
 \frac{1}{ |I(u,v)| } \sum_{(s,t) \in I(u,v)} & \bm{1}\{(h_u(\bx_s,y_u) - h_u(\bx_t, y_u)) (h_v(\bx_s, y_v) - h_v(\bx_t, y_v)) \geq 0 \}
\end{align*}
denote the fraction of times that users $u$ and $v$ agree on the relative ordering of item pairs belonging to $I(u,v)$. In practice, one can simply compute this statistic over all pairs of commonly rated items. Observe that $\rho(y_u, y_v) \coloneq$
\begin{align*}
 \bbE[R_{u,v} | I(u,v) \neq \emptyset,   \by_u = y_u, \by_v = y_v]  =   {\Pr}_{\bx_s, \bx_t \sim P_\sX} ( &  [h_u(\bx_s, y_u) - h_u(\bx_t, y_u)]  \\
 \times &  [ h_v(\bx_s, y_v) - h_v(\bx_t, y_v)] \geq 0) 
\end{align*}
i.e., $\rho(y_u, y_v)$ is the probability that users $u$ and $v$ with features $y_u$ and $y_v$ order two random items in the same way. 


We apply Pairwise-Rank (Algorithm \ref{pairwise_rank}) to a user $u$ and a pair of items $(i,j)$ if the user has not provided distinct ratings for items $i$ and $j$. Pairwise-Rank($u,i,j,\beta, k$) finds users that have rated items $i$ and $j$, and have rated at least $\beta$ items in common with $u$. If there are no such users, Pairwise-Rank flips a coin to predict the relative preference ordering. If there are such users, then it sorts the users in decreasing order of $R_{u,v}$ and takes a majority vote over the first $k$ users about whether item $i$ or item $j$ is preferred. If the vote results in a tie, Pairwise-Rank flips a coin to predict the relative preference ordering.

\begin{algorithm}
\caption{Pairwise-Rank}
\label{pairwise_rank}
\begin{algorithmic}[1]
\STATE \textbf{Input:} $u \in [n_2], i \in [n_1], j \in [n_1], \beta \geq 2, k \in \bbN$
\STATE $W_u^{i,j}(\beta) = \set{ v \in [n_2] : |N(u,v)| \geq \beta, (i,v), (j,v) \in \Omega}$ 
\STATE Sort $W_u^{i,j}(\beta)$ in decreasing order of $R_{u,v}$ and let $V$ be the first $k$ elements.
\IF{ $V = \emptyset$}
\RETURN $\bern(\frac{1}{2})$
\ENDIF
\STATE $\forall v \in V$, set $P_v = \ind{h_v(x_i,y_v) > h_v(x_j,y_v)} - \ind{h_v(x_i,y_v) < h_v(x_j,y_v)}$
\IF{ $\sum_{v \in V} P_v > 0$}
\RETURN $1$
\ELSIF{$\sum_{v \in V} P_v < 0$}
\RETURN $0$
\ELSE
\RETURN $\bern(\frac{1}{2})$
\ENDIF
\end{algorithmic}
\end{algorithm}

Next, Multi-Rank converts the pairwise preference predictions of each user into a full estimated ranking for each user. It applies the Copeland ranking procedure (Algorithm \ref{Copeland})--an algorithm for the feedback arc set problem in tournaments \citep{copeland1951,coppersmith2006} to each user-specific set of pairwise preferences. The Copeland ranking procedure simply orders the items by the number of times an item is preferred to another item. It is possible to use other approximation algorithms for the feedback arc set problem such as Fas-Pivot from \citet{ailon2008}. 


\begin{algorithm}
\caption{Copeland}
\label{Copeland}
\begin{algorithmic}[1]
\STATE \textbf{Input: $A \in \set{0,1}^{n_1 \times n_1}$}
\FOR{$j \in [n_1]$}
\STATE $I_j =  \sum_{i=1, i \neq j}^{n_1} A_{j,i}$
\ENDFOR
\RETURN $\sigma \in \sS^{n_1}$ that orders items in decreasing order of $I_j$
\end{algorithmic}
\end{algorithm}

\section{Analysis of Algorithm}
\label{analysis_section}

The main idea behind our algorithm is to use pairwise agreements about items to infer whether two users are close to each other in the feature space. However, this is not possible in the absence of further distributional assumptions. The Lipschitz condition on $f$ only requires that if users $u$ and $v$ are close to each other, then $\max_z|f(z, y_u) - f(z, y_v)|$ is small. Proposition \ref{counterexample_prop} shows that there exist functions arbitrarily close to each other that disagree about the relative ordering of almost every pair of points.

\begin{prop}
\label{counterexample_prop}
Let $\sX = [0,1]$ and $\sP_\sX$ be the Lebesgue measure over $\sX$. For every $\epsilon > 0$, there exist functions $f, g : \sX \longrightarrow \bbR$ such that $\max_{x \in [0,1]} |f(x) - g(x)| = \norm{f-g}_\infty \leq \epsilon$ and for almost every pair of points $(x,x^\prime) \in [0,1]^2$, $f(x) > f(x^\prime)$ iff $g(x) < g(x^\prime)$. 
\end{prop}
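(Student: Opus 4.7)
The plan is to exhibit explicit functions that prove the claim. A natural construction is to make both $f$ and $g$ take values in a tiny interval of length on the order of $\epsilon$, while being order-isomorphic and order-antitone, respectively. Concretely, I would take $f(x) = \tfrac{\epsilon}{2} x$ and $g(x) = \tfrac{\epsilon}{2}(1-x)$ on $\sX = [0,1]$.

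First I would verify the sup-norm bound: $|f(x) - g(x)| = \tfrac{\epsilon}{2}|2x - 1|$, which is at most $\tfrac{\epsilon}{2} \leq \epsilon$ for all $x \in [0,1]$, so $\norm{f-g}_\infty \leq \epsilon$. Second, I would check the order-reversal property: for any $x, x' \in [0,1]$ with $x \neq x'$, $f(x) > f(x')$ iff $x > x'$ iff $1 - x < 1 - x'$ iff $g(x) < g(x')$. Finally, the set $\{(x,x') \in [0,1]^2 : x = x'\}$ is the diagonal, which has two-dimensional Lebesgue measure zero, so the equivalence holds for almost every pair in $[0,1]^2$.

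There is no real obstacle here; the content of the proposition is conceptual rather than technical. The key insight worth highlighting in the proof is that pairwise ordering information is scale-invariant while sup-norm closeness is not: by compressing the range of both functions into an interval of length $\epsilon/2$, the two functions can be uniformly close yet induce opposite total orders on $\sX$. This is precisely what motivates the need, in Section \ref{analysis_section}, for an additional distributional assumption beyond Lipschitz continuity of $f$ in order to turn sup-norm proximity of two user profiles into agreement on pairwise preferences.
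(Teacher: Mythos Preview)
Your proof is correct and takes essentially the same approach as the paper: exhibit two explicit functions with range contained in an interval of length at most $\epsilon$, with one being an order-reversing transformation of the other. The paper uses the tent function $f(z)=\epsilon z$ on $[0,\tfrac12]$, $f(z)=\epsilon(1-z)$ on $(\tfrac12,1]$ together with $g=-f$, whereas your linear pair $f(x)=\tfrac{\epsilon}{2}x$, $g(x)=\tfrac{\epsilon}{2}(1-x)$ is a slight simplification (your $f$ is injective, so only the diagonal must be excluded rather than the diagonal and anti-diagonal).
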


Thus, we make the following mild distributional assumption.
\begin{defn}
Fix $y \in \sY$ and let $f_y(x) \coloneq f(x,y)$. Let $r$ be a positive nondecreasing function. We say $y$ is \emph{$r$-\diverse} if $\forall \epsilon > 0$, $\Pr_{\bx_1, \bx_2 \sim \sP_\sX}(|f_y(\bx_1) - f_y(\bx_2)| \leq 2 \epsilon) < r(\epsilon)$.
\end{defn}
This assumption says that the probability that $f_y(\bx_1)$ and $f_y(\bx_2)$ are within $\epsilon$ of each other decays at some rate given by $r$. In a sense, it means that users perceive some difference between most randomly selected items with different features, although the difference might be masked by the transformation $g_u$.

We also assume that if two users are not close to each other in the latent space, then they must have some disagreements. Definition \ref{discrim_def} requires that the nonparametric model is economical (i.e., not redundant) in the sense that different parts of the feature space correspond to different preferences.
\begin{defn}
\label{discrim_def}
Fix $y \in \sY$. Let $\epsilon, \delta > 0$. We say that $y$ is \emph{$(\epsilon, \delta)$-\discern} if $z \in B_{\epsilon}(y)^c$ implies that $\rho(y,z) < 1 - \delta$. 
\end{defn}

These assumptions are satisfied under many parametric models. Proposition \ref{example_prop} provides two illustrative examples under a matrix factorization model. We briefly note that, as we show in the supplementary material, $f(x,y) = x^t y$ and $f(x,y) = \norm{x - y}_2$ are equivalent models by adding a dimension.
\begin{prop}
\label{example_prop}
Consider $(\bbR^d, \norm{\cdot}_2)$. Let $f(x,y) = \norm{x - y}_2$ and $g_u(\cdot)$ be strictly increasing $\forall u \in [n_2]$.
\begin{enumerate}
\item Let $\sX = \sY =  \set{ x \in \bbR^d : \norm{x}_2 \leq 1}$, $\sP_\sX$ be the uniform distribution and for all $y \in \sY$ define $r_y(\epsilon) = \sup_{z \in [0,2]} \sP_{\sX}(B_z(y) \setminus B_{z - 4 \epsilon}(y))$. Then, for all $y \in \sY$, $y$ is $r_y$-\diverse. Further, define for all $\epsilon > 0$, $\delta_{\epsilon}= \inf_{v \in \sY} 2\sP_{\sX}(B_{\frac{\epsilon}{2}}(v))^2$. Then, for all $y_u \in \sY$ and for all $\epsilon > 0$, $y_u$ is $(\epsilon, \delta_{\epsilon})$-\discern.

\item Let $\sX \subset \bbR^d$ be a finite collection of points, $\sP_\sX$ be uniform over $\sX$, and for all $y \in \sY$ define $r_y(\epsilon) = \frac{|\set{(x,x^\prime) \in \sX \times \sX : |\norm{y - x}  - \norm{ y -x^\prime}| \leq 2 \epsilon} |}{|\sX|^2}$. Then, for all $y \in \sY$, $y$ is $r_y$-\diverse. Next, suppose $\sY$ is a finite collection of points and every pair of distinct $y,y^\prime \in \sY$ disagree about at least $C$ pairs of items. Let $\delta = \frac{C}{|\sX|^2}$. For all $y_u \in \sY$ and for all $\epsilon > 0$, $y_u$ is $(\epsilon, \delta)$-\discern.
\end{enumerate}

\end{prop}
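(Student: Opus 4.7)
In both parts, strict monotonicity of every $g_u$ preserves strict inequalities, so $\sign(h_u(\bx_s,y)-h_u(\bx_t,y))=\sign(\|\bx_s-y\|_2-\|\bx_t-y\|_2)$, and likewise for user $v$. Consequently $\rho(y,z)$ depends only on the latent $f$, eliminating the $g_u$'s from the analysis; I would use this reduction throughout both parts.

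\textbf{Part 1(a).} I would condition on $\bx_2$ and set $r_0=\|\bx_2-y\|_2\in[0,2]$. The event $|f_y(\bx_1)-f_y(\bx_2)|\leq 2\epsilon$ is equivalent to $\bx_1$ lying in a closed annulus of width $4\epsilon$ around $y$, of conditional probability $\sP_\sX(B_{r_0+2\epsilon}(y)\setminus B_{r_0-2\epsilon}(y))$ (the annulus boundary has $\sP_\sX$-measure zero). Reparametrizing by $z=r_0+2\epsilon\in[2\epsilon,2+2\epsilon]$, and truncating at $z=2$ when $r_0+2\epsilon>2$ because the outer ball then already contains $\sX$, would bound the conditional probability by $r_y(\epsilon)$. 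Taking expectation over $\bx_2$ finishes the argument.

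\textbf{Part 1(b) and the main obstacle.} The substantive step is turning the latent separation $\|y-z\|_2\geq\epsilon$ into a positive-probability strict disagreement event, with an argument that is oblivious to the possibly discontinuous $g_u$. I would construct two disjoint events $E_1=\{\bx_s\in B_{\epsilon/2}(y),\bx_t\in B_{\epsilon/2}(z)\}$ and $E_2=\{\bx_s\in B_{\epsilon/2}(z),\bx_t\in B_{\epsilon/2}(y)\}$, disjoint because $B_{\epsilon/2}(y)$ and $B_{\epsilon/2}(z)$ are. On $E_1$, the reverse triangle inequality gives $\|\bx_s-y\|_2<\epsilon/2$ and $\|\bx_t-y\|_2\geq\|y-z\|_2-\|\bx_t-z\|_2>\epsilon/2$, so $f(\bx_s,y)<f(\bx_t,y)$, and a symmetric calculation gives $f(\bx_s,z)>f(\bx_t,z)$. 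The monotonicity reduction then makes the product in the definition of $\rho(y,z)$ strictly negative, and the same holds on $E_2$ by symmetry. By independence of $\bx_s,\bx_t$, $\Pr(E_1\cup E_2)=2\sP_\sX(B_{\epsilon/2}(y))\sP_\sX(B_{\epsilon/2}(z))\geq\delta_\epsilon$, whence $\rho(y,z)\leq 1-\delta_\epsilon$.

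\textbf{Part 2.} Both items reduce to finite counting. Item (a) is immediate because uniformity of $\sP_\sX$ on the finite set $\sX$ turns $\Pr(|f_y(\bx_1)-f_y(\bx_2)|\leq 2\epsilon)$ into the ordered-pair fraction $r_y(\epsilon)$ by definition. For item (b) I would reuse the monotonicity reduction: $\rho(y,z)$ becomes the fraction of ordered pairs $(x,x')\in\sX\times\sX$ on which $\|\cdot-y\|_2$ and $\|\cdot-z\|_2$ induce the same ordering, and the hypothesis that any two distinct $y,z\in\sY$ strictly disagree on at least $C$ such pairs gives at least $C$ disagreeing ordered pairs, so $\rho(y,z)\leq 1-C/|\sX|^2=1-\delta$.
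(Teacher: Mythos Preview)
Your proposal is correct and follows essentially the same approach as the paper: condition on one item and bound by an annulus for the $r_y$-discerning claim, use the two small balls $B_{\epsilon/2}(y_u)$ and $B_{\epsilon/2}(y_v)$ together with the reverse triangle inequality for the discriminative claim, and dispatch Part~2 by direct counting. If anything, you are a bit more careful than the paper---you explicitly justify the truncation $z\le 2$ in Part~1(a) and correctly write the disagreement probability as $2\sP_\sX(B_{\epsilon/2}(y))\sP_\sX(B_{\epsilon/2}(z))$ before passing to the infimum, whereas the paper's proof writes $2\sP_\sX(B_{\epsilon/2}(y_u))^2$ directly.
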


Our analysis uses two functions to express problem-specific constants. First, let $\tau: \bbR_{++} \longrightarrow (0,1]$ be defined as $\tau(\epsilon) = \inf_{y_0 \in \sY}\Pr_{\by \sim \sP_{\sY}}(d_{\sY}(y_0, \by) \leq \epsilon)$. Second, let $\kappa: \bbR_{++} \longrightarrow (0,1]$ be such that $\kappa(\epsilon) = \inf_{y_0 \in \sY} \Pr_{\by \sim \sP_{\sY}}(d_{\sY}(y_0, \by) > \epsilon)$. Our assumption that for all $\delta >0$ and $y \in \sY$, $\sP_{\sY}(B_\delta(y)) > 0$ ensures that $\tau(\cdot) > 0$ and $\kappa(\cdot) < 1$ (see Lemma \ref{kappa_tau_lemma}). If $\sP_\sY$ is uniform over the unit cube in $(\bbR^d,\norm{\cdot}_\infty)$, then $\tau(\epsilon) = \min(1,\epsilon)^d$ and if $\sY$ is a finite collection of points, then $\tau(\epsilon) = \min_{y \in \sY}\sP_{\sY}(y)$ \citep{lee2016}.


Our model captures the intrinsic difficulty of a problem instance as follows. $r(\cdot)$ and $\tau(\cdot)$ together control the probability of sampling nearby users with similar preferences.  $(\epsilon, \delta)$-\discern \, captures how often users $u$ and $v$ must agree in order to infer that $y_u$ and $y_v$ are close in the latent space and, thus,  $\max_z|f(z, y_u) - f(z, y_v)| \leq \epsilon$.


\subsection{Continuous Ratings Setting}
\label{continuous_rating_main_section}

Our analysis deals with the case of continuous ratings and the case of discrete ratings separately. In this section, we prove theorems dealing with the continuous case and in the next section we give analogous results with similar proofs for the discrete case. Theorem \ref{multi_rank_thm} establishes that with probability tending to $1$ as $n_2 \longrightarrow \infty$, Multi-Rank outputs $\widehat{\sigma} \in \sS^{n_1 \times n_2}$ such that ${\dis}_{2\epsilon}(\widehat{\sigma}, H) = 0$.

\begin{thm}
\label{multi_rank_thm}
 Suppose $\forall u \in [n_2], \,  g_u(z)$ is strictly increasing. Let $\epsilon, \delta > 0$, $\eta \in (0, \frac{\epsilon}{2})$. Suppose that almost every $y \in \sY$ is $(\frac{\epsilon}{2}, \delta)$-\discern . Let $r$ be a positive nondecreasing function such that $r(\frac{\epsilon}{2}) \geq \delta$ and $r(\eta) < \frac{\delta}{2}$. Suppose that almost every $y \in \sY$ is $r$-\diverse . Let $0 < \alpha < \frac{1}{2}$. If $p \geq \max(n_1^{-\frac{1}{2}+ \alpha}, n_2^{-\frac{1}{2}+ \alpha})$, $n_1 p^2 \geq 16$, and $n_2$ is sufficiently large, then Multi-Rank with $k =1$ and $\beta = \frac{p^2 n_1}{2}$ outputs $\widehat{\sigma} \in \sS^{n_1 \times n_2}$ such that
\begin{align*}
{\Pr}_{\set{\bx_i}, \set{\by_u}, \bOmega}({\dis}_{2\epsilon}(\widehat{\sigma}, H) > 0) \leq & n_2 {n_1 \choose 2} [ 2 \exp(- \frac{(n_2 - 1) p^2}{12}) + (n_2 - 1)\exp(-\frac{n_1 p^2 }{8}) \\
 + & \exp(- (\frac{(n_2 - 1) p^2}{2})\tau(\eta))  \\
 + & 3(n_2 - 1) p^2 \exp(- \frac{ \delta^2 n_1 p^2}{20}) ].
\end{align*}
\end{thm}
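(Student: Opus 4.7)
The plan is to prove a deterministic combinatorial lemma about the Copeland procedure, and then use concentration inequalities to show that the tournament $A_{u,\cdot,\cdot}$ built by Pairwise-Rank satisfies the lemma's hypothesis with high probability. The combinatorial lemma I would establish is: \emph{if, for user $u$, $A_{u,i,j} = \ind{h_u(x_i,y_u) > h_u(x_j,y_u)}$ for every pair $(i,j)$ with $|f(x_i,y_u)-f(x_j,y_u)| > \epsilon$, then the Copeland ranking $\widehat{\sigma}_u$ orders consistently with $h_u$ on every pair with $|f(x_i,y_u)-f(x_j,y_u)| > 2\epsilon$.} The proof is a short case analysis: fix an important pair with $f(x_i,y_u)-f(x_j,y_u) > 2\epsilon$ and any third item $k$; the sub-case $|f(x_i,y_u)-f(x_k,y_u)| \leq \epsilon$ and $|f(x_j,y_u)-f(x_k,y_u)| \leq \epsilon$ is impossible (else $|f(x_i,y_u)-f(x_j,y_u)| \leq 2\epsilon$), so in every remaining sub-case at least one of $A_{u,i,k}, A_{u,j,k}$ is forced to its correct value and one checks $A_{u,i,k}-A_{u,j,k} \geq 0$. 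Summing over $k$ and adding $A_{u,i,j}-A_{u,j,i} = 1$ yields $I_i > I_j$ in Copeland.

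\paragraph{Concentration.}
Given the lemma, a union bound over the $n_2\binom{n_1}{2}$ triples $(u,i,j)$ reduces the problem to bounding the failure probability of Pairwise-Rank on a single triple with $|f(x_i,y_u)-f(x_j,y_u)| > \epsilon$. (When both $(i,u),(j,u)\in\bOmega$, strict monotonicity of $g_u$ makes the ratings unequal and Multi-Rank records the comparison directly; otherwise Pairwise-Rank is invoked.) With $\beta = n_1p^2/2$ and $k=1$, I would decompose failure into four pieces matching the four bracketed terms. First, $E_1$ that fewer than $(n_2-1)p^2/2$ users $v \neq u$ have $(i,v),(j,v)\in\bOmega$: a multiplicative Chernoff on Binomial$(n_2-1,p^2)$ yields $2\exp(-(n_2-1)p^2/12)$. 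Second, $E_2$ that some such $v$ has $|N(u,v)| < \beta$: union bound over $v$ combined with Chernoff on Binomial$(n_1,p^2)$ (using $n_1p^2 \geq 16$) yields $(n_2-1)\exp(-n_1p^2/8)$. Third, $E_3$ that no $v \in W_u^{i,j}(\beta)$ has $d_\sY(\by_v,\by_u) \leq \eta$: on $E_1^c \cap E_2^c$ the qualifying users have features i.i.d.\ from $\sP_\sY$ with $\Pr(\by_v \in B_\eta(\by_u)) \geq \tau(\eta)$, yielding $\exp(-(n_2-1)p^2\tau(\eta)/2)$. Fourth, $E_4$ that some $v \in W_u^{i,j}(\beta)$ has $|R_{u,v}-\rho(\by_u,\by_v)| > \delta/4$: the non-overlapping construction of $I(u,v)$ makes the summands of $R_{u,v}$ i.i.d.\ Bernoulli$(\rho(\by_u,\by_v))$ with $|I(u,v)| \geq \beta/2$, so Hoeffding combined with a union bound over $\approx (n_2-1)p^2$ candidates gives $3(n_2-1)p^2\exp(-\delta^2 n_1p^2/20)$ after absorbing constants.

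\paragraph{Closing the argument and main obstacle.}
On $E_1^c \cap E_2^c \cap E_3^c \cap E_4^c$, a separation argument finishes the proof. $E_3^c$ supplies some $v' \in W_u^{i,j}(\beta)$ with $d_\sY(\by_{v'},\by_u) \leq \eta$; since almost every $y$ is $r$-\diverse\ and $r(\eta)<\delta/2$, $\rho(\by_u,\by_{v'}) \geq 1 - r(\eta) > 1-\delta/2$, and $E_4^c$ gives $R_{u,v'} > 1-3\delta/4$. Conversely, any $v \in W_u^{i,j}(\beta)$ with $d_\sY(\by_v,\by_u) > \epsilon/2$ has $\rho(\by_u,\by_v) < 1-\delta$ by $(\epsilon/2,\delta)$-\discern, so $R_{u,v} < 1-3\delta/4$ by $E_4^c$. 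The $k=1$ pick $v^* = \arg\max_{v \in W_u^{i,j}(\beta)} R_{u,v}$ therefore satisfies $d_\sY(\by_{v^*},\by_u) \leq \epsilon/2$, and the Lipschitz property of $f$ together with strict monotonicity of $g_{v^*}$ force $P_{v^*}$ to match $\sign(h_u(x_i,y_u)-h_u(x_j,y_u))$ whenever $|f(x_i,y_u)-f(x_j,y_u)| > \epsilon$, verifying the hypothesis of the combinatorial lemma. The hardest part is arranging independence for the Hoeffding bound in $E_4$: the Bernoulli summands of $R_{u,v}$ must be mutually independent (forcing the non-overlapping pair construction used in $I(u,v)$) and independent of the rating comparison $P_v$ on the specific pair $(i,j)$---which is why Pairwise-Rank is invoked only when at least one of $(i,u),(j,u)$ is unobserved, so that $i$ or $j$ is absent from $N(u,v)$. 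A careful sequencing of the conditioning---on $\{\by_u\}$, on $\bOmega$, and on the remaining item and user features---is needed before any of the Chernoff/Hoeffding bounds can be applied.
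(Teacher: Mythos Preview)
Your proposal is correct and follows essentially the same route as the paper: a Copeland lemma reducing $\dis_{2\epsilon}$ to pairwise correctness at level $\epsilon$, then a per-triple bound for Pairwise-Rank via the four-term decomposition, then a union bound. Two points where your write-up diverges from the paper are worth flagging.

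First, the factor $3$ in the last term is not just ``absorbing constants.'' The paper defines the neighbor event as $B = \{\max_{v \in W^{i,j}_u(\beta)} \rho(\by_u,\by_v) \geq 1-\delta/2\}$ (in terms of $\rho$, not distance) and then bounds the concentration-failure term via Bayes: $\Pr(C^c \mid A,B) \leq \Pr(C^c\mid A)/\Pr(B\mid A) \leq 2\,\Pr(C^c\mid A)$, valid once $n_2$ is large enough that $\Pr(B\mid A)\geq \tfrac12$. This is exactly where the ``$n_2$ sufficiently large'' hypothesis enters. Together with the \emph{upper} half of the two-sided Chernoff bound on $|W^{i,j}_u(\beta)|$ (at most $\tfrac{3}{2}(n_2-1)p^2$ candidates to union-bound over), this produces $2\cdot\tfrac{3}{2}=3$. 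Your $E_1$ as stated is one-sided; you need the two-sided event to control the size of the union bound in $E_4$.

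Second, the obstacle you identify as ``hardest''---independence of $R_{u,v}$ from the comparison $P_v$ on the specific pair $(i,j)$---is not actually needed, and the paper does not invoke it. The paper's final step (its Lemma~\ref{error_bound_continuous}) is deterministic: once realizations are fixed so that $A\cap B\cap C$ holds, the $(\epsilon/2,\delta)$-\discern\ assumption forces the $\argmax$ neighbor $v^*$ to lie in $B_{\epsilon/2}(\by_u)$, and Lipschitzness plus strict monotonicity of $g_{v^*}$ give the correct vote, regardless of whether $i$ or $j$ happens to appear in $I(u,v^*)$. The non-overlapping pair construction in $I(u,v)$ is needed only for the Hoeffding bound on $R_{u,v}$ itself, not for decoupling $R_{u,v}$ from $P_v$.
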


A couple of remarks are in order. First, if $\epsilon$ and $\delta$ are small, then $\eta$ must be correspondingly small. $\eta$ represents how close a user $y_v$ must be to a user $y_u$ in the feature space to guarantee that the ratings of $y_v$ can be used to make inferences about the ranking of user $y_u$. Second, whereas we require that $p \geq n_2^{-\frac{1}{2} + \alpha}$, \citet{lee2016} require that $p \geq n_2^{-1+ \alpha}$. We conjecture that this stronger requirement is fundamental to our algorithm since $v \in W_u^{i,j}(\beta)$ only if $v$ has rated both items $i$ and $j$, which $v$ does with probability $p^2$. However, there may be another algorithm that circumvents this issue. Theorem \ref{multi_rank_thm} implies the following Corollary.

\begin{cor}
\label{pairwise_rank_cor}
Assume the setting of Theorem \ref{multi_rank_thm}. If $n_2 \longrightarrow \infty$, $p \geq \max(n_1^{-\frac{1}{2} + \alpha}, n_2^{-\frac{1}{2} + \alpha})$, and $n_2^{C_1} \geq n_1 \geq C_2 \log(n_2)^{\frac{1}{2 \alpha}}$ for any constant $C_1 > 0$ and some  constant $C_2 > 0$ depending on $\alpha$, then ${\Pr}_{\set{\bx_i}, \set{\by_u}, \Omega}({\dis}_{2\epsilon}(\widehat{\sigma}, H) > 0) \longrightarrow 0$ as $n_2 \longrightarrow \infty$. 
\end{cor}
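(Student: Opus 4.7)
My plan is to apply Theorem~\ref{multi_rank_thm} directly and show that each of the four exponentially decaying summands in its probability bound, after multiplication by its polynomial prefactor, tends to $0$ as $n_2 \to \infty$. The common outer factor $n_2 \binom{n_1}{2}$ is at most $n_2^{1 + 2 C_1}$ by $n_1 \leq n_2^{C_1}$; the additional $(n_2 - 1)$ in Term~2 and $3(n_2 - 1) p^2 \leq 3 n_2$ in Term~4 keep the whole prefactor polynomial in $n_2$. Hence it suffices to exhibit, for each summand, an exponent that exceeds any prescribed constant multiple of $\log n_2$ for large $n_2$.

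The four terms split naturally into two groups according to which of the two lower bounds on $p$ controls them. Terms~1 and 3 have exponents proportional to $(n_2 - 1) p^2$, with Term~3 carrying the extra factor $\tau(\eta)$, which is a strictly positive constant by the standing assumption $\sP_\sY(B_\epsilon(y)) > 0$ for all $\epsilon > 0$ and $y \in \sY$ (via Lemma~\ref{kappa_tau_lemma}). Plugging in $p^2 \geq n_2^{-1 + 2\alpha}$, both exponents are of order at least $n_2^{2 \alpha}$, which dominates any fixed multiple of $\log n_2$ as $n_2 \to \infty$; so Terms~1 and 3 decay super-polynomially in $n_2$ irrespective of the constants $C_1$ and $C_2$.

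Terms~2 and 4 have exponents proportional to $n_1 p^2$, with multipliers $\tfrac{1}{8}$ and $\tfrac{\delta^2}{20}$, respectively. Plugging in $p^2 \geq n_1^{-1 + 2\alpha}$ gives $n_1 p^2 \geq n_1^{2\alpha}$, and the hypothesis $n_1 \geq C_2 \log(n_2)^{1/(2\alpha)}$ then yields $n_1 p^2 \geq C_2^{2\alpha} \log n_2$. Choosing $C_2$ large enough so that $\min\bigl(\tfrac{1}{8}, \tfrac{\delta^2}{20}\bigr)\, C_2^{2\alpha}$ exceeds the polynomial degree $3 + 2 C_1$ of the worst prefactor forces Terms~2 and 4 to vanish as well. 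The joint lower bound on $n_2$ needed to simultaneously make all four exponents beat their prefactors depends only on $\alpha, \delta, C_1, C_2$, so the probability tends to $0$ as $n_2 \to \infty$.

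The single conceptual point, and the only real obstacle, is recognising that the lower bound $n_1 \geq C_2 \log(n_2)^{1/(2\alpha)}$ is exactly what is required to tame the $n_1 p^2$ exponents in Terms~2 and 4, while the upper bound $n_1 \leq n_2^{C_1}$ is needed only to keep the common prefactor polynomial in $n_2$; once this observation is made, the remainder is routine exponent-versus-$\log$ bookkeeping and a suitable choice of $C_2$ depending on $\alpha$ (and the fixed problem-dependent constants $\delta$ and $C_1$).
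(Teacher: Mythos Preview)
Your proposal is correct and follows essentially the same route as the paper's own proof: group the four summands by whether the exponent is of order $(n_2-1)p^2$ or $n_1 p^2$, use $p^2 \geq n_2^{-1+2\alpha}$ on the former and $p^2 \geq n_1^{-1+2\alpha}$ together with $n_1 \geq C_2 \log(n_2)^{1/(2\alpha)}$ on the latter, and bound the polynomial prefactor via $n_1 \leq n_2^{C_1}$. If anything you are slightly more careful than the paper, which simply identifies ``two dominant terms'' and suppresses the constants $\tau(\eta)$ and $\delta^2/20$; your observation that $C_2$ ends up depending on $\delta$ (and $C_1$) as well as $\alpha$ is accurate and reflects an imprecision in the corollary's statement rather than an error on your part.
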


Note that the growth rates of $n_1, n_2$ and $p$ imply that the average number of rated items by each user $p n_1$ must grow as $C \log(n_2)^{\frac{1}{2} + \frac{1}{4 \alpha}}$ for some universal constant $C > 0$. 

Next, we sketch the proof. The main part of the analysis deals with establishing a probability bound of a mistake by Pairwise-Rank for a specific user $u$ and a pair of items $i$ and $j$ when $|f(\bx_i, \by_u) - f(\bx_j, \by_u)| > \epsilon$. First, we establish that w.h.p. $|W_u^{i,j}(\beta)|$ is large, i.e., there are many users that have rated $i$ and $j$ and many other items in common with $u$. Second, using standard concentration bounds, it is shown that for every $v \in W_u^{i,j}(\beta)$, $R_{u,v}$ concentrates around $\rho(u,v)$. Since $\beta \longrightarrow \infty$, this estimate converges to $\rho(u,v)$. Third, we show that eventually we sample a point from $B_\eta(\by_u)$. Further, if $\by_v \in B_\eta(\by_u)$ and $\by_w \in B_{\frac{\epsilon}{2}}(\by_u)^c$ (note $\eta \leq \frac{\epsilon}{2}$), then since $\by_u$ is $(\frac{\epsilon}{2}, \delta)$-\discern \, w.p. 1, by our choice of $\eta$, $\rho(\by_u, \by_v) > \rho(\by_u, \by_w) + \frac{\delta}{2}$. Thus, by concentration bounds, $R_{u,v} > R_{u,w}$. Therefore, Pairwise-Rank with $k=1$ uses the preference ordering of a user in $B_{\frac{\epsilon}{2}}(\by_u)$ on items $i$ and $j$ to make the prediction. The Lipschitzness of $f$ and our assumption that $g_v$ is strictly increasing imply that this prediction is correct. It is possible to extend this argument to handle the case when $k > 1$. 

\subsection{Discrete Ratings Setting}
\label{discrete_section}

Let $N > 0$ and suppose that $|f(x,y)| \leq N$  $\forall x \in \sX$,  $\forall y \in \sY$. Suppose that there are $L$ distinct ratings and let $\sG$ denote the set of all step functions of the form
\begin{align*}
g_u(x) = \left\{
     \begin{array}{lr}
       1 & : x \in [-N, a_{u,1})\\
      2  & : x \in [a_{u,1}, a_{u,2}) \\
      \vdots & \\
      L  & : x \in [a_{u,L-1}, N] \\
     \end{array}
   \right. .
\end{align*}
We assume that for all $u \in [n_2]$, $g_u \in \sG$ and that the rating thresholds are random, i.e., $(\ba_{1,1}, \ldots, \ba_{1,L-1}), \ldots,(\ba_{n_2,1}, \ldots, \ba_{n_2,L-1})  \overset{i.i.d.}{\sim} \sP_{[-N, N]^{L-1}}$. We write $\bg_1, \ldots, \bg_{n_2} \overset{i.i.d.}{\sim} \sP_{\sG}$ and we assume that $\set{\bg_u}_{u \in [n_2]}$ is independent from $\set{\bx_i}_{i \in [n_1]}$, $\set{\by_u}_{u \in [n_2]}$, and $\bOmega$. Let $\sP_l$ denote the marginal distribution of $\ba_{u,l}$ for all $u \in [n_2]$. We make the following assumption.
\begin{defn}
We say that $\sP_{\sG}$ is \emph{diverse} if for every open interval $I \subset [-N, N]$ there exists $l$ such that $\sP_l(I) > 0$.
\end{defn}
Let $d_{\bbR}$ denote a metric on $\bbR$; fix $u \in [n_2]$ and let $\gamma(\epsilon) = \inf_{z \in [-N,N]} \sP_{\set{\ba_{u,l}}_{l \in [L-1]}}(\exists l \in [L-1] :  d_\bbR(z,\ba_{u,l}) \leq \epsilon)$. The aforementioned assumption ensures via a measure theoretic argument that $\gamma(\epsilon) > 0$ for all $\epsilon > 0$ (see Lemma \ref{kappa_tau_lemma} in the Appendix).

\begin{thm}
\label{multi_rank_discrete_thm}
Let $\epsilon, \delta > 0$ and $\eta \in (0, \frac{\epsilon}{4})$. Suppose that $\sP_{\sG}$ is diverse and that almost every $y \in \sY$ is $(\frac{\epsilon}{4}, \delta)$-\discern . Let $r$ be a positive nondecreasing function such that $r(\frac{\epsilon}{4}) \geq \delta$ and $r(\eta) < \frac{\delta}{2}$. Suppose that almost every $y \in \sY$ is $r$-\diverse . Let $\frac{1}{2} > \alpha > \alpha^\prime > 0$. If $p \geq \max(n_1^{-\frac{1}{2}+ \alpha}, n_2^{-\frac{1}{2}+ \alpha})$, $n_1 p^2 \geq 16$, $n_1 \geq C_1 \log(n_2)^{\frac{1}{2 \alpha}}$ for some constant $C_1$, and $n_2$ is sufficiently large, Multi-Rank with $k = n_2^{\alpha^\prime}$ and $\beta = \frac{p^2 n_1}{2}$ outputs $\widehat{\sigma}$ such that
\begin{align*}
{\Pr}_{\set{\bx_i}, \set{\by_u}, \set{\ba_{u,l}}, \bOmega} ( {\dis}_{2\epsilon}(\widehat{\sigma}, H) > 0) 
\leq & n_2 {n_1 \choose 2} [2 \exp(- \frac{(n_2 - 1) p^2}{12}) + (n_2 - 1)\exp(-\frac{n_1 p^2 }{8}) \\
+ & 2  \exp(-\gamma(\frac{\epsilon}{4}) k ) \\
+ &  \frac{1}{ 1 - r(\frac{\epsilon}{2})} [3(n_2 - 1) p^2 \exp(- \frac{ \delta^2 n_1 p^2}{20}  )    \\
+ & \exp([1  - \kappa(\frac{\epsilon}{4}) + \tau(\eta)+  \log(3 \frac{(n_2 -1)p^2}{2})]k \\
- & k\log(k)  - \tau(\eta)\frac{(n_2 -1)p^2}{2}) ]].
\end{align*}
\end{thm}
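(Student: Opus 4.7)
My plan is to adapt the continuous-case argument of Theorem~\ref{multi_rank_thm} to the discrete setting, the new complication being that a step function $g_u$ can squash a genuine preference so that $h_u(\bx_i,\by_u)=h_u(\bx_j,\by_u)$ even when $f(\bx_i,\by_u)\neq f(\bx_j,\by_u)$. I fix a triple $(u,i,j)$ with $|f(\bx_i,\by_u)-f(\bx_j,\by_u)|>2\epsilon$, bound the probability that Pairwise-Rank errs on this triple, and then union-bound over the $n_2\binom{n_1}{2}$ triples.

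The skeleton is the same as in the continuous case. By Chernoff/Hoeffding I would first establish, with high probability, that: (i) $|W_u^{i,j}(\beta)|$ is close to its mean $(n_2-1)p^2$ (first term of the bound); (ii) $|N(u,v)|\geq\beta=n_1p^2/2$ for every candidate $v$, by a union bound over $n_2-1$ users (second term); and (iii) $|R_{u,v}-\rho(\by_u,\by_v)|\leq\delta/4$ for every $v\in W_u^{i,j}(\beta)$, by Hoeffding on $|I(u,v)|\geq n_1p^2/4$ independent Bernoullis (the $3(n_2-1)p^2\exp(-\delta^2 n_1p^2/20)$ contribution). Lipschitzness of $f$ combined with the $r$-\diverse\ property forces $\rho(\by_u,\by_v)>1-r(\eta)>1-\delta/2$ whenever $\by_v\in B_\eta(\by_u)$, while the $(\frac{\epsilon}{4},\delta)$-\discern\ property gives $\rho(\by_u,\by_w)<1-\delta$ whenever $\by_w\in B_{\epsilon/4}(\by_u)^c$. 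The gap $\delta/2$ exceeds twice the concentration error, so on these events the top-$k$ neighbors by $R_{u,v}$ all lie in $B_{\epsilon/4}(\by_u)$ provided $W_u^{i,j}(\beta)\cap B_\eta(\by_u)$ contains at least $k$ users.

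The new ingredient is the majority-vote step, which is where diversity of $\sP_\sG$ enters. For any top-$k$ neighbor $v$ with $\by_v\in B_{\epsilon/4}(\by_u)$, Lipschitzness yields $|f(\bx_i,\by_v)-f(\bx_j,\by_v)|\geq 2\epsilon-\epsilon/2=3\epsilon/2$, so the open interval between these two values contains a ball of radius $\epsilon/4$ about its midpoint (which lies in $[-N,N]$). By diversity, some threshold $\ba_{v,l}$ falls in this ball with probability at least $\gamma(\epsilon/4)$, independent across $v$ since the $\bg_v$'s are i.i.d.\ and independent of everything else; in that event $g_v$ strictly separates $f(\bx_i,\by_v)$ from $f(\bx_j,\by_v)$, forcing $P_v$ to have the correct sign by monotonicity (an incorrect strict vote is impossible). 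The count of correct votes thus stochastically dominates $\mathrm{Bin}(k,\gamma(\epsilon/4))$, and an error requires all $k$ votes to be zero plus a bad tie-break, which contributes $2\exp(-\gamma(\epsilon/4)k)$.

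The main obstacle is the final exponential, which controls the complementary event that fewer than $k$ users of $W_u^{i,j}(\beta)$ lie in $B_\eta(\by_u)$. Conditional on the earlier concentration events, this count is binomial in $(n_2-1)$ trials with per-trial probability roughly $p^2\tau(\eta)/2$, whose lower tail is bounded by $\exp(-\tau(\eta)(n_2-1)p^2/2)$; translating ``bad top-$k$'' into ``few nearby users'' requires an additional union bound over the $\binom{n_2-1}{k}\leq\exp(k\log(e(n_2-1)/k))$ possible $k$-subsets of candidates, which produces the $-k\log k$ and $\log(\frac{3(n_2-1)p^2}{2})$ contributions, while the per-candidate factors $1-\kappa(\frac{\epsilon}{4})$ and $\tau(\eta)$ arise from bounding the conditional chance that an arbitrary candidate is placed favorably inside $B_{\epsilon/4}(\by_u)$ or $B_\eta(\by_u)$. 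The $(1-r(\frac{\epsilon}{2}))^{-1}$ prefactor comes from conditioning each candidate's pair of items on the discriminating event $|f(\bx_i,\by_v)-f(\bx_j,\by_v)|>\epsilon$, which is implied up to an $r(\frac{\epsilon}{2})$-tail by the Lipschitz bound. Summing the five contributions and taking a union bound over $(u,i,j)$ gives the stated inequality.
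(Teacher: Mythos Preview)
Your proposal follows essentially the same decomposition as the paper: bound $\Pr(D_{u,i,j}^\epsilon)$ via the events $A$ (size of $W_u^{i,j}(\beta)$), $B$ (existence of $k$ close neighbors in $\rho'$), $C$ (concentration of $R_{u,v}$), and the new event $M$ (some close neighbor has a threshold in the interval), then union-bound over triples. Two points, however, need correction.

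First, you start by fixing a triple with $|f(\bx_i,\by_u)-f(\bx_j,\by_u)|>2\epsilon$, but this skips the Copeland step. The paper proves Pairwise-Rank correctness at slack $\epsilon$ (Theorem~\ref{pairwise_rank_levels_thm}), and only then invokes Lemma~\ref{copeland_lemma}, which \emph{doubles} the slack: correctness of $A_{u,i,j}$ for all pairs with gap $>\epsilon$ implies correctness of the Copeland ranking for pairs with gap $>2\epsilon$. If you only establish pairwise correctness at level $2\epsilon$, the Copeland argument would give you ${\dis}_{4\epsilon}=0$, not ${\dis}_{2\epsilon}=0$. Your computation $|f(\bx_i,\by_v)-f(\bx_j,\by_v)|\geq 3\epsilon/2$ and subsequent use of $\gamma(\epsilon/4)$ are inconsistent with each other for this reason; the paper works from gap $>\epsilon$, obtains an interval of length $\geq\epsilon/2$, and that is what justifies $\gamma(\epsilon/4)$.

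Second, your explanation of the $(1-r(\tfrac{\epsilon}{2}))^{-1}$ factor is not right. It does not arise from ``conditioning each candidate's pair of items.'' In the paper it comes from Bayes' rule: to bound $\Pr(B^c\mid A,E)$ and $\Pr(C^c\mid A,B,E)$ one divides by $\Pr(E)$, where $E=\{|f(\bx_i,\by_u)-f(\bx_j,\by_u)|>\epsilon\}$ is about user $u$ only, and $\Pr(E)>1-r(\tfrac{\epsilon}{2})$ follows since $\by_u$ is $r$-discerning. Similarly, the factor $2$ in $2\exp(-\gamma(\tfrac{\epsilon}{4})k)$ is not a tie-break cost; it comes from $\Pr(M^c\mid A,B,C,E)\leq \Pr(M^c\mid A,B,E)/\Pr(C\mid A,B,E)\leq 2\Pr(M^c\mid A,B,E)$ once $\Pr(C^c\mid A,B,E)<\tfrac12$.
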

\begin{cor}
\label{pairwise_rank_cor_discrete}
Assume the setting of Theorem \ref{multi_rank_discrete_thm}.
 If $p \geq \max(n_1^{-\frac{1}{2}+ \alpha}, n_2^{-\frac{1}{2}+ \alpha})$, $k = n_2^{\alpha^\prime}$, and $n_2^{C_1} \geq n_1 \geq C_2 \log(n_2)^{\frac{1}{2 \alpha}}$ for any constant $C_1 > 0$ and some  constant $C_2 > 0$ depending on $\alpha$, then ${\Pr}_{\set{\bx_i}, \set{\by_u}, \Omega}({\dis}_{2\epsilon}(\widehat{\sigma}, H) > 0) \longrightarrow 0$ as $n_2 \longrightarrow \infty$. 
\end{cor}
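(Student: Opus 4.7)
The plan is to take the probability bound from Theorem \ref{multi_rank_discrete_thm} and show that each of its five summands, when multiplied by the common prefactor $n_2 \binom{n_1}{2}$, tends to zero as $n_2 \to \infty$. The key reductions I will use throughout are: the prefactor is bounded by $n_2^{2C_1+1}$ and hence merely polynomial in $n_2$; the quantities $\gamma(\epsilon/4)$, $\tau(\eta)$, $\kappa(\epsilon/4)$, $\delta$, and $1/(1-r(\epsilon/2))$ are all finite positive constants independent of $n_2$ (using Lemma \ref{kappa_tau_lemma} for the first three, the $r$-hypotheses for $\delta$, and the implicit $r(\epsilon/2)<1$ for the last); and $n_1 p^2 \geq n_1^{2\alpha}$ together with $n_2 p^2 \geq n_2^{2\alpha}$, by the hypothesis $p \geq \max(n_1^{-1/2+\alpha}, n_2^{-1/2+\alpha})$.

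Four of the five terms are then straightforward. The first contributes $\exp(-(n_2-1)p^2/12) \leq \exp(-c n_2^{2\alpha})$, which decays superpolynomially in $n_2$ and absorbs the polynomial prefactor. The third contributes $\exp(-\gamma(\epsilon/4)\, k) = \exp(-\gamma(\epsilon/4)\, n_2^{\alpha'})$, which again decays superpolynomially since $\alpha' > 0$ and $\gamma(\epsilon/4) > 0$. The second and fourth both reduce to $\exp(-c n_1 p^2) \leq \exp(-c\, n_1^{2\alpha})$; here I would use the hypothesis $n_1 \geq C_2 \log(n_2)^{1/(2\alpha)}$ to rewrite this as $\exp(-c C_2^{2\alpha} \log n_2) = n_2^{-c C_2^{2\alpha}}$, and choose $C_2$ large enough (depending on $\alpha$ and $C_1$) so that this power of $n_2$ beats the polynomial prefactor (which for these two terms carries an extra $n_2$ or $n_2 p^2$ factor).

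The main obstacle is the fifth term, whose exponent mixes contributions of different scales:
\[
\bigl[1 - \kappa(\tfrac{\epsilon}{4}) + \tau(\eta) + \log\bigl(\tfrac{3(n_2-1)p^2}{2}\bigr)\bigr]\,k \;-\; k \log k \;-\; \tau(\eta)\,\tfrac{(n_2-1)p^2}{2}.
\]
With $k = n_2^{\alpha'}$, the positive contributions are of order $k \log(n_2 p^2) = O(n_2^{\alpha'} \log n_2)$, while the negative contribution $-\tau(\eta)(n_2-1)p^2/2$ is of order at least $n_2 p^2 \geq n_2^{2\alpha}$. The critical numerical inequality is $\alpha' < \alpha < 1/2$, which gives $\alpha' < 2\alpha$, hence $n_2^{\alpha'} \log n_2 = o(n_2^{2\alpha})$; so the negative term eventually dominates and the full exponent is at most $-c\, n_2^{2\alpha}$ for some $c > 0$ and all sufficiently large $n_2$, which once again beats the polynomial prefactor. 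Summing the five bounds yields $\Pr(\dis_{2\epsilon}(\widehat{\sigma}, H) > 0) \to 0$ as $n_2 \to \infty$, which is the claim.
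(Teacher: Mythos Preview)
Your proof is correct and takes the same approach as the paper. The paper's own proof is terser: it observes that terms 1, 2, and 4 already appeared in Corollary~\ref{pairwise_rank_cor} (so their vanishing is inherited from that proof), and then only treats the fifth term explicitly, using exactly your observation that $\alpha' < 2\alpha$ makes $k\log(n_2 p^2) = O(n_2^{\alpha'}\log n_2) = o(n_2^{2\alpha})$ so that $-\tau(\eta)(n_2-1)p^2/2$ dominates. Your write-up is simply more self-contained, handling all five terms (including the $\gamma(\epsilon/4)$ term, which the paper leaves implicit) rather than cross-referencing the continuous case.
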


The bulk of the analysis for the discrete ratings setting is similar to the continuous rating setting and, once again, mainly deals with the analysis of Pairwise-Rank for a user $u$ and items $i$ and $j$. Since the ratings are discrete, although users that are sufficiently close to user $u$ in the feature space agree about the ordering of items $i$ and $j$, we need to show that at least one of these neighbors does not give the same rating to items $i$ and $j$. To this end, we show that eventually $k$ nearby points are sampled: $\by_{v_1}, \ldots, \by_{v_k} \in B_\eta(\by_u)$. Conditional on $|f(\bx_i,\by_u) - f(\bx_j,\by_u)| > \epsilon$, using the Lipschitzness of $f$, $(f(\bx_i, \by_{v_q}), f(\bx_j,\by_{v_q}))$ has length at least $\frac{\epsilon}{2}$. Finally, since $\sP_{\sG}$ is diverse, a concentration argument wrt $\bg_{v_1}, \ldots, \bg_{v_k}$ implies that w.h.p. there exists $q \in [k]$ and $l \in [L-1]$ such that $\ba_{v_q,l} \in (f(\bx_i, \by_{v_q}), f(\bx_j,\by_{v_q}))$. Thus, user $v_q$ provides distinct ratings for items $i$ and $j$. 

\section{A Necessary and Sufficient Condition for ${\dis}_\epsilon(\sigma, H) = 0$}
\label{necessary_sufficient_section}

In this section, we characterize the class of optimal collections of rankings, i.e., $\sigma \in \sS^{n_1 \times n_2}$ such that ${\dis}_\epsilon(\sigma, H) = 0$. We show roughly that a collection of rankings $\sigma$ is optimal in the sense that ${\dis}_\epsilon(\sigma, H) = 0$ if and only if $\sigma$ agrees with the observed data and $\sigma$ gives the same ranking to users that are close to each other in the latent space $\sY$.  To study this question, we consider the regime where the number of items $n_1$ is fixed, the probability of an entry being revealed $p$ is fixed, and the number of users $n_2$ goes to infinity. 

Consider the following notion, which is the main ingredient in our necessary and sufficient condition:


\begin{defn}
Let $\epsilon > 0$ and $T \subset [n_1] \times [n_1] \times [n_2]$. $\sigma \in \sS^{n_1 \times n_2}$ is an $\epsilon$-\local \, \multirank \, over $T$ if $ \forall i \neq j \in [n_1], u \neq v \in [n_2]$ such that $(i,j,u), (i,j,v) \in T$ and $d_{\sY}(y_u, y_v) \leq \epsilon$, it holds that $\sigma(i,u) < \sigma(j,u) \Longleftrightarrow \sigma(i,v) < \sigma(j,v)$. If $\sigma$ is an $\epsilon$-\local \, \multirank \, over $[n_1] \times [n_1] \times [n_2]$, then we simply say that $\sigma$ is an $\epsilon$-\local \, \multirank.
\end{defn}
In words, a \multirank \, is $\epsilon$-\local \, if it gives the same ranking to users that are within $\epsilon$ of each other in the latent space.

We introduce the following objective function:
\begin{align*}
\edis(\sigma, H) \coloneq \sum_{u=1}^{n_2} \sum_{i < j :(i,u), (j,u) \in \Omega}  \bm{1}\{& (h_u(x_i, y_u) - h_u(x_j, y_u) ) (\sigma(i,u) - \sigma(j,u)) < 0\}. 
\end{align*}
Once again, we analyze separately the continuous rating and discrete rating settings. With respect to the continuous rating setting, Theorems \ref{sufficient_condition_thm_continuous} and \ref{necessary_condition_thm} \emph{roughly} imply that with probability tending to $1$ as $n_2 \longrightarrow \infty$, a \multirank \, $\sigma \in \sS^{n_1 \times n_2}$ that minimizes $\edis(\cdot, H)$ is $\frac{\epsilon}{2}$-\local \, if and only if ${\dis}_\epsilon(\sigma, H) = 0$. A similar statement holds for the discrete rating setting.

To begin, we present our sufficient conditions.

\begin{thm}
\label{sufficient_condition_thm_continuous}
Assume the continuous rating setting. Let $\epsilon > 0$ and suppose that for all $u \in [n_2]$, $g_u(\cdot)$ is strictly increasing. With probability increasing to $1$ as $n_2 \longrightarrow \infty$, if $\sigma \in \sS^{n_1 \times n_2}$ is $\frac{\epsilon}{2}$-\local \, and minimizes $\edis(\cdot, H)$, then ${\dis}_\epsilon(\sigma, H) = 0$.
\end{thm}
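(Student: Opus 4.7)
\textbf{Proof plan for Theorem~\ref{sufficient_condition_thm_continuous}.} My plan is to reduce the hypothesis ``$\sigma$ minimizes $\edis(\cdot,H)$'' to the cleaner statement $\edis(\sigma,H) = 0$, and then run a short deterministic propagation argument combining $\tfrac{\epsilon}{2}$-consistency, the Lipschitzness of $f$, and the strict monotonicity of each $g_u$. The reduction is easy: $\edis$ is a sum of indicators, hence nonnegative, and for every user $v$ we may choose $\sigma(\cdot,v)$ to be any permutation that sorts the observed items by $h_v(\bx_l,\by_v)$ with arbitrary tie-breaking and arbitrary placement of unobserved items; that choice yields $\edis = 0$. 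So the minimum is $0$, and ``$\sigma$ minimizes $\edis$'' is the same as saying that, for every $v$ and every observed pair $(i,v),(j,v)\in\bOmega$ with $h_v(\bx_i,\by_v)\neq h_v(\bx_j,\by_v)$, the indices $\sigma(i,v)$ and $\sigma(j,v)$ are ordered consistently with the observed ratings.

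The probabilistic ingredient is a single ``witness'' event. For each triple $(u,i,j)$ with $u\in[n_2]$ and $i,j\in[n_1]$, $i<j$, let $G_{u,i,j}$ be the event that some $v\neq u$ satisfies both $d_\sY(\by_v,\by_u)\leq \tfrac{\epsilon}{2}$ and $(i,v),(j,v)\in\bOmega$. Conditional on $\by_u$, the events ``$v$ qualifies'' are independent across $v\neq u$, because the $\by_v$'s are i.i.d.\ and $\bOmega$ is independent of the latent features; each such $v$ qualifies with probability at least $\tau(\tfrac{\epsilon}{2})\, p^2$, a fixed positive constant by Lemma~\ref{kappa_tau_lemma} and the fixedness of $p$. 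Thus $\Pr(\neg G_{u,i,j}) \leq \exp\!\bigl(-(n_2-1)\,\tau(\tfrac{\epsilon}{2})\,p^2\bigr)$, and a union bound over the at most $n_2\binom{n_1}{2}$ triples (recall $n_1$ is fixed in this regime) makes $\bigcap_{u,i,j} G_{u,i,j}$ hold with probability tending to $1$ as $n_2\to\infty$.

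Now work on the intersection of these good events and assume $\sigma$ satisfies the hypotheses of the theorem. Fix any triple $(u,i,j)$ with $|f(\bx_i,\by_u) - f(\bx_j,\by_u)| > \epsilon$; WLOG assume $f(\bx_i,\by_u) > f(\bx_j,\by_u) + \epsilon$, so that, by strict monotonicity of $g_u$, $h_u(\bx_i,\by_u) > h_u(\bx_j,\by_u)$. Pick a witness $v$ provided by $G_{u,i,j}$. Lipschitzness of $f$ gives $|f(\bx_l,\by_v) - f(\bx_l,\by_u)| \leq d_\sY(\by_v,\by_u) \leq \tfrac{\epsilon}{2}$ for $l\in\{i,j\}$, so $f(\bx_i,\by_v) > f(\bx_j,\by_v)$; strict monotonicity of $g_v$ then gives $h_v(\bx_i,\by_v) > h_v(\bx_j,\by_v)$. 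Since $(i,v),(j,v)\in\bOmega$ and $\edis(\sigma,H) = 0$, $\sigma(i,v)$ and $\sigma(j,v)$ are ordered consistently with this strict inequality. Finally, $\tfrac{\epsilon}{2}$-consistency of $\sigma$ forces $\sigma(i,u)$ and $\sigma(j,u)$ to have the same relative order, which is precisely the order of $h_u(\bx_i,\by_u)$ versus $h_u(\bx_j,\by_u)$; hence no disagreement is contributed by this triple. Summing over triples yields ${\dis}_\epsilon(\sigma,H) = 0$ on the good event, completing the proof.

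The main obstacle is essentially the very first step: one has to notice that minimizing $\edis$ is the same as making $\edis=0$, which is what allows the observed ratings to constrain $\sigma$ at nearby users. Everything afterwards is a short union bound plus a few-line deterministic transport argument that uses Lipschitzness, strict monotonicity of the $g_u$'s, and the defining property of $\tfrac{\epsilon}{2}$-consistency; no concentration inequality beyond $\tau(\cdot)>0$ and independence of the sampling scheme is needed, which is why the regime $n_1,p$ fixed and $n_2\to\infty$ suffices.
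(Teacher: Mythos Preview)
Your argument is correct and follows the same deterministic core as the paper (Lipschitz transport of the $f$-gap to a nearby witness $v$, strict monotonicity of $g_v$ to get a strict $h_v$-inequality on an observed pair, then $\tfrac{\epsilon}{2}$-consistency to pull the order back to $u$). The paper also implicitly uses your opening observation that the minimum of $\edis$ is $0$; you make this explicit, which is cleaner.

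The one genuine difference is in the high-probability event. The paper invokes compactness of $\sY$ directly: it fixes a finite subcover $\{C_1,\dots,C_n\}$ by balls of diameter $\tfrac{\epsilon}{2}$ and asks that each cell $C_l$ contain some user who has observed every pair $(i,j)$. This gives a union bound over a \emph{fixed} number $n\binom{n_1}{2}$ of events, so the convergence $\Pr(Q_{n_2})\to 1$ is immediate without any rate computation. Your version instead asks, for every user $u$ and every pair $(i,j)$, that some other user land in $B_{\epsilon/2}(\by_u)$ with both observations; this is a union over $n_2\binom{n_1}{2}$ events, and you need the exponential tail $\exp(-(n_2-1)\tau(\tfrac{\epsilon}{2})p^2)$ to beat the linear growth in $n_2$. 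Both work; the paper's subcover trick avoids tracking rates at the cost of an explicit compactness step, while your route is more direct but still leans on compactness through Lemma~\ref{kappa_tau_lemma} to ensure $\tau(\tfrac{\epsilon}{2})>0$.
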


\begin{thm}
\label{sufficient_condition_thm_discrete}
Assume the discrete rating setting and that $\sP_{\sG}$ is diverse. Let $\epsilon > 0$. With probability increasing to $1$ as $n_2 \longrightarrow \infty$, if $\sigma \in \sS^{n_1 \times n_2}$ is $\frac{\epsilon}{8}$-\local \, and minimizes $\edis(\cdot, H)$, then ${\dis}_\epsilon(\sigma, H) = 0$.
\end{thm}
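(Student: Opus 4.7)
The plan is to reduce the statement to certifying, for every triple $(i,j,u)$ that could contribute to ${\dis}_\epsilon(\sigma, H)$, the existence of a nearby user $v$ whose observed and distinct ratings on items $i, j$ reveal the correct ordering for $u$.

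First I would observe that the global minimum of $\edis(\cdot, H)$ is $0$: for each user $u$, any permutation that sorts items by $h_u(x_i, y_u)$ makes every term of $\edis$'s sum over column $u$ vanish (equal ratings contribute nothing because the indicator is strict). Hence the assumption that $\sigma$ minimizes $\edis(\cdot, H)$ forces $\edis(\sigma, H) = 0$, so $\sigma$ agrees with every observed pair that has distinct ratings.

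Next, fix a triple $(i,j,u)$ with $|f(x_i,y_u) - f(x_j,y_u)| > \epsilon$ and $h_u(x_i,y_u) \neq h_u(x_j,y_u)$ (only such triples can contribute to ${\dis}_\epsilon$); WLOG $h_u(x_i,y_u) > h_u(x_j,y_u)$, whence the monotonicity of $g_u$ gives $f(x_i,y_u) - f(x_j,y_u) > \epsilon$. Define $E_{iju}$ to be the event that some $v \in [n_2] \setminus \{u\}$ satisfies (A) $d_\sY(y_u, y_v) \leq \epsilon/8$; (B) $(i,v), (j,v) \in \bOmega$; and (C) some threshold of $g_v$ lies in the open interval $(f(x_j,y_v), f(x_i,y_v))$. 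On $E_{iju}$, Lipschitzness combined with (A) gives $f(x_i,y_v) - f(x_j,y_v) > 3\epsilon/4$, and then (C) together with monotonicity of $g_v$ gives $h_v(x_i,y_v) > h_v(x_j,y_v)$. From (B) and $\edis(\sigma, H) = 0$, $\sigma(i,v) < \sigma(j,v)$; then (A) together with $\sigma$ being $\frac{\epsilon}{8}$-\local\ forces $\sigma(i,u) < \sigma(j,u)$, so $(i,j,u)$ contributes zero to ${\dis}_\epsilon$.

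It remains to bound $\Pr(\bigcup_{(i,j,u)} E_{iju}^c)$. Condition on $(\by_u, \bx_i, \bx_j)$; by independence of $\{\by_v\}$, $\{\bg_v\}$, and entries of $\bOmega$ across $v$, the per-$v$ events (A)(B)(C) are independent across $v \neq u$. For a single $v$: (A) has probability at least $\tau(\epsilon/8) > 0$; (B) has probability $p^2$ independently; given (A), the interval $(f(x_j,y_v), f(x_i,y_v)) \subset [-N,N]$ has length $> 3\epsilon/4$, its midpoint $z \in [-N, N]$, and the closed set $\{a \in \bbR : d_\bbR(z, a) \leq 3\epsilon/8\}$ lies strictly inside the interval, so by the definition of $\gamma$, $\Pr(\text{(C)} \mid \text{(A)}) \geq \gamma(3\epsilon/8)$ independently of (B). Hence each $v$ succeeds with probability at least $c := \tau(\epsilon/8)\gamma(3\epsilon/8)p^2 > 0$, giving $\Pr(E_{iju}^c) \leq (1-c)^{n_2 - 1}$; a union bound over at most $n_2 \binom{n_1}{2}$ triples yields $n_2 \binom{n_1}{2} (1-c)^{n_2 - 1} \to 0$ as $n_2 \to \infty$ with $n_1, p$ fixed. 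The main obstacle is the bookkeeping around (C): diversity of $\sP_\sG$ only gives $\gamma > 0$ through closed balls via Lemma \ref{kappa_tau_lemma}, so one must exploit the strict excess of the interval length over $3\epsilon/4$ (which Lipschitzness affords whenever $d_\sY(y_u, y_v) \leq \epsilon/8$) to ensure the closed $3\epsilon/8$-ball about the midpoint sits strictly inside the desired open interval. This is also the only step that dictates the factor $1/8$ in the statement rather than the $1/2$ available in the continuous-rating analogue.
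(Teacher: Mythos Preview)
Your argument is correct in substance, but note a sign slip: under the paper's convention the indicator in $\edis$ fires when $(h_v(x_i,y_v)-h_v(x_j,y_v))(\sigma(i,v)-\sigma(j,v))<0$, so $h_v(x_i,y_v)>h_v(x_j,y_v)$ together with $\edis(\sigma,H)=0$ forces $\sigma(i,v)>\sigma(j,v)$, hence $\sigma(i,u)>\sigma(j,u)$, which is what makes the $(i,j,u)$ term of ${\dis}_\epsilon$ vanish. With the inequalities reversed your conclusion would be false.

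Your route is genuinely different from the paper's. The paper first conditions on $\{x_i\}$ and uses compactness of $\sY$ to extract a \emph{fixed} finite cover $\{C_1,\dots,C_n\}$ by balls of diameter $\epsilon/8$; for each ball $C_l$ it records the pairs $(i,j)$ with $f(x_i,z_l)>f(x_j,z_l)+\epsilon/2$ at a chosen center $z_l$, and defines the good event $Q_{n_2}$ that for every such $(l,i,j)$ some user in $C_l$ observes both items with a separating threshold. Because the number of balls $n$ is fixed (independent of $n_2$), $\Pr(Q_{n_2})\to1$ follows without tracking rates, and the contradiction argument then runs through the center $z_l$ as an intermediary. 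You instead bypass compactness entirely: you treat each triple $(i,j,u)$ directly, lower-bound the per-$v$ success probability by $c=\tau(\epsilon/8)\gamma(3\epsilon/8)p^2$ uniformly, and let the exponential decay $(1-c)^{n_2-1}$ absorb the $n_2\binom{n_1}{2}$ union bound. Your version is more elementary and gives an explicit rate; the paper's cover trades that for a structurally fixed high-probability event whose description does not grow with $n_2$. Both proofs rest on the same mechanism---a nearby user with observed, threshold-separated ratings on $(i,j)$ certifies the correct order via $\frac{\epsilon}{8}$-consistency---and both make explicit (you more so than the paper) that minimizing $\edis$ forces $\edis(\sigma,H)=0$.
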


The proofs for the continuous and discrete cases are similar. We briefly sketch the argument for the continuous case. Since $\sY$ is compact, there is a finite subcover of $\sY$ with open balls with diameter at most $\frac{\epsilon}{2}$. As $n_2 \longrightarrow \infty	$, with probability increasing to $1$, for every open ball $\sO$ belonging to the finite subcover and for every pair of distinct items $i,j \in [n_1]$, there is some user $u \in \sO$ that has rated $i$ and $j$. Then, on this event, it can be shown that if ${\dis}_\epsilon(\sigma, H) > 0$, then $\edis(\sigma, H) > 0$. Thus, using the contrapositive, the result follows. 

Theorem \ref{necessary_condition_thm} gives our necessary condition. 

\begin{thm}
\label{necessary_condition_thm}
Let $\epsilon > 0$ and $\sigma \in \sS^{n_1 \times n_2}$ such that ${\dis}_\epsilon(\sigma, H) = 0$. Let $T = \set{(i,j,u)\in [n_1] \times [n_1] \times [n_2] : |f(x_i, y_u) - f(x_j, y_u)| > \epsilon, \, h(x_i,y_u) \neq h(x_j,y_u)}$. Then, $\sigma$ is an $\epsilon$-\local \, \multirank \, over $T$. 
\end{thm}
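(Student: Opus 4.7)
The argument is deterministic; no probability is used, only the $1$-Lipschitz property of $f$, the monotonicity of each $g_u$, and the fact that each $\sigma(\cdot,u)$ is a permutation. The plan has two parts: first, unfold ${\dis}_\epsilon(\sigma,H)=0$ into a per-triple sign condition matching $\sigma$ to $f$ on $T$; second, propagate that sign across a $d_\sY$-close pair of users using Lipschitzness.

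\textbf{Step 1 (per-triple sign agreement on $T$).} Since ${\dis}_\epsilon(\sigma,H)$ is a sum of nonnegative indicators, its vanishing forces every summand to be zero. Thus for every $u$ and every $i<j$ with $|f(x_i,y_u)-f(x_j,y_u)|>\epsilon$, $(h_u(x_i,y_u)-h_u(x_j,y_u))(\sigma(i,u)-\sigma(j,u))\geq 0$, and since this inequality is symmetric in swapping $i,j$, it holds for all distinct $i,j$. Now fix $(i,j,u)\in T$: by the definition of $T$ both factors are nonzero ($h_u(x_i,y_u)\neq h_u(x_j,y_u)$ and $\sigma(i,u)\neq\sigma(j,u)$), so the inequality upgrades to $\sign(\sigma(i,u)-\sigma(j,u)) = \sign(h_u(x_i,y_u)-h_u(x_j,y_u))$. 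Because $h_u = g_u\circ f$ with $g_u$ nondecreasing, a nonzero difference $g_u(f(x_i,y_u))-g_u(f(x_j,y_u))$ must carry the same sign as $f(x_i,y_u)-f(x_j,y_u)$ (otherwise monotonicity would force the opposite sign, a contradiction). Hence $\sign(\sigma(i,u)-\sigma(j,u)) = \sign(f(x_i,y_u)-f(x_j,y_u))$ for every $(i,j,u)\in T$.

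\textbf{Step 2 (Lipschitz propagation to a nearby user).} Take $(i,j,u),(i,j,v)\in T$ with $d_\sY(y_u,y_v)\leq\epsilon$, and without loss of generality suppose $f(x_i,y_u)-f(x_j,y_u)>\epsilon$. Applying the $1$-Lipschitz property with $d_{\sX\times\sY}((x_i,y_u),(x_i,y_v))\leq\epsilon$ and likewise for $x_j$,
\[
f(x_i,y_v)-f(x_j,y_v) \;\geq\; (f(x_i,y_u)-f(x_j,y_u)) - 2\epsilon \;>\; -\epsilon.
\]
Since $(i,j,v)\in T$ also gives $|f(x_i,y_v)-f(x_j,y_v)|>\epsilon$, the negative branch is excluded, so $f(x_i,y_v)-f(x_j,y_v)>\epsilon$. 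Applying Step 1 at both $u$ and $v$ yields $\sign(\sigma(i,u)-\sigma(j,u)) = \sign(\sigma(i,v)-\sigma(j,v))$, i.e.\ $\sigma(i,u)<\sigma(j,u)\Longleftrightarrow\sigma(i,v)<\sigma(j,v)$, which is exactly the $\epsilon$-\local\ condition over $T$.

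\textbf{Main obstacle.} The entire proof is bookkeeping except for the Lipschitz step, which by itself yields only $f(x_i,y_v)-f(x_j,y_v)\geq -\epsilon$—too weak to pin down the sign. The hypothesis $(i,j,v)\in T$, forcing $|f(x_i,y_v)-f(x_j,y_v)|>\epsilon$, is precisely what rules out the bad sub-case, and is also the reason the theorem restricts $\epsilon$-\local\ to $T$ rather than asserting it on all triples.
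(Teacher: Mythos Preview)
Your proof is correct and uses essentially the same ingredients as the paper's: unfolding ${\dis}_\epsilon(\sigma,H)=0$ into a sign-agreement between $\sigma$ and $f$ on $T$ via monotonicity of $g_u$, and then a $1$-Lipschitz estimate to transfer the sign of $f(x_i,y_\cdot)-f(x_j,y_\cdot)$ from $u$ to $v$. The only cosmetic difference is that the paper argues by contradiction (assuming opposite $\sigma$-signs at $u$ and $v$, deducing opposite $f$-signs, and then deriving $2\epsilon < 2 d_\sY(y_u,y_v)\leq 2\epsilon$), whereas you argue directly by showing $f(x_i,y_v)-f(x_j,y_v)>-\epsilon$ and then invoking $(i,j,v)\in T$ to exclude the negative branch; the underlying inequality is the same.
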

Theorem \ref{necessary_condition_thm}  shows that in our general setting, learning the correct collection of rankings requires giving the same ranking to nearby users. In particular, this provides an intuition on the kind of embedding that matrix factorization learns. Theorem \ref{necessary_condition_thm} only applies to items $i,j$ and user $u$ if there is a large enough difference in the underlying values given by $f$. The proof follows by the Lipschitzness of $f$ and algebra.

\section{Experiments}

\begin{table*}[ht]
	\centering
\scalebox{0.75}{
	\begin{tabular}{llllll}
\hline
 Dataset & Method   & Kendall Tau     & Spearman Rho    & NDCG@5          & Precision@5     \\
 \hline
            &  MRW      & 0.3156 (0.0021) & 0.4012 (0.0029) & 0.7104 (0.0010) & 0.4492 (0.0018) \\
            &  MR       & 0.3105 (0.0021) & 0.3963 (0.0030) & 0.7063 (0.0038) & 0.4457 (0.0044) \\
            &  LA       & 0.3271 (0.0018) & 0.4153 (0.0022) & 0.7136 (0.0026) & 0.4570 (0.0041) \\
            &  AltSVM   & 0.3271 (0.0007) & 0.4173 (0.0008) & 0.7022 (0.0015) & 0.4365 (0.0036) \\
Netflix     &  RMC      & 0.3288 (0.0017) & 0.4178 (0.0020) & 0.7204 (0.0006) & 0.4581 (0.0048) \\
\hline
            &   MRW      & 0.3933 (0.0010) & 0.5009 (0.0013) & 0.7769 (0.0066) & 0.6083 (0.0096) \\
            &   MR       & 0.3924 (0.0011) & 0.4999 (0.0013) & 0.7735 (0.0061) & 0.6021 (0.0063) \\
            &   LA       & 0.3993 (0.0009) & 0.5075 (0.0012) & 0.7767 (0.0058) & 0.6071 (0.0080) \\
            &   AltSVM   & 0.4099 (0.0008) & 0.5219 (0.0010) & 0.8002 (0.0042) & 0.6417 (0.0067) \\
Movielens   &   RMC      & 0.4041 (0.0004) & 0.5139 (0.0006) & 0.8068 (0.0030) & 0.6485 (0.0029) \\
\hline
\end{tabular}

}
	\caption{Netflix and MovieLens Results. On the Netflix dataset, MR usually used $\beta = 5$ and $k \in [13,19]$. MRW usually used $\beta = 9$ and $k \in [16,23]$. On the MovieLens dataset, MR usually used $\beta = 10$ and $k \in [7,13]$. MRW usually used $\beta = 12$ and $k \in [13,17]$.}
	\label{netflix_movielens_table}
\end{table*}

\begin{table*}[ht]
	\centering
\scalebox{0.75}{
	\begin{tabular}{llllll}
\hline
 Dataset & Method   & Kendall Tau     & Spearman Rho    & NDCG@5          & Precision@5     \\
\hline
        & MRW      & 0.2736 (0.0017) & 0.3327 (0.0021) & 0.8063 (0.0031) & 0.7849 (0.0034) \\
        & MR       & 0.2677 (0.0019) & 0.3255 (0.0023) & 0.7980 (0.0034) & 0.7764 (0.0008) \\
        & LA       & 0.2786 (0.0022) & 0.3387 (0.0027) & 0.8024 (0.0024) & 0.7843 (0.0018) \\
        & AltSVM   & 0.2743 (0.0015) & 0.3335 (0.0018) & 0.7949 (0.0023) & 0.7768 (0.0023) \\
Netflix & RMC      & 0.2856 (0.0017) & 0.3473 (0.0021) & 0.8052 (0.0038) & 0.7861 (0.0032) \\
\hline
            & MRW      & 0.3347 (0.0015) & 0.4090 (0.0018) & 0.8903 (0.0059) & 0.8810 (0.0059) \\
            & MR       & 0.3343 (0.0017) & 0.4085 (0.0021) & 0.8879 (0.0052) & 0.8792 (0.0061) \\
            & LA       & 0.3395 (0.0017) & 0.4149 (0.0020) & 0.8908 (0.0085) & 0.8845 (0.0078) \\
            & AltSVM   & 0.3451 (0.0016) & 0.4217 (0.0020) & 0.9070 (0.0056) & 0.8982 (0.0056) \\
Movielens   & RMC      & 0.3504 (0.0014) & 0.4281 (0.0017) & 0.9140 (0.0026) & 0.9051 (0.0032) \\
\hline
\end{tabular}

}
	\caption{Quantized Netflix and MovieLens Results. On the Netflix dataset, MR usually used $\beta = 5$ and $k = 22$. MRW usually used $\beta  \in [9,10]$ and $k \in [27,31]$. On the MovieLens dataset, MR usually used $\beta \in [10,13]$ and $k \in [10,19]$. MRW usually used $\beta \in [8,11]$ and $k \in [16,23]$.}
	\label{quantized_netflix_movielens_table}
\end{table*}

\begin{table*}[!htbp]
	\centering
\scalebox{0.75}{
	\begin{tabular}{llllll}
\hline
Dataset & Method   & Kendall Tau     & Spearman Rho    & NDCG@5          & Precision@5     \\
\hline
Netflix & LA       & 0.1798 (0.0034) & 0.2300 (0.0040) & 0.5962 (0.0022) & 0.3322 (0.0053) \\
\hline
MovieLens & LA       & 0.2404 (0.0098) & 0.3092 (0.0123) & 0.6543 (0.0138) & 0.4435 (0.0163) \\
\hline
\end{tabular}

}
	\caption{Monotonically Transformed Netflix and MovieLens Results. We only display the results for LA since the other methods are invariant to monotonic transformations of the columns.}
	\label{monotonic_netflix_movielens_table}
\end{table*}

In this section, we examine the empirical performance of Multi-Rank. It is well-known that matrix factorization methods tend to outperform neighborhood-based methods. Nevertheless, neighborhood-based methods remain popular in situations where practitioners want an easy-to-implement method, to avoid expensive model-building, and to be able to interpret predictions easily \citep{ning2011}. Furthermore, it has been observed that for the task of matrix completion, (i) matrix factorization methods and neighborhood-based methods have complementary strengths and weaknesses and (ii) performance gains can be achieved by merging these methods into a single algorithm \citep{bell2007, koren2008}. Yet, it is non-trivial to generalize ideas for combining matrix factorization and neighborhood-based methods in the matrix completion setting to the preference completion setting. In light of this discussion, the purpose of our experiments is not to demonstrate the superiority of our method over matrix factorization methods, but to compare the performance of our algorithm with the state-of-the-art.

We compared the performance of our algorithm (MR) and a weighted version of our algorithm (MRW) where votes are weighted by $R_{u,v}$ against Alternating SVM (AltSVM) \citep{park2015}, Retargeted Matrix Completion (RMC) \citep{gunasekar2016}, and the proposed algorithm in \citep{lee2016} (LA). We chose AltSVM and RMC because they are state-of-the-art matrix factorization methods for preference completion and we chose LA because its theoretical guarantees are similar to our guarantees for Multi-Rank and it was shown to be superior to item-based and user-based neighborhood methods \citep{lee2016}. We used grid search to optimize the hyperparameters for each of the algorithms using a validation set. 

We use the ranking metrics Kendall Tau, Spearman Rho, NDCG@5, and Precision@5. Kendall Tau and Spearman Rho measure how correlated the predicted ranking is with the true ranking. The other metrics measure the quality of the predicted ranking at the top of the list. For Precision@5, we deem an item relevant if it has a score of $5$. For all of these metrics, higher scores are better. See \citet{liu2009} for a more detailed discussion of these metrics. The numbers in parentheses are standard deviations.

We use the Netflix and MovieLens 1M datasets. We pre-process the data in a similar way to \citet{liu2008}. For the Netflix dataset, we take the $2000$ most popular movies and randomly selected $4000$ users that had rated at least $100$ of these movies. For both datasets, we randomly subsample the ratings $5$ times in the following way: we randomly shuffled the (user-id, movie, rating) triples and split 40\% into a training set, 15\% into a validation set, and 45\% into a test set. For the Netflix dataset, we drop users if they have fewer than 50 ratings in the training set and fewer than 10 ratings in either the validation set or the test set. For the MovieLens dataset, we drop users if they have fewer than 100 ratings in the training set and fewer than 50 ratings in either the validation set or the test set. Table \ref{netflix_movielens_table} shows that although MRW does not have the best performance, it outperforms AltSVM on NDCG@5 and Precision@5 on the Netflix dataset and LA on NDCG@5 and Precision@5 on the MovieLens dataset. 

In addition, we quantized the scores of both datasets to $1$ if the true rating is less than or equal to $3$ and to $5$ otherwise (see Table \ref{quantized_netflix_movielens_table}). Here, MR and MRW have the same amount of information as LA and RMC. On the Netflix dataset, MRW performed the best on the NDCG@5 measure. 

Finally, we considered a setting where a company performs $A/B$ testing on various rating scales (e.g., 1-5, 1-10, 1-50, 1-100) and wishes to use all of the collected data to predict preferences. To model this situation, for each user, we randomly sampled a number $a \in \set{1,2,10,20}$ and $b \in [a-1] \cup \set{0}$, and transformed the rating $r \mapsto a \cdot r - b$. Table \ref{monotonic_netflix_movielens_table} shows that on the monotonically transformed versions of the Netflix and MovieLens datasets, LA performs dramatically worse. This is unsurprising since it is well-known that the performance of rating-oriented neighborhood-based methods like LA suffers when there is rating scale variance \citep{ning2011}.

%

\pagebreak

\appendix

\section{Outline}

In Section \ref{sup_analysis_section}, we give the counterexample establishing Proposition \ref{counterexample_prop} and give theorem proofs for the continuous rating setting. In Section \ref{sup_discrete_section}, we give theorem proofs for the discrete rating setting. In Section \ref{technical_lemmas_section}, we prove the lemmas used in our theorem proofs, beginning with lemmas common to both the continuous rating setting and discrete rating setting and, then, presenting the lemmas on the continuous rating setting and discrete rating setting, separately. In Section \ref{necessary_sufficient_section_app}, we provide the proofs of the necessary and sufficient conditions. 
In Section \ref{example_sec}, we prove Proposition \ref{example_prop} and that the models $f(x,y) = x^t y$ and $f(x,y) = \norm{x-y}_2$ are equivalent by adding a dimension. Finally, in Section \ref{useful_bounds_section}, we give some bounds that we use in the proofs for reference.

Unless otherwise indicated, all probability statements are with respect to $\set{\bx_i}_{i \in [n_1]} \cup \set{\by_u}_{u \in [n_2]} \cup \bOmega$ in the continuous ratings setting and with respect to $\set{\bx_i}_{i \in [n_1]} \cup \set{\by_u}_{u \in [n_2]} \cup \set{\ba_{u,l}}_{u \in [n_2], l \in [L-1]} \cup \bOmega$ in the discrete ratings setting.

\section{Proofs for Section \ref{continuous_rating_main_section} }
\label{sup_analysis_section}

To begin, we introduce some additional notation. When $\by_u$ and $\by_v$ are random, we  write $R_{\bu, \bv}$ instead of $R_{u,v}$ for emphasis. 

\begin{proof}[Proof of Proposition \ref{counterexample_prop}]
Consider the functions
\begin{align*}
   f(z) = \left\{
     \begin{array}{lr}
        \epsilon z & : z \in [0, \frac{1}{2}]\\
       \epsilon (1 - z) & : z \in (\frac{1}{2}, 1]
     \end{array}
   \right.
\end{align*}
and 
\begin{align*}
   g(z) = \left\{
     \begin{array}{lr}
       -  \epsilon z & : z \in [0, \frac{1}{2}]\\
       \epsilon (z - 1) & : z \in (\frac{1}{2}, 1]
     \end{array}
   \right.
\end{align*}
\end{proof}

Next, we analyze Pairwise-Rank (PR), bounding the probability that Pairwise-Rank cannot distinguish between items $i$ and $j$ when $|f(x_i, y_u) - f(x_j,y_u)| > \epsilon$, i.e.,  the event
\begin{align*}
D^\epsilon_{u,i,j} & \coloneq \set{f(\bx_i, \by_u) + \epsilon < f(\bx_j, \by_u)} \cap \set{\text{PR}(u,i,j,\beta, k) = 1)}) \\
& \cup  \set{f(\bx_i, \by_u)  > f(\bx_j, \by_u)+ \epsilon} \cap \set{\text{PR}(u,i,j,\beta, k) = 0}).
\end{align*}

\begin{thm}
\label{pairwise_rank_thm}
 Suppose $\forall u \in [n_2], \,  g_u(z)$ is strictly increasing. Let $\epsilon, \delta > 0$ and $\eta \in (0, \frac{\epsilon}{2})$. Suppose that almost every $y \in \sY$ is $(\frac{\epsilon}{2}, \delta)$-\discern . Let $r$ be a positive nondecreasing function such that $r(\frac{\epsilon}{2}) \geq \delta$ and $r(\eta) < \frac{\delta}{2}$. Suppose that almost every $y \in \sY$ is $r$-\diverse . Let $0 < \alpha < \frac{1}{2}$. If $p \geq \max(n_1^{-\frac{1}{2}+ \alpha}, n_2^{-\frac{1}{2}+ \alpha})$, $n_1 p^2 \geq 16$, and $n_2$ is sufficiently large, for all $u \in [n_2]$ and $i \neq j \in [n_1]$, the output of Pairwise-Rank with $k =1$ and $\beta = \frac{p^2 n_1}{2}$ is such that
\begin{align*}
{\Pr}_{\set{\bx_i}, \set{\by_u}, \bOmega} (D^\epsilon_{u,i,j}) & \leq 2 \exp(- \frac{(n_2 - 1) p^2}{12}) + (n_2 - 1)\exp(-\frac{n_1 p^2 }{8}) \\
+&  \exp(- (\frac{(n_2 - 1) p^2}{2})\tau(\eta))  + 3(n_2 - 1) p^2 \exp(- \frac{ \delta^2 n_1 p^2}{20}).
\end{align*}
\end{thm}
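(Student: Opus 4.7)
My plan is to decompose the failure event $D_{u,i,j}^\epsilon$ into four bad events whose probabilities match the four terms of the stated bound, and then show that on the complementary good event Pairwise-Rank with $k=1$ selects a single user $v$ whose feature lies in $B_{\epsilon/2}(\by_u)$. At that point, Lipschitzness of $f$ together with strict monotonicity of $g_v$ will force the chosen vote to be correct whenever $|f(\bx_i,\by_u)-f(\bx_j,\by_u)|>\epsilon$. The four bad events will control, respectively, (i) the size of the candidate pool $\{v:(i,v),(j,v)\in\bOmega\}$, (ii) $|N(u,v)|$ being too small for some candidate, (iii) no candidate having feature in $B_\eta(\by_u)$, and (iv) some $R_{u,v}$ deviating too much from $\rho(\by_u,\by_v)$.

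\textbf{Size of the candidate pool.} Since ratings are observed independently with probability $p$, the number of users $v\neq u$ with $(i,v),(j,v)\in\bOmega$ is a sum of $n_2-1$ i.i.d.\ Bernoulli$(p^2)$ variables. A two-sided Chernoff bound places this count in $[(n_2-1)p^2/2,\,2(n_2-1)p^2]$ except with probability at most $2\exp(-(n_2-1)p^2/12)$, i.e., the first term. For each such candidate, $|N(u,v)|$ dominates a Binomial$(n_1-2,p^2)$ variable, so a Chernoff bound gives $|N(u,v)|<\beta=n_1p^2/2$ with probability at most $\exp(-n_1p^2/8)$; union-bounding over the at most $n_2-1$ candidates yields the second term. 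Off this combined event, $(n_2-1)p^2/2 \leq |W_u^{i,j}(\beta)| \leq 2(n_2-1)p^2$. Conditional on $\by_u$ and on this set, the features $\{\by_v\}_{v\in W_u^{i,j}(\beta)}$ remain i.i.d.\ $\sP_\sY$, each landing in $B_\eta(\by_u)$ with probability at least $\tau(\eta)$, so the probability that none does is at most $(1-\tau(\eta))^{(n_2-1)p^2/2}\leq\exp(-\tau(\eta)(n_2-1)p^2/2)$, which is the third term. Call any surviving user $v^*$.

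\textbf{Concentration of $R_{u,v}$.} By construction, $I(u,v)$ consists of non-overlapping pairs, so the summands of $R_{u,v}$ are, conditionally on $\by_u,\by_v,I(u,v)$, i.i.d.\ Bernoullis with mean $\rho(\by_u,\by_v)$. Hoeffding with tolerance $\delta/4$ gives $\Pr(|R_{u,v}-\rho(\by_u,\by_v)|>\delta/4\mid \by_u,\by_v,I(u,v))\leq 2\exp(-|I(u,v)|\delta^2/8)$. On the event from the previous paragraph, $|I(u,v)|\geq\lfloor|N(u,v)|/2\rfloor$ is of order $n_1p^2$ (using $n_1p^2\geq 16$), and union-bounding over the at most $2(n_2-1)p^2$ candidates in $W_u^{i,j}(\beta)$ produces the fourth term after fixing the constant in the exponent to $\delta^2/20$.

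\textbf{Selection and correctness.} On the intersection of the four good events I will show that $v^*$ strictly beats every far user $w$ (with $\by_w\notin B_{\epsilon/2}(\by_u)$) in $R_{u,\cdot}$, so the top-1 neighbor chosen by Pairwise-Rank must lie in $B_{\epsilon/2}(\by_u)$. Lipschitzness of $f$ and $\by_{v^*}\in B_\eta(\by_u)$ imply (since $g_u,g_{v^*}$ are strictly increasing) that disagreement between users $u$ and $v^*$ can happen only when $|f(\bx_s,\by_u)-f(\bx_t,\by_u)|\leq 2\eta$; by $\by_u$ being $r$-\diverse\ this has probability at most $r(\eta)<\delta/2$, giving $\rho(\by_u,\by_{v^*})\geq 1-r(\eta)>1-\delta/2$. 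Conversely, $\by_u$ being $(\epsilon/2,\delta)$-\discern\ yields $\rho(\by_u,\by_w)<1-\delta$ for every far $w$, so $\rho(\by_u,\by_{v^*})-\rho(\by_u,\by_w)>\delta/2$; the Hoeffding tolerance $\delta/4$ on both sides then forces $R_{u,v^*}>R_{u,w}$ strictly. Hence the selected neighbor has feature in $B_{\epsilon/2}(\by_u)$, and reapplying the Lipschitz argument with $\epsilon$ in place of $2\eta$ (using again that $g_v$ is strictly increasing) shows its vote $P_v$ agrees with $\sign(f(\bx_i,\by_u)-f(\bx_j,\by_u))$. The main obstacle will be calibrating the Hoeffding tolerance in the $R_{u,v}$ concentration step to simultaneously (a) produce the stated $\delta^2/20$ exponent and (b) leave a strict gap uniformly in $w$, which is why the \emph{upper} tail on $|W_u^{i,j}(\beta)|$ has to be carried through the union bound.
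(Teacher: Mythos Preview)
Your proposal is correct and follows essentially the same route as the paper: the paper defines the events $A=\{|W_u^{i,j}(\beta)|\in[(n_2-1)p^2/2,\,3(n_2-1)p^2/2]\}$, $B=\{\max_{v\in W_u^{i,j}(\beta)}\rho(\by_u,\by_v)\ge 1-\delta/2\}$, and $C=\{|R_{\bu\bv}-\rho(\by_u,\by_v)|\le\delta/4\ \forall v\in W_u^{i,j}(\beta)\}$, bounds $\Pr(A^c)$, $\Pr(B^c\mid A)$, $\Pr(C^c\mid A,B)$ exactly as in your steps (i)+(ii), (iii), (iv), and then shows $\Pr(D_{u,i,j}^\epsilon\mid A,B,C)=0$ via your ``selection and correctness'' argument. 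The only cosmetic differences are that the paper takes the upper cutoff $3(n_2-1)p^2/2$ (not $2(n_2-1)p^2$), which is what produces the prefactor $3(n_2-1)p^2$ in the fourth term, and that the paper handles $\Pr(C^c\mid A,B)$ by Bayes' rule using $\Pr(B\mid A)\ge 1/2$ for $n_2$ large (this is where the hypothesis ``$n_2$ sufficiently large'' enters), whereas your intersection-based decomposition would sidestep that step.
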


The structure of the proof of Theorem \ref{pairwise_rank_thm} is similar to the proof of Theorem 1 from \citet{lee2016}. The lemmas are distinct, however.

\begin{proof}[ Proof of Theorem \ref{pairwise_rank_thm}]
Fix $u \in [n_2]$, $i,j \in [n_1]$ such that $i \neq j$. Define:
\begin{align*}
W^{i,j}_u(\beta) & = \set{ v \in [n_2] : |N(u,v)| \geq \beta, (i,v), (j,v) \in \Omega}. 
\end{align*}
Further, define the events:
\begin{align*}
A & = \set{|W^{i,j}_u(\beta)| \in [\frac{(n_2 - 1) p^2}{2}, \frac{3(n_2 - 1) p^2}{2}]}, \\
B & = \set{\max_{v \in W^{i,j}_u(\beta)} \rho(\by_u, \by_v) \geq 1 - \frac{\delta}{2}}, \\
C & = \set{ |R_{\bu \bv} - \rho(\by_u, \by_v) | \leq \frac{\delta}{4}, \, \forall v \in W^{i,j}_u(\beta)}.
\end{align*}
By several applications of the law of total probability, we have that
\begin{align*}
\Pr( D_{u,i,j}^\epsilon) & = \Pr( D_{u,i,j}^\epsilon | A, B, C) \Pr(A, B,C) + \Pr( D_{u,i,j}^\epsilon | (A \cap B \cap C)^c) \Pr( (A \cap B \cap C)^c ) \\
& \leq \Pr( D_{u,i,j}^\epsilon | A, B, C) +   \Pr(A^c) + \Pr((A \cap B \cap C)^c | A) \\
& \leq \Pr(D_{u,i,j}^\epsilon | A, B, C) + \Pr( A^c) + \Pr( B^c | A) + \Pr( C^c | A, B).
\end{align*}
We will upper bound each term in the above bound. By Lemma \ref{error_bound_continuous}, $\Pr(D_{u,i,j}^\epsilon | A, B, C)  = 0$.

Setting $\lambda = \frac{1}{2}$ in Lemma \ref{sufficient_overlap} yields that
\begin{align*}
\Pr(A^c) & = \Pr(|W_u^{i,j}(\beta)| \not \in [\frac{(n_2 - 1) p^2}{2}, \frac{3(n_2 - 1) p^2}{2}]) \\
& \leq 2 \exp(- \frac{(n_2 - 1) p^2}{12}) + (n_2 - 1)\exp(-\frac{n_1 p^2 }{8}).
\end{align*}
Lemma \ref{good_neighbor} yields that
\begin{align}
\Pr(B^c | A) & =  \Pr( \max_{v \in W^{i,j}_u(\beta)} \rho(\by_u, \by_v) < 1 - \frac{\delta}{2} |A ) \leq \Pr( \max_{v \in  W^{i,j}_u(\beta)} \rho(\by_u,\by_v) < 1 - r(\eta) |A) \nonumber \\
& \leq [1 - \tau(\eta)]^{\frac{(n_2 - 1) p^2}{2}} \label{B_complement_1_pairwise_rank} \\
&\leq \exp(- (\frac{(n_2 - 1) p^2}{2})\tau(\eta)). \label{B_complement_2_pairwise_rank}
\end{align}
Line \eqref{B_complement_1_pairwise_rank} follows by Lemma \ref{good_neighbor} since conditional on $A$, $W_{u}^{i,j}(\beta) \geq \frac{(n_1-1)p^2}{2}$ and line \eqref{B_complement_2_pairwise_rank} follows by the inequality $1-x \leq \exp(-x)$. Since by hypothesis $\alpha \in (0,\frac{1}{2})$ is fixed such that $p \geq \max(n_1^{-\frac{1}{2}+ \alpha}, n_2^{-\frac{1}{2}+ \alpha})$, there exists a sufficiently large $n_2$ such that line \eqref{B_complement_2_pairwise_rank} is less than $\frac{1}{2}$. Then, by Bayes rule, the union bound, and Lemma \ref{R_concentration},
\begin{align*}
\Pr(C^c | A, B) & \leq \frac{ \Pr(C^c |A)}{\Pr(B |A)} \leq 2 \Pr(C^c |A) \\
& = 2 \Pr( \exists v \in W_u^{i,j}(\beta), |R_{\bu \bv} - \rho(\by_u , \by_v) | > \frac{\delta}{4} | A) \\
& \leq 3(n_2 - 1) p^2 \exp(- \frac{\delta^2}{4} \left \lfloor\frac{\beta}{2} \right \rfloor ) \\
& = 3(n_2 - 1) p^2 \exp(- \frac{\delta^2}{4} \left \lfloor\frac{n_1 p^2}{4} \right \rfloor ) \\
& \leq 3(n_2 - 1) p^2 \exp(- \frac{ \delta^2 n_1 p^2}{20}  ) 
\end{align*}
where the last line follows because $n_1 p^2 \geq 16$ and $\forall x \geq 16$, $\left \lfloor \frac{x}{4} \right \rfloor \geq \frac{x}{5}$. Putting it all together, we have
\begin{align*}
\Pr(D_{u,i,j}^\epsilon) & \leq  2 \exp(- \frac{(n_2 - 1) p^2}{12}) + (n_2 - 1)\exp(-\frac{n_1 p^2 }{8}) \\
&  + \exp(- (\frac{(n_2 - 1) p^2}{2})\tau(\eta))   + 3(n_2 - 1) p^2 \exp(- \frac{ \delta^2 n_1 p^2}{20}) 
\end{align*}
\end{proof}

\begin{proof}[Proof of Theorem \ref{multi_rank_thm}]
For any $u \in [n_2]$, $i \neq j \in [n_1]$, define the event
\begin{align*}
\text{Error}^\epsilon_{u,i,j} & = ( \set{f(\bx_i, \by_u) + \epsilon < f(\bx_j, \by_u)} \cap \set{A_{u,i,j} = 1}) \\
& \cup  ( \set{f(\bx_i, \by_u)  > f(\bx_j, \by_u)+ \epsilon} \cap \set{A_{u,i,j} = 0}).
\end{align*}
Suppose that there exists $u \in [n_2]$ and distinct $i, j \in [n_1]$ such that $ \text{Error}^\epsilon_{u,i,j}$ occurs. Without loss of generality suppose that $f(\bx_i, \by_u) + \epsilon < f(\bx_j, \by_u)$, and $A_{u,i,j} = 1$. Then, inspection of the Multi-Rank algorithm reveals that $1 = A_{u,i,j} = \text{Pairwise-Rank}(u,i,j,\beta, k)$. Thus, $D_{u,i,j}^\epsilon$ occurs. Therefore, by Theorem \ref{pairwise_rank_levels_thm} and the union bound, 
\begin{align*}
& \Pr(\exists u \in [n_2], i \neq j \in [n_1] \text{ s.t. } \text{Error}^\epsilon_{u,i,j}) \\
&   \qquad \leq \Pr(\exists u \in [n_2], i \neq j \in [n_1] \text{ s.t. } D^\epsilon_{u,i,j}) \\
&   \qquad \leq n_2 {n_1 \choose 2} [ 2 \exp(- \frac{(n_2 - 1) p^2}{12}) + (n_2 - 1)\exp(-\frac{n_1 p^2 }{8}) \\
&   \qquad + \exp(- (\frac{(n_2 - 1) p^2}{2})\tau(\eta))   + 3(n_2 - 1) p^2 \exp(- \frac{ \delta^2 n_1 p^2}{20})].
\end{align*}
Now, suppose that $\forall u \in [n_2]$ and $i,j \in [n_1]$ such that $i \neq j$, $(\text{Error}^\epsilon_{u,i,j})^c$ occurs. Then, by Lemma \ref{copeland_lemma}, $\widehat{\sigma} = (\widehat{\sigma}_1, \ldots, \widehat{\sigma}_{n_2})$ with $\widehat{\sigma}_u = \text{Copeland}(A_{u,:,:})$ satisfies ${\dis}_{2\epsilon}(\widehat{\sigma}, H) = 0$.
\end{proof}

\begin{proof}[Proof of Corollary \ref{pairwise_rank_cor}]
Ignoring constants, the two dominant terms in the bound in Theorem \ref{multi_rank_thm} are of the form $n_1^2 n_2 \exp( -n_2 p^2)$ and $n_1^2 n^2_2 \exp(-n_1 p^2)$. Then, under the conditions of Theorem \ref{pairwise_rank_thm}, as $n_2 \longleftarrow \infty	$
\begin{align*}
n_1^2 n_2  \exp( -n_2 p^2) &  \leq \exp(2 \log(n_1) + \log(n_2)-n_2^{2\alpha} ) \\
& \leq \exp((1 + 2 C_1) \log(n_2) - n_2^{2\alpha} ) \longrightarrow 0.
\end{align*}
Now, observe that
\begin{align*}
n_1^2  n_2^2 \exp( -n_1 p^2) & = \exp(2\log(n_2) + 2\log(n_1) - n_1 p^2) \\
& \leq \exp(2\log(n_2) + 2\log(n_1) - n_1^{2 \alpha}) \\
& \leq \exp(4  \max(\log(n_2), \log(n_1)) -  n_1^{2 \alpha}) \\
\end{align*}
Suppose that $n_1 \geq n_2$. Then, clearly, the limit of the RHS as $n_2 \longrightarrow \infty$ is $0$. Now, suppose that $n_1 < n_2$. Then, if $C_2^{2 \alpha} > 4$, then as $n_2 \longrightarrow \infty$,
\begin{align*}
n_2^2 n_1^2 \exp( -n_1 p^2) & \leq \exp(4 \log(n_2) - n_1^{2 \alpha}) \\
& \leq \exp([4 -  C_2^{2 \alpha}] \log(n_2)) \longrightarrow 0. \\
\end{align*}
\end{proof}

\section{Proofs for Section \ref{discrete_section}}
\label{sup_discrete_section}

To begin, because the model for the discrete ratings section is different, we introduce new notation in the interest of clarity.  Fix $y_u, y_v \in \sY$. Define
\begin{align*}
\rho^\prime(y_u, y_v) = {\Pr}_{\bg_u, \bg_v, \bx_s, \bx_t}[\bg_u(f(\bx_s, y_u)) - \bg_u(f(\bx_t, y_u))][\bg_v(f(\bx_s, y_v)) - \bg_v(f(\bx_t, y_v))] \geq 0).
\end{align*}

Note that in this setting, the meaning of $(\epsilon, \delta)$-\discern \, is slightly different.
\begin{defn}
Fix $y \in \sY$. Let $\epsilon, \delta > 0$. We say that $y$ is \emph{$(\epsilon, \delta)$-\discern} if $z \in B_{\epsilon}(y)^c$ implies that $\rho^\prime(y,z) < 1 - \delta$. 
\end{defn}
In a sense, the notion requires in addition that the distribution of the monotonic functions reveals some differences in the preferences of the users.

Unless otherwise indicated, all probability statements are with respect to $\set{\bx_i}_{i \in [n_1]} \cup \set{\by_u}_{u \in [n_2]} \cup \set{\ba_{u,l}}_{u \in [n_2], l \in [L-1]} \cup \bOmega$. Next, we prove a theorem that is analogous to Theorem \ref{pairwise_rank_thm}. Recall the notation:
\begin{align*}
D^\epsilon_{u,i,j} & \coloneq (\set{f(\bx_i, \by_u) + \epsilon < f(\bx_j, \by_u)} \cap \set{\text{PR}(u,i,j,\beta, k) = 1)}) \\
& \cup (\set{f(\bx_i, \by_u)  > f(\bx_j, \by_u)+ \epsilon} \cap \set{\text{PR}(u,i,j,\beta, k) = 0}).
\end{align*}

\begin{thm}
\label{pairwise_rank_levels_thm}
Let $\epsilon, \delta > 0$ and $\eta \in (0, \frac{\epsilon}{4})$. Suppose that $\sP_{\sG}$ is diverse and that almost every $y \in \sY$ is $(\frac{\epsilon}{4}, \delta)$-\discern . Let $r$ be a positive nondecreasing function such that $r(\frac{\epsilon}{4}) \geq \delta$ and $r(\eta) < \frac{\delta}{2}$. Suppose that almost every $y \in \sY$ is $r$-\diverse . Let $\frac{1}{2} > \alpha > \alpha^\prime > 0$. If $p \geq \max(n_1^{-\frac{1}{2}+ \alpha}, n_2^{-\frac{1}{2}+ \alpha})$, $n_1 p^2 \geq 16$, $n_1 \geq C_1 \log(n_2)^{\frac{1}{2 \alpha}}$ for some suitable universal constant $C_1$, and $n_2$ is sufficiently large, for all $u \in [n_2]$ and $i \neq j \in [n_1]$, the output of Pairwise-Rank with $k = n_2^{\alpha^\prime}$ and $\beta = \frac{p^2 n_1}{2}$ is such that
\begin{align*}
{\Pr}_{\set{\bx_i}, \set{\by_u}, \set{\ba_{u,l}}, \bOmega}(D^\epsilon_{u,i,j}) 
\leq &  2 \exp(- \frac{(n_2 - 1) p^2}{12}) + (n_2 - 1)\exp(-\frac{n_1 p^2 }{8}) +   2  \exp(-\gamma(\frac{\epsilon}{4}) k ) \\
+ & \frac{1}{ 1 - r(\frac{\epsilon}{2})} [3(n_2 - 1) p^2 \exp(- \frac{ \delta^2 n_1 p^2}{20}  )   \\
+ &  \exp([1  - \kappa(\frac{\epsilon}{2}) + \tau(\eta)+    \log(\frac{3(n_2 -1)p^2}{2})]k \\
- & k\log(k)  - \tau(\eta)\frac{(n_2 -1)p^2}{2})].
\end{align*}

\end{thm}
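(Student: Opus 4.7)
The plan is to follow the structure of the proof of Theorem \ref{pairwise_rank_thm} (the continuous case), decomposing $D_{u,i,j}^\epsilon$ by conditioning on a collection of good events and then bounding each piece in turn. Fix $u \in [n_2]$ and distinct $i,j \in [n_1]$, and define $A = \{|W_u^{i,j}(\beta)| \in [(n_2-1)p^2/2,\, 3(n_2-1)p^2/2]\}$ and $C = \{|R_{u,v} - \rho^\prime(\by_u, \by_v)| \leq \delta/4 \;\forall v \in W_u^{i,j}(\beta)\}$. Bounding $\Pr(A^c)$ via Lemma \ref{sufficient_overlap} reproduces the first two terms of the stated bound, and bounding $\Pr(C^c \mid A)$ via the Hoeffding-style Lemma \ref{R_concentration} (combined with a Bayes-type conditioning argument exactly as in the continuous proof) yields the $\frac{1}{1-r(\epsilon/2)} \cdot 3(n_2-1)p^2 \exp(-\delta^2 n_1 p^2/20)$ term, the $\frac{1}{1-r(\epsilon/2)}$ prefactor arising from conditioning on a ``non-tie'' event whose complement has probability at most $r(\epsilon/2)$ by $r$-\diverse\ness.

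The genuinely new work is the analysis of the top-$k$ vote. I would introduce the event $E$ that the top-$k$ set $V \subset W_u^{i,j}(\beta)$ (ordered by $R_{u,v}$) is entirely contained in $B_{\epsilon/4}(\by_u)$. Under $A \cap C$, any close user $v$ with $\by_v \in B_\eta(\by_u)$ satisfies $\rho^\prime(\by_u, \by_v) > 1 - r(\eta) > 1 - \delta/2$ by $r$-\diverse\ness, so $R_{u,v} > 1 - 3\delta/4$; meanwhile any far user $w$ with $\by_w \notin B_{\epsilon/4}(\by_u)$ satisfies $\rho^\prime(\by_u, \by_w) < 1 - \delta$ by $(\epsilon/4, \delta)$-\discern\ness, so $R_{u,w} < 1 - 3\delta/4$. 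Thus, once $W_u^{i,j}(\beta)$ contains at least $k$ users in $B_\eta(\by_u)$, event $E$ holds. I would bound $\Pr(E^c \mid A, C)$ by splitting on the count of close users in $W$: a Chernoff tail on a binomial with success probability $\geq \tau(\eta)$ produces the $\exp(-\tau(\eta)(n_2-1)p^2/2)$ contribution, while a union bound over bad top-$k$ configurations (bounded by $\binom{|W|}{k} \leq (e|W|/k)^k$ and combined with $\sup_y \sP_\sY(B_{\epsilon/2}(y)) \leq 1-\kappa(\epsilon/2)$ for each would-be far member) produces the complicated $\exp(k[1 + \log(3(n_2-1)p^2/2) - \kappa(\epsilon/2) + \tau(\eta)] - k \log k - \tau(\eta)(n_2-1)p^2/2)$ term via Stirling.

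Conditional on $A \cap C \cap E$ together with $|f(\bx_i, \by_u) - f(\bx_j, \by_u)| > \epsilon$, Lipschitzness of $f$ guarantees that every $v \in V$ satisfies $|f(\bx_i, \by_v) - f(\bx_j, \by_v)| > \epsilon - 2(\epsilon/4) = \epsilon/2$ with the same sign as at $\by_u$. Let $F$ be the event that at least one $v \in V$ gives distinct ratings to items $i$ and $j$. Since the interval strictly between $f(\bx_i, \by_v)$ and $f(\bx_j, \by_v)$ has length exceeding $\epsilon/2$, its midpoint lies within $\epsilon/4$ of every point of the interval, so by the definition of $\gamma$ and the independence of $\{\ba_{v,\cdot}\}_{v \in V}$ across $v$, each $v$ has probability at least $\gamma(\epsilon/4)$ of putting a threshold in that interval, i.e., of giving distinct ratings. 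Hence $\Pr(F^c \mid A, C, E) \leq (1-\gamma(\epsilon/4))^k \leq \exp(-\gamma(\epsilon/4) k)$, and the factor $2$ in the $2\exp(-\gamma(\epsilon/4) k)$ term accounts for symmetric treatment of the two directions of Pairwise-Rank error. Under $A \cap C \cap E \cap F$, the distinguishing votes all agree with $u$'s true $f$-ordering (by monotonicity of $g_v$), while non-distinguishing voters contribute $0$, so the Pairwise-Rank majority is correct and $D_{u,i,j}^\epsilon$ does not occur.

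The main obstacle is sharpening the combinatorial bound on $\Pr(E^c)$: one must balance the union bound over $\binom{|W|}{k}$ candidate top-$k$ sets against the Chernoff lower bound on the number of users in $W$ that actually land in $B_\eta(\by_u)$, while ensuring the resulting exponent is negative when $k = n_2^{\alpha^\prime}$ with $\alpha^\prime < \alpha$. This is where the $-k \log k$ savings from Stirling is essential, and where the hypothesis $n_1 \geq C_1 \log(n_2)^{1/(2\alpha)}$ must be invoked to ensure $n_1 p^2$ and $(n_2-1)p^2$ are large enough that the exponential dominates all polynomial factors as $n_2 \to \infty$.
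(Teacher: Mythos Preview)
Your overall decomposition mirrors the paper's, and the bounds on $A^c$ and $C^c$ are handled correctly. However, there is a genuine gap in your treatment of the event $F$ (that some $v\in V$ gives distinct ratings to $i$ and $j$). You claim $\Pr(F^c\mid A,C,E)\le(1-\gamma(\epsilon/4))^k$ by ``independence of $\{\ba_{v,\cdot}\}_{v\in V}$ across $v$,'' but this independence does not survive the conditioning: the top-$k$ set $V$ is selected via $R_{u,v}$, which is computed from the observed ratings $h_v=\bg_v\circ f$ and hence is a function of the realized thresholds $\{\ba_{v,l}\}$. Once you condition on your event $E$ (which constrains $V$) and on $C$ (which constrains all the $R_{u,v}$), the threshold vectors for users in $V$ no longer carry their original product law, and the product bound $(1-\gamma)^k$ is unjustified.

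The paper sidesteps this by introducing a different intermediate event $B=\{{\max}^{(k)}_{v\in W}\rho'(\by_u,\by_v)\ge 1-\delta/2\}$ stated in terms of the population quantity $\rho'$, which is an \emph{expectation} over the random thresholds and therefore depends only on $(\by_u,\by_v)$. The set $R=\{v\in W:\rho'(\by_u,\by_v)\ge 1-\delta/2\}$ is thus measurable with respect to $\{\by_v\}$ and $\bOmega$ alone, so after conditioning on $A,B$ and the gap event $\{|f(\bx_i,\by_u)-f(\bx_j,\by_u)|>\epsilon\}$ the thresholds $\{\bg_v\}_{v\in R}$ remain i.i.d.; this is precisely what makes the product bound in Lemma~\ref{monotonic_function_lemma} legitimate. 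The passage from $R$ to the algorithmic set $V$ is deferred to Lemma~\ref{error_bound_discrete}, where the concentration event $C$ is invoked only \emph{after} the threshold argument is complete. Two smaller corrections: the factor $2$ in $2\exp(-\gamma(\epsilon/4)k)$ does not come from symmetry over the two error directions but from a Bayes step $\Pr(M^c\mid A,B,C,E)\le\Pr(M^c\mid A,B,E)/\Pr(C\mid A,B,E)\le 2\Pr(M^c\mid A,B,E)$; and the large exponential term is obtained in Lemma~\ref{good_neighbor_k} as a direct binomial-tail estimate $\sum_{l<k}\binom{|S|}{l}C_\eta^{|S|-l}(1-C_\eta)^l$ with $C_\eta\le 1-\tau(\eta)$ and $1-C_\eta\le 1-\kappa(\cdot)$, rather than via a separate Chernoff tail plus a union bound over top-$k$ configurations.
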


\begin{proof}[Proof of Theorem \ref{pairwise_rank_levels_thm}]
Fix $u \in [n_2]$, $i,j \in [n_1]$ such that $i \neq j$. Define:
\begin{align*}
W^{i,j}_u(\beta) & = \set{ v \in [n_2] : |N(u,v)| \geq \beta, (i,v), (j,v) \in \Omega}.
\end{align*}
Further, define the events:
\begin{align*}
A & = \set{|W^{i,j}_u(\beta)| \in [\frac{(n_2 - 1) p^2}{2}, \frac{3(n_2 - 1) p^2}{2}]}, \\
B & = \set{{\max}^{(k)}_{v \in W^{i,j}_u(\beta)} \rho^\prime(\by_u,\by_v) \geq 1 - \frac{\delta}{2}}, \\
C & = \set{ |R_{\bu \bv} - \nrho(\by_u, \by_v) | \leq \frac{\delta}{4}, \, \forall v \in W_u^{i,j}(\beta)} \\
E & = \set{|f(\bx_i, \by_u) - f(\bx_j, \by_u)| > \epsilon} \\
M & = \set{ \exists v \in W^{i,j}_u(\beta) \text{ s.t. } \rho^\prime(\by_u, \by_v) \geq 1 - \frac{\delta}{2} \text{ and } \exists l \in [L-1] \text{ s.t. } \ba_{v,l} \in (f(\bx_j, \by_v), f(\bx_i, \by_v))}
\end{align*}
By several applications of the law of total probability, we have that
\begin{align}
\Pr( D_{u,i,j}^\epsilon) \leq & \Pr(D_{u,i,j}^\epsilon | E) + \Pr(D_{u,i,j}^\epsilon | E^c)\nonumber \\
= & \Pr(D_{u,i,j}^\epsilon | E)  \nonumber \\
\leq & \Pr(D_{u,i,j}^\epsilon | A, B, C, M, E) + \Pr( A^c | E)  + \Pr( B^c | A, E) \nonumber \\
+ & \Pr( C^c | A, B, E) + \Pr(M^c | A, B, C, E) \nonumber \\
= & \Pr(D_{u,i,j}^\epsilon | A, B, C, M, E) + \Pr( A^c ) + \Pr( B^c | A, E ) \label{A_E_independence_discrete_theorem} \\
+ &  \Pr( C^c | A, B, E) + \Pr(M^c | A, B, C, E) \nonumber 
\end{align}
Line \eqref{A_E_independence_discrete_theorem} follows from the independence of $\bOmega$ from $\set{\bx_s}_{s \in [n_1]}$ and $\set{\by_v}_{v \in [n_2]}$. We will bound each term in the above upper bound. By Lemma \ref{error_bound_discrete}, 
\begin{align}
\Pr(D_{u,i,j}^\epsilon| A, B, C, M, E) = 0. \label{D_discrete_theorem}
\end{align}

Setting $\lambda = \frac{1}{2}$ in Lemma \ref{sufficient_overlap} yields that
\begin{align}
\Pr(A^c) & = \Pr(|W_u^{i,j}(\beta)| \not \in [\frac{(n_2 - 1) p^2}{2}, \frac{3(n_2 - 1) p^2}{2}]) \nonumber \\
& \leq 2 \exp(- \frac{(n_2 - 1) p^2}{12}) + (n_2 - 1)\exp(-\frac{n_1 p^2 }{8}). \label{A_complement_discrete_theorem}
\end{align}

Next, we bound $\Pr( B^c | A, E )$. By Bayes theorem, 
\begin{align}
\Pr( B^c | A, E ) & \leq \frac{\Pr(B^c | A)}{\Pr(E |A)} \nonumber  \\
& = \frac{\Pr(B^c | A)}{\Pr(E)} \label{A_E_2_independence_discrete_theorem} \\
& <  \frac{\Pr(B^c | A)}{1-r(\frac{\epsilon}{2})}. \label{B_complement_bayes_discrete_theorem}
\end{align}
Line \eqref{A_E_2_independence_discrete_theorem} follows from the independence of $\bOmega$ from $\set{\bx_s}_{s \in [n_1]}$ and $\set{\by_v}_{v \in [n_2]}$. Line \eqref{B_complement_bayes_discrete_theorem} follows since by hypothesis almost every $y \in \sY$ is $r$-\diverse .

Since almost every $y \in \sY$ is $(\frac{\epsilon}{4}, \delta)$-\discern \, and $r$-\diverse , and $\eta > 0$ is such that $r(\eta) < \frac{\delta}{2}$, Lemma \ref{good_neighbor_k} yields that
\begin{align}
\Pr(B^c | A) & = \Pr({\max}^{(k)}_{v \in W_u^{i,j}(\beta)} \rho^\prime(\by_u,\by_v) < 1 - \frac{\delta}{2} | A) \nonumber \\
& \leq \Pr( {\max}^{(k)}_{v \in W_u^{i,j}(\beta)} \rho^\prime(\by_u, \by_v) < 1 - r(\eta) | A) \nonumber \\
&  \leq  \exp((1  - \kappa(\frac{\epsilon}{4}) + \tau(\eta)+  \log(3 \frac{(n_2 -1)p^2}{2}))k - k\log(k)  - \tau(\eta)\frac{(n_2 -1)p^2}{2})).\label{B_complement_discrete_theorem}
\end{align}

Next, we bound $\Pr(C^c | A,B,E)$. By Bayes theorem,
\begin{align*}
\Pr(C^c | A,B,E) & \leq \frac{\Pr(C^c | A,B)}{\Pr(E|A,B)} .
\end{align*}
Fix $\by_u = y_u$ $r$-\diverse  \, such that $A$ and $B$ occur. Then, since $\set{\bx_s}_{s \in [n_1]}$, $\set{\by_v}_{v \in [n_2]}$, and $\bOmega$ are independent and $ y_u$ is $r$-\diverse,
\begin{align*}
& {\Pr}_{\set{\by_v}_{v \in [n_2]}, \set{\bx_s}_{s \in [n_1]}, \bOmega} (|f(\bx_i, y_u) - f(\bx_j, y_u)| > \epsilon |  \by_u = y_u) \\
& \qquad =  {\Pr}_{\bx_i, \bx_j}(|f(\bx_i, y_u) - f(\bx_j, y_u)| > \epsilon |  \by_u = y_u) \\
& \qquad ={\Pr}_{\bx_i, \bx_j}(|f(\bx_i, y_u) - f(\bx_j, y_u)| > \epsilon )  > 1 - r(\frac{\epsilon}{2}) .
\end{align*}
Since the above bound holds for all $y_u$ such that $A \cap B$ holds, taking the expectation of the above bound with respect to $\by_u$ over the set $A \cap B$ gives
\begin{align*}
\Pr(|f(\bx_i, \by_u) - f(\bx_j, \by_u)| > \epsilon | A,B) > 1 - r(\frac{\epsilon}{2}) .
\end{align*}
Thus, 
\begin{align}
\Pr(C^c | A,B,E) < \frac{\Pr(C^c | A,B)}{ 1 - r(\frac{\epsilon}{2})}. \label{C_complement_bayes_discrete_theorem}
\end{align}

Since by hypothesis $\frac{1}{2} > \alpha > \alpha^\prime > 0$, $p \geq \max(n_1^{-\frac{1}{2}+ \alpha}, n_2^{-\frac{1}{2}+ \alpha})$ and $k = n_2^{\alpha^\prime}$, if $n_2$ is sufficiently large, the bound in line \eqref{B_complement_discrete_theorem} is less than $\frac{1}{2}$. Then, by Bayes rule, the union bound, and Lemma \ref{R_concentration_discrete}, 
\begin{align}
\Pr(C^c | A, B) & \leq  \frac{\Pr(C^c |A)}{\Pr(B|A)} \leq 2 \Pr(C^c |A) \nonumber \\
& = 2 \Pr( \exists v \in W_u^{i,j}(\beta), |R_{\bu \bv} - \nrho(\by_u , \by_v) | > \frac{\delta}{4} | A) \nonumber \\
& \leq 3(n_2 - 1) p^2  \Pr(|R_{\bu \bv} - \nrho(\by_u , \by_v) | > \frac{\delta}{4} | A) \label{union_bound_C_complement_discrete_theorem} \\
& \leq 3(n_2 - 1) p^2 \exp(- \frac{\delta^2}{4} \left \lfloor\frac{\beta}{2} \right \rfloor ) \nonumber \\
& = 3(n_2 - 1) p^2 \exp(- \frac{\delta^2}{4} \left \lfloor\frac{n_1 p^2}{4} \right \rfloor ) \nonumber \\
& \leq 3(n_2 - 1) p^2 \exp(- \frac{ \delta^2 n_1 p^2}{20}  )  \label{C_complement_discrete_theorem}
\end{align}
where line \eqref{union_bound_C_complement_discrete_theorem} follows by the union bound and line \eqref{C_complement_discrete_theorem} follows because $n_1 p^2 \geq 16$ and $\forall x \geq 15$, $\left \lfloor \frac{x}{4} \right \rfloor \geq \frac{x}{5}$. 

Since by hypothesis $\frac{1}{2} > \alpha > 0$, $p \geq \max(n_1^{-\frac{1}{2}+ \alpha}, n_2^{-\frac{1}{2}+ \alpha})$, and $n_1 \geq C_1 \log(n_2)^{\frac{1}{2 \alpha}}$ for some constant $C_1$, if $n_2$ is sufficiently large, the bound in line \eqref{C_complement_bayes_discrete_theorem} is eventually less than $\frac{1}{2}$. Thus, using Bayes rule and Lemma \ref{monotonic_function_lemma},
\begin{align}
\Pr(M^c | A,B,C,E) & \leq \frac{\Pr(M^c | A, B,E)}{\Pr(C | A, B,E)} \nonumber \\
& \leq 2 \Pr(M^c | A, B,E) \nonumber \\
& \leq 2  \exp(-\gamma(\frac{\epsilon}{4}) k ). \label{M_complement_discrete_theorem}
\end{align}

Putting together lines \eqref{A_E_independence_discrete_theorem}, \eqref{D_discrete_theorem}, \eqref{A_complement_discrete_theorem}, \eqref{B_complement_bayes_discrete_theorem}, \eqref{B_complement_discrete_theorem}, \eqref{C_complement_bayes_discrete_theorem}, \eqref{C_complement_discrete_theorem}, and \eqref{M_complement_discrete_theorem} we have
\begin{align*}
 \Pr(D_{u,i,j}^\epsilon) \leq &   2 \exp(- \frac{(n_2 - 1) p^2}{12}) + (n_2 - 1)\exp(-\frac{n_1 p^2 }{8})  +  2  \exp(-\gamma(\frac{\epsilon}{4}) k ) \\
  + & \frac{1}{ 1 - r(\frac{\epsilon}{2})} [ 3(n_2 - 1) p^2 \exp(- \frac{ \delta^2 n_1 p^2}{20}  )   \\
+ &  \exp([1  - \kappa(\frac{\epsilon}{4}) + \tau(\eta)+  \log(3\frac{(n_2 -1)p^2}{2})]k - k\log(k)  - \tau(\eta)\frac{(n_2 -1)p^2}{2})].
\end{align*}
\end{proof}

\begin{proof}[Proof of Theorem \ref{multi_rank_discrete_thm}] 
The proof follows the same steps as the proof of Theorem \ref{multi_rank_thm}, but applies Theorem \ref{pairwise_rank_levels_thm} instead of Theorem \ref{pairwise_rank_thm}.
\end{proof}

\begin{proof}[Proof of Corollary \ref{pairwise_rank_cor_discrete}]
The only new term that did not appear in Corollary \ref{pairwise_rank_cor_discrete} is, ignoring constants, of the form
\begin{align*}
n_2^2 n_1^2 \exp(\log(n_2p^2) k - k \log(k) - n_2 p^2). 
\end{align*}
Using $\alpha > \alpha^\prime$ and $n_1 \leq C_1 n_2$, as $n_2 \longrightarrow \infty$,
\begin{align*}
n_2^2 n_1^2 \exp(\log(n_2p^2) k & - k \log(k) - n_2 p^2) \\
& \leq \exp(2 \log(n_2) + 2 \log(n_1) + \log(n_2^{2 \alpha}) n_2^{\alpha^\prime} - n_2^{\alpha^\prime} \log(n_2) \alpha^\prime - n_2^{2 \alpha}) \\
& \leq \exp((2 + 2 C_1) \log(n_2) + (2 \alpha - \alpha^\prime) \log(n_2) n_2^{\alpha^\prime} - n_2^{2 \alpha}) \\
& \longrightarrow 0
\end{align*}
\end{proof}

\section{Technical Lemmas}
\label{technical_lemmas_section}

We separate the lemmas into three sections: lemmas for both the continuous and discrete rating settings, lemmas for the continuous rating setting, and lemmas for the discrete rating setting.

\subsection{Lemmas Common to the Continuous Rating Setting and the Discrete Rating Setting}

Lemma \ref{sufficient_overlap} establishes that for a user $u \in [n_2]$ and distinct items $i, j \in [n_1]$, with high probability there are many other users that have rated items $i$ and $j$ and many items in common with user $u$. It is similar to Lemma 1 from \citet{lee2016}.

\begin{lemma}
\label{sufficient_overlap}
Fix $u \in [n_2]$, $i \neq j \in [n_1]$, and let $\lambda > 0$ and $2 \leq \beta \leq  \frac{n_1 p^2}{2}$. Let $W^{i,j}_u(\beta) = \set{ v \in [n_2] : |N(u,v)| \geq \beta, (i,v), (j,v) \in \Omega}$. Then, 
\begin{align*}
& {\Pr}_{\bOmega}(|W^{i,j}_u(\beta)| \not \in [(1 - \lambda)(n_2 - 1) p^2,(1 - \lambda)(n_2 - 1) p^2]) \\
&\qquad \leq  2 \exp(- \frac{\lambda^2 (n_2 - 1) p^2}{3}) + (n_2 - 1)\exp(-\frac{n_1 p^2 }{8}).
\end{align*}
\end{lemma}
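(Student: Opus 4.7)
The plan is to decouple the two conditions defining $W_u^{i,j}(\beta)$---namely (i) that $v$ has rated both items $i$ and $j$, and (ii) that $|N(u,v)| \geq \beta$---and handle each by a multiplicative Chernoff bound. Condition (i) alone produces a clean Binomial count, while (ii) will be shown to hold for every $v \neq u$ simultaneously with high probability, so that the two resulting sets coincide on a large event.

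First I would introduce the auxiliary set $\tilde W = \{v \neq u : (i,v), (j,v) \in \bOmega\}$. The indicator events $\{v \in \tilde W\}$ across $v \neq u$ depend on disjoint entries of $\bOmega$, so they are mutually independent $\mathrm{Bernoulli}(p^2)$ trials and $|\tilde W| \sim \mathrm{Bin}(n_2-1, p^2)$. The standard two-sided multiplicative Chernoff bound then yields
\begin{align*}
{\Pr}_{\bOmega}\!\left( \left| |\tilde W| - (n_2-1)p^2 \right| > \lambda (n_2-1) p^2 \right) \leq 2 \exp\!\left(-\frac{\lambda^2 (n_2-1) p^2}{3}\right).
\end{align*}

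Next I would show that $W_u^{i,j}(\beta) = \tilde W$ on a high-probability event. For each fixed $v \neq u$, $|N(u,v)|$ is a sum of $n_1$ independent $\mathrm{Bernoulli}(p^2)$ random variables, so $|N(u,v)| \sim \mathrm{Bin}(n_1, p^2)$ with mean $n_1 p^2 \geq 2\beta$ by the hypothesis $\beta \leq n_1 p^2/2$. The lower-tail Chernoff bound with deviation parameter $1/2$ then gives $\Pr(|N(u,v)| < \beta) \leq \Pr(|N(u,v)| \leq n_1 p^2/2) \leq \exp(-n_1 p^2/8)$, and a union bound over the $n_2 - 1$ values of $v \neq u$ shows that the event $\mathcal{E} := \{\exists v \neq u : |N(u,v)| < \beta\}$ satisfies $\Pr(\mathcal{E}) \leq (n_2 - 1) \exp(-n_1 p^2/8)$. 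On $\mathcal{E}^c$, every $v \neq u$ with $(i,v), (j,v) \in \bOmega$ automatically satisfies $|N(u,v)| \geq \beta$, hence $W_u^{i,j}(\beta) = \tilde W$.

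To conclude, I would apply the elementary inequality $\Pr(A) \leq \Pr(\mathcal{E}) + \Pr(A \cap \mathcal{E}^c)$ to the event $A = \{|W_u^{i,j}(\beta)| \notin [(1-\lambda)(n_2-1)p^2, (1+\lambda)(n_2-1)p^2]\}$; on $\mathcal{E}^c$ the event $A$ reduces to the corresponding deviation event for $|\tilde W|$ already bounded in the first step, yielding the advertised sum of two exponential terms. I do not anticipate a substantive obstacle: the argument is essentially bookkeeping. The only mildly subtle point is that the membership indicators $\{v \in W_u^{i,j}(\beta)\}$ are not mutually independent across $v$ because they all depend on the row of $\bOmega$ indexed by $u$ through $N(u,\cdot)$; routing through $\tilde W$ and handling the commonality requirement by a separate union bound sidesteps this dependence cleanly.
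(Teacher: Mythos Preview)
Your proposal is correct and essentially identical to the paper's argument: the paper likewise introduces indicator variables $F_v G_v = \ind{(i,v),(j,v)\in\bOmega}$ and $E_v = \ind{|N(u,v)|\geq\beta}$, bounds $\sum_{v\neq u} F_vG_v\sim\mathrm{Bin}(n_2-1,p^2)$ by a two-sided multiplicative Chernoff, bounds each $\Pr(E_v=0)$ via a lower-tail Chernoff on $|N(u,v)|\sim\mathrm{Bin}(n_1,p^2)$ followed by a union bound, and combines the two via the same contrapositive/union step you describe. Your remark about the dependence across $v$ through the $u$-row of $\bOmega$ and how the decoupling avoids it is exactly the point of the construction.
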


\begin{proof}
Define the following binary variables for all $v \in [n_2] \setminus \set{u}$. $E_v = 1$ if $|N(u,v)| \geq \beta$ and $0$ otherwise, $F_v = 1$ if $(i,v) \in \bOmega$ and $0$ otherwise, and $G_v = 1$ if $(j,v) \in \bOmega$ and $0$ otherwise. Observe that $|W_u^{i,j}(\beta)| = \sum_{v \neq u} E_v F_v G_v$. Fix $0 \leq a < b \leq n_2 -1$. Observe that if $\sum_{v \neq u} F_v G_v \in [a,b]$ and $\sum_{v \neq u} E_v = n_2 -1$, then $|W_u^{i,j}(\beta)| \in [a,b]$. Thus, the contrapositive implies that for any $0 \leq a < b \leq n_2 - 1$,
\begin{align*}
{\Pr}_{\bOmega}(|W_u^{i,j}(\beta)| \not \in [a,b]) & \leq {\Pr}_{\bOmega}(\sum_{v \neq u} F_v G_v \not \in [a,b] \cup \sum_{v \neq u} E_v < n_2 -1) \\
& \leq {\Pr}_{\bOmega}(\sum_{v \neq u} F_v G_v \not \in [a,b]) + {\Pr}_{\bOmega}(\sum_{v \neq u} E_v < n_2 -1).
\end{align*}
$\sum_{v \neq u} F_v G_v $ is a binomial random variable with parameters $n_2-1$ and $p^2$. Letting $a = (1-\lambda)(n_2 -1)p^2$ and $b = (1+\lambda)(n_2-1)p^2$, Chernoff's multiplicative bound (Proposition \ref{chernoff_mult}) yields that
\begin{align*}
{\Pr}_{\bOmega}(\sum_{v \neq u} F_v G_v  \not \in [(1-\lambda)(n_2 -1) p^2, (1+\lambda)(n_2-1)p^2]) \leq 2 \exp(-\frac{\lambda^2(n_2-1) p^2}{3}).
\end{align*}
Since $N(u,v)$ is binomial with parameters $n_1$ and $p^2$, by Chernoff's multiplicative bound (Proposition \ref{chernoff_mult}),
\begin{align*}
{\Pr}_{\bOmega}(E_v = 0) & = {\Pr}_{\bOmega}(N(u,v) \leq \beta) \\ 
& \leq {\Pr}_{\bOmega}(N(u,v) \leq \frac{n_1 p^2}{2}) \\
& \leq \exp(- \frac{n_1 p^2}{8}).
\end{align*}
Then, by the union bound,
\begin{align*}
{\Pr}_{\bOmega}(\sum_{v \neq u} E_v < n_2 -1) & = {\Pr}_{\bOmega}(\exists v \in [n_2] \setminus \set{u} : E_v = 0)  \\
& \leq (n_2 -1 )\exp(- \frac{n_1 p^2}{8}).
\end{align*}
\end{proof}

To convert the pairwise comparisons to a ranking, we use the Copeland ranking procedure (Algorithm \ref{Copeland} in the main document). Lemma \ref{copeland_lemma} establishes that if the output of the Pairwise-Rank algorithm is such that for all $i,j \in [n_1]$ and $u \in [n_2]$, $D_{u,i,j}^\epsilon$ does not occur, then applying the Copeland ranking procedure to $A$ (as defined in Multi-Rank) yields a $\widehat{\sigma}$ such that ${\dis}_{2\epsilon}(\widehat{\sigma}, H) = 0$.

\begin{lemma}
\label{copeland_lemma}
Let $\epsilon > 0$, $u \in [n_2]$, $A$ as defined in Multi-Rank (Algorithm \ref{multi_rank_algorithm}), and $\widehat{\sigma}_u = \text{Copeland}(A_{u,:,:})$. If for all $i \neq j \in [n_1]$ $f(x_i, y_u) > f(x_j, y_u) + \epsilon$ implies that $A_{u,i,j} = 1$, then for all $i \neq j \in [n_1]$ $h_u(x_i,y_u) > h_u(x_j, y_u)$ and $f(x_i, y_u) > f(x_j, y_u) + 2 \epsilon$ implies that $\widehat{\sigma}_u(i) > \widehat{\sigma}_u(j)$.
\end{lemma}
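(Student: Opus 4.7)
The plan is to compare the Copeland scores $I_i := \sum_{k \neq i} A_{u,i,k}$ and $I_j := \sum_{k \neq j} A_{u,j,k}$ under the hypothesis $f(x_i,y_u) > f(x_j,y_u) + 2\epsilon$, and to show that $I_i > I_j$ strictly. Since Copeland orders items in decreasing order of $I$, and the convention dictated by the disagreement functional $\dis_\epsilon$ is that larger $\widehat{\sigma}_u$ values correspond to items appearing earlier (i.e.\ more preferred) in this order, the strict inequality $I_i > I_j$ will yield $\widehat{\sigma}_u(i) > \widehat{\sigma}_u(j)$, as desired. The $h_u$-hypothesis is not used in the ranking step itself; it appears because the conclusion is only of interest when $h_u$ itself disagrees on $(i,j)$.

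First, since $f(x_i,y_u) > f(x_j,y_u) + 2\epsilon > f(x_j,y_u) + \epsilon$, the hypothesis of the lemma applied to the ordered pair $(i,j)$ gives $A_{u,i,j} = 1$, and by construction of Multi-Rank $A_{u,j,i} = 1 - A_{u,i,j} = 0$. Writing $S_i := \{k \neq i : A_{u,i,k}=1\}$ and $S_j := \{k \neq j : A_{u,j,k}=1\}$ so that $I_i = |S_i|$ and $I_j = |S_j|$, this says $j \in S_i$ while $i \notin S_j$.

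The key step is to show $S_j \subseteq S_i \setminus \{j\}$. Fix any $k \in S_j$; then $k \neq i,j$ and $A_{u,j,k}=1$. If one had $f(x_k,y_u) \geq f(x_j,y_u) + \epsilon$, the hypothesis applied to $(k,j)$ would force $A_{u,k,j}=1$, hence $A_{u,j,k}=0$, contradicting $k \in S_j$. Therefore $f(x_k,y_u) < f(x_j,y_u) + \epsilon$, which combined with $f(x_i,y_u) > f(x_j,y_u) + 2\epsilon$ yields $f(x_i,y_u) > f(x_k,y_u) + \epsilon$; the hypothesis applied to $(i,k)$ then gives $A_{u,i,k}=1$, i.e.\ $k \in S_i \setminus \{j\}$. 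Combining this with $j \in S_i \setminus S_j$, we obtain $I_j = |S_j| \leq |S_i \setminus \{j\}| = |S_i| - 1 < I_i$, which gives the desired conclusion.

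This is essentially a clean counting argument built around a three-way case split on where $f(x_k,y_u)$ sits relative to the threshold $f(x_j,y_u) + \epsilon$; there is no real technical obstacle. The only subtlety worth double-checking is the direction of the Copeland ordering, i.e.\ that ``decreasing order of $I_j$'' is interpreted so that a larger Copeland score corresponds to a larger value of $\widehat{\sigma}_u$, which is the convention forced by the form of the disagreement indicator $(h_u(x_i,y_u) - h_u(x_j,y_u))(\sigma(i,u)-\sigma(j,u)) < 0$ in the definition of $\dis_\epsilon$.
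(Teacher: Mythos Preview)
Your proof is correct and follows essentially the same approach as the paper's: both establish that $A_{u,j,k}=1$ for $k\neq i,j$ implies $A_{u,i,k}=1$, and combine this with $A_{u,i,j}=1$, $A_{u,j,i}=0$ to get $I_j \leq I_i - 1 < I_i$. One tiny inequality slip: from $A_{u,k,j}=0$ you can only conclude $f(x_k,y_u) \leq f(x_j,y_u)+\epsilon$ (not strict), since the hypothesis is stated for strict $>$; this is harmless because $f(x_i,y_u) > f(x_j,y_u)+2\epsilon \geq f(x_k,y_u)+\epsilon$ still gives the strict inequality needed to apply the hypothesis to $(i,k)$.
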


\begin{proof}
Let $i \neq j \in [n_1]$ such that $h_u(x_i,y_u) > h_u(x_j, y_u)$ and $f(x_i, y_u) > f(x_j, y_u) + 2 \epsilon$. Let $l \in [n_1]$ such that $l \neq i$ and $l \neq j$. We claim that if $A_{u,i,l} = 0$, then $A_{u,j,l} = 0$. If $A_{u,i,l} = 0$, then by the hypothesis $f(x_i,y_u) \leq f(x_l, y_u) + \epsilon$.  Then,
\begin{align*}
f(x_j, y_u) + 2\epsilon < f(x_i, y_u) \leq f(x_l,y_u) + \epsilon
\end{align*}
so that $f(x_j, y_u) + \epsilon < f(x_l,y_u)$. Then, by the hypothesis, $A_{u,j,l} = 0$, establishing the claim.

The contrapositive of the claim is that if $A_{u,j,l} = 1$, then $A_{u,i,l} = 1$. Then, 
\begin{align*}
I_j = \sum_{l = 1, l \neq j}^{n_1} A_{u,j,l} = \sum_{l = 1, l \not \in \set{ j, i}}^{n_1} A_{u,j,l}  \leq \sum_{l = 1, l \not \in \set{ j, i}}^{n_1} A_{u,i,l} = I_i - 1 < I_i
\end{align*}
so that $\widehat{\sigma}_u(i) > \widehat{\sigma}_u(j)$.
\end{proof}

Recall the definition of our problem-specific constants: $\tau(\epsilon) = \inf_{y_0 \in \sY}\Pr_{\by_u}(d_{\sY}(y_0, \by_u) \leq \epsilon)$, $\kappa(\epsilon) = \inf_{y_0 \in \sY} \Pr_{\by_u}(d_{\sY}(y_0, \by_u) > \epsilon)$, and $\gamma(\epsilon) = \inf_{z \in [-N,N]} \sP_{\set{\ba_{u,l}}_{l \in [L-1]}}(\exists l \in [L-1] :  d_\bbR(z,\ba_{u,l}) \leq \epsilon)$. Lemma \ref{kappa_tau_lemma} establishes that under our assumptions, for all $\epsilon > 0$, $\tau(\epsilon) > 0$, $\kappa(\epsilon) < 1$, and $\gamma(\epsilon) > 0$.

\begin{lemma}
\label{kappa_tau_lemma}
If there exists $\epsilon > 0$ such that $\tau(\epsilon) = 0$, or $\kappa(\epsilon) = 1$, then there exists a point $z \in \sY$ such that $\sP_{\sY}(B_\epsilon(z)) = 0$. Similarly, if there exists $\epsilon > 0$ such that $\gamma(\epsilon) = 0$, then there exists $z \in [-N,N]$ such that $\sP_l(B_\epsilon(z)) = 0$ for all $l \in [L-1]$.
\end{lemma}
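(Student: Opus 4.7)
The plan is to treat each of the three hypotheses in turn via a standard compactness-plus-triangle-inequality argument, exploiting that $\sY$ and $[-N,N]$ are compact metric spaces and that $\sP_\sY$ and $\sP_l$ are Borel measures.

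For the $\tau(\epsilon) = 0$ case, I would first use the definition of infimum to choose a sequence $\{y_n\} \subset \sY$ with $\Pr_{\by}(d_\sY(y_n,\by) \leq \epsilon) \to 0$. By compactness of $\sY$, pass to a subsequence $y_{n_k} \to y^*$. I claim $\sP_\sY(B_\epsilon(y^*)) = 0$. Fix any $\delta \in (0,\epsilon)$; for $k$ large enough, $d_\sY(y_{n_k}, y^*) < \delta$, so by the triangle inequality any $z \in \sY$ with $d_\sY(z, y^*) < \epsilon - \delta$ satisfies $d_\sY(z, y_{n_k}) \leq d_\sY(z, y^*) + d_\sY(y^*, y_{n_k}) < \epsilon$, i.e., $B_{\epsilon-\delta}(y^*) \subseteq \{z : d_\sY(z, y_{n_k}) \leq \epsilon\}$. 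Hence $\sP_\sY(B_{\epsilon-\delta}(y^*)) \leq \Pr(d_\sY(y_{n_k},\by) \leq \epsilon) \to 0$, so $\sP_\sY(B_{\epsilon-\delta}(y^*)) = 0$ for every $\delta \in (0,\epsilon)$. Since $B_\epsilon(y^*) = \bigcup_{m \geq 1} B_{\epsilon - 1/m}(y^*)$ is a monotone increasing union, continuity of measure from below gives $\sP_\sY(B_\epsilon(y^*)) = \lim_{m\to\infty}\sP_\sY(B_{\epsilon - 1/m}(y^*)) = 0$, as required.

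For the $\kappa(\epsilon) = 1$ case, the argument is immediate: since each probability is bounded above by $1$, the condition $\inf_{y_0 \in \sY}\Pr_{\by}(d_\sY(y_0,\by) > \epsilon) = 1$ forces $\Pr_{\by}(d_\sY(y_0,\by) > \epsilon) = 1$ for every $y_0 \in \sY$. Hence $\sP_\sY(\{z : d_\sY(y_0, z) \leq \epsilon\}) = 0$, and since $B_\epsilon(y_0) \subseteq \{z : d_\sY(y_0, z) \leq \epsilon\}$, we may pick any $z = y_0 \in \sY$ to satisfy $\sP_\sY(B_\epsilon(z)) = 0$.

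For the $\gamma(\epsilon) = 0$ case, I would mimic Case 1 on the compact interval $[-N,N]$. Pick $z_n \in [-N,N]$ with $\sP_{\{\ba_{u,l}\}}(\exists l : d_\bbR(z_n, \ba_{u,l}) \leq \epsilon) \to 0$, and extract a convergent subsequence $z_{n_k} \to z^*$. Fix $l \in [L-1]$. By inclusion of events and a one-line union bound, $\sP_l(\{z : d_\bbR(z_{n_k}, z) \leq \epsilon\}) \leq \sP_{\{\ba_{u,l'}\}}(\exists l' : d_\bbR(z_{n_k}, \ba_{u,l'}) \leq \epsilon) \to 0$. Repeating the triangle-inequality-plus-continuity argument of Case 1 gives $\sP_l(B_\epsilon(z^*)) = 0$. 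Since the choice of $l$ was arbitrary, the same $z^*$ works for every $l \in [L-1]$.

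There is no serious obstacle here — the only subtle point is converting a closed-ball bound (coming from the definitions of $\tau,\kappa,\gamma$, which use $\leq \epsilon$) to the desired open-ball conclusion, which is handled by shrinking the radius by $\delta > 0$ and then letting $\delta \to 0$ via continuity of measure.
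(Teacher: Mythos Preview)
Your proposal is correct and follows essentially the same compactness-plus-triangle-inequality strategy as the paper: extract a convergent subsequence and show the limit point has a null ball. The technical execution differs only in minor ways --- you use continuity of measure from below on the family $B_{\epsilon-\delta}(y^*)$ whereas the paper applies dominated convergence to the indicators of $B_\epsilon(z_{i_n})$, and your $\kappa$ case is handled more directly (observing that an infimum of probabilities equal to $1$ forces every term to be $1$) rather than reduced to the $\tau$ case via a sequence --- but these are equivalent routes to the same conclusion.
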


\begin{proof}
Let $\epsilon > 0$ and suppose $\tau(\epsilon) = 0$. Then, there exists a sequence of points $z_1, z_2, \ldots \in \sY$ such that for every $n$, $\sP_{\sY}(B_\epsilon(z_n)) \leq \frac{1}{n}$. Since $\sY$ is compact by assumption, there exists a convergent subsequence $z_{i_1}, z_{i_2}, \ldots$ to $z$. 

We claim that for all $z^\prime \in \sY$, there exists a sufficiently large $N$ such that $z^\prime \in B_\epsilon(z_{i_N})$ if and only if $z^\prime \in B_\epsilon(z)$. Fix $z^\prime \in B_\epsilon(z)$. Since $B_\epsilon(z)$ is open, there exists $\delta > 0$ such that $d_{\sY}(z,z^\prime) < \delta < \epsilon$. Let $N$ large enough such that $d_{\sY}(z, z_{i_N}) \leq \epsilon - \delta$. Then, by the triangle inequality,
\begin{align*}
d(z^\prime, z_{i_N}) \leq d(z^\prime, z) + d(z_{i_N}, z) \leq \delta + \epsilon - \delta = \epsilon
\end{align*}
so that $z^\prime \in B_\epsilon(z_{i_N})$. A similar argument shows the other direction of the claim. Since a probability space has finite measure, by the dominated convergence theorem,
\begin{align*}
\sP_{\sY}(B_\epsilon(z)) & = \lim_{n \longrightarrow \infty} \sP_{\sY}( B_\epsilon(z_{i_n})) \leq \lim_{n \longrightarrow \infty} \frac{1}{i_n} = 0.
\end{align*}

Next, suppose $\kappa(\epsilon) = 1$. Then, there exists a sequence of points $z_1, z_2, \ldots \in \sY$ such that for every $n$, $\sP_{\sY}(B_\epsilon(z_n)^c) \geq 1 - \frac{1}{n}$. Then, for every $n$, $\sP_{\sY}(B_\epsilon(z_n)) \leq \frac{1}{n}$   A similar argument from the $\tau(\cdot)$ case using the dominated convergence theorem shows that $\sP_{\sY}(B_\epsilon(z)) = 0$.

Since $[-N,N]$ is compact and $\gamma$ has a similar definition to $\tau$, the result for $\gamma(\cdot)$ follows by an argument similar to the one used for the $\tau(\cdot)$ case.
\end{proof}

%

%

\subsection{Lemmas for Continuous Rating Setting}

Lemma \ref{discerning_lemma_continuous} uses the notion of $r$-\diverse  \, to relate the distance between points in $\sY$ and to a lower bound on $\rho(y_u,y_v)$.

\begin{lemma}
\label{discerning_lemma_continuous}
Let $r$ be a positive nondecreasing function. If $y_u \in \sY$ is $r$-\diverse , then for any $\epsilon > 0$, if $y_v \in B_\epsilon(y_u)$, then $\rho(y_u,y_v) > 1 - r(\epsilon)$.
\end{lemma}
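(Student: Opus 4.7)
The plan is to chain together the $r$-discernment assumption on $y_u$, the Lipschitz property of $f$, and the monotonicity of the $g_u$'s to conclude that $h_u$ and $h_v$ agree on the ordering of a pair of random items whenever the underlying values $f(\bx_s, y_u)$ and $f(\bx_t, y_u)$ are sufficiently separated.

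First, I would introduce the event $E = \set{|f(\bx_s, y_u) - f(\bx_t, y_u)| > 2\epsilon}$ where $\bx_s, \bx_t \sim \sP_\sX$ independently. Since $y_u$ is $r$-\diverse, by definition $\Pr(E^c) < r(\epsilon)$, hence $\Pr(E) > 1 - r(\epsilon)$. The goal then reduces to showing that on $E$, the integrand in the definition of $\rho(y_u,y_v)$ equals $1$.

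Next, I would argue on the event $E$ directly. Without loss of generality suppose $f(\bx_s, y_u) > f(\bx_t, y_u) + 2\epsilon$. Using the Lipschitz property with respect to the max-metric together with the hypothesis $d_\sY(y_u, y_v) < \epsilon$, one has $|f(\bx_s, y_u) - f(\bx_s, y_v)| < \epsilon$ and $|f(\bx_t, y_u) - f(\bx_t, y_v)| < \epsilon$. Chaining the three inequalities yields
\begin{align*}
f(\bx_s, y_v) > f(\bx_s, y_u) - \epsilon > f(\bx_t, y_u) + \epsilon > f(\bx_t, y_v).
\end{align*}
Applying the nondecreasing maps $g_u$ and $g_v$ gives $h_u(\bx_s, y_u) \geq h_u(\bx_t, y_u)$ and $h_v(\bx_s, y_v) \geq h_v(\bx_t, y_v)$, so the product of these two differences is nonnegative. (The symmetric case $f(\bx_t, y_u) > f(\bx_s, y_u) + 2\epsilon$ is handled identically.) Therefore on $E$ the event defining $\rho(y_u, y_v)$ holds, giving $\rho(y_u, y_v) \geq \Pr(E) > 1 - r(\epsilon)$.

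There is no real obstacle here; the only subtlety to watch is that the Lipschitz inequality for $f$ is stated with respect to the max-metric on $\sX \times \sY$, so fixing the $\sX$-coordinate and varying only the $\sY$-coordinate yields the bound $d_\sY(y_u, y_v)$ cleanly, which is strictly less than $\epsilon$ by the assumption $y_v \in B_\epsilon(y_u)$. The strict inequality propagates through and produces the strict bound $\rho(y_u,y_v) > 1 - r(\epsilon)$ rather than a weak one.
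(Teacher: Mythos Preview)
Your proposal is correct and follows essentially the same approach as the paper's own proof: introduce the event $E = \{|f(\bx_s,y_u) - f(\bx_t,y_u)| > 2\epsilon\}$, use the Lipschitz property of $f$ together with $d_\sY(y_u,y_v) < \epsilon$ to show the ordering at $y_v$ matches that at $y_u$ on $E$, apply monotonicity of the $g$'s, and conclude $\rho(y_u,y_v) \geq \Pr(E) > 1 - r(\epsilon)$ via the $r$-discerning hypothesis. The only cosmetic difference is that the paper fixes the WLOG direction in terms of $h_u$ rather than $f$, which is equivalent here.
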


\begin{proof}
Suppose that $d(y_u, y_v) \leq \epsilon$. Suppose that $\bx_i = x_i$ and $\bx_j = x_j$ such that $|f(x_i, y_u) - f(x_j, y_u)| > 2 \epsilon$ and without loss of generality suppose that $h_u(x_i, y_u) \geq h_u(x_j, y_u)$. Then, since $f$ is Lipschitz,
\begin{align*}
f(x_i, y_v) \geq f(x_i, y_u) - \epsilon > f(x_j, y_u) + \epsilon \geq f(x_j, y_v).
\end{align*}
Hence, $h_{v}(x_i, y_v) \geq h_v(x_j, y_v)$. Thus,
\begin{align*}
\rho(y_u, y_v) \geq {\Pr}_{\bx_i, \bx_j}(|f(\bx_i, y_u) - f(\bx_j, y_u)| > 2 \epsilon) > 1 - r(\epsilon),
\end{align*}
where the last inequality follows from the hypothesis that $y_u$ is $r$-\diverse . Thus, we conclude the result.
\end{proof}

Lemma \ref{good_neighbor} establishes that if $S \subset [n_2] \setminus \set{u}$ is a large enough set, then with high probability there is at least one element $\by_v$ in $S$ that tends to agree with $\by_u$. 

\begin{lemma}
\label{good_neighbor}
Let $r$ be a positive non-decreasing function and suppose that almost every $y \in \sY$ is $r$-\diverse . Let $S \subset [n_2] \setminus \set{u}$. Then, $\forall \epsilon > 0$,
\begin{align*}
{\Pr}_{\by_v, \by_u}( \max_{v \in [S]} \rho(\by_v, \by_u) \leq 1 - r(\epsilon)) \leq [1 - \tau(\epsilon)]^{|S|}.
\end{align*}
\end{lemma}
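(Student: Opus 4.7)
The plan is to condition on $\by_u$ and reduce the statement to a geometric fact about whether any $\by_v$ falls into the ball $B_\epsilon(\by_u)$, then exploit Lemma \ref{discerning_lemma_continuous} to pass from ``close in $d_\sY$'' to ``large $\rho$''.

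First, since by hypothesis almost every $y \in \sY$ is $r$-\diverse, the set $\sY_r \subset \sY$ of $r$-\diverse \, points satisfies $\sP_\sY(\sY_r) = 1$, so $\by_u \in \sY_r$ almost surely and we may work on this full-measure event. On this event, Lemma \ref{discerning_lemma_continuous} applies to $\by_u$: for any $v \in S$, if $\by_v \in B_\epsilon(\by_u)$ then $\rho(\by_u, \by_v) > 1 - r(\epsilon)$. Contrapositively, the event $\{\max_{v \in S} \rho(\by_u, \by_v) \leq 1 - r(\epsilon)\}$ forces $\by_v \notin B_\epsilon(\by_u)$ for every $v \in S$.

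Next I would condition on $\by_u = y_u$. By the independence assumption in the nonparametric model, $\{\by_v\}_{v \in S}$ is i.i.d.\ from $\sP_\sY$ and independent of $\by_u$, so
\begin{align*}
\Pr\bigl(\forall v \in S,\ \by_v \notin B_\epsilon(y_u)\ \big|\ \by_u = y_u\bigr) = \bigl(1 - \sP_\sY(B_\epsilon(y_u))\bigr)^{|S|} \leq \bigl(1 - \tau(\epsilon)\bigr)^{|S|},
\end{align*}
where the inequality uses the definition $\tau(\epsilon) = \inf_{y_0 \in \sY} \sP_\sY(B_\epsilon(y_0))$ together with the fact that $\sP_\sY(B_\epsilon(y)) \geq \Pr_{\by \sim \sP_\sY}(d_\sY(y,\by) \leq \epsilon) \geq \tau(\epsilon)$ (after a minor check that the open ball is contained in the closed ball). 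Integrating this conditional bound over $\by_u \in \sY_r$ and noting that the excluded set has $\sP_\sY$-measure zero yields the claimed inequality.

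There is essentially no difficult step; the only subtlety is the bookkeeping regarding the null set where $\by_u$ fails to be $r$-\diverse, which is harmless because it contributes zero probability mass, and the minor point that Lemma \ref{discerning_lemma_continuous} is stated for the open ball $B_\epsilon(y_u)$ while $\tau(\epsilon)$ is defined via the closed ball $\{d_\sY(y_0,\by) \leq \epsilon\}$; the monotonicity $\Pr_{\by}(d_\sY(y_0,\by) \leq \epsilon) \leq \sP_\sY(B_{\epsilon'}(y_0))$ for any $\epsilon' > \epsilon$, or equivalently a direct argument on closed balls, bridges the gap without loss.
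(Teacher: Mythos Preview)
Your proposal is correct and follows essentially the same route as the paper: invoke Lemma~\ref{discerning_lemma_continuous} to contain the event $\{\max_{v\in S}\rho(\by_u,\by_v)\le 1-r(\epsilon)\}$ inside $\{\forall v\in S,\ d_\sY(\by_u,\by_v)>\epsilon\}$, condition on $\by_u$, use independence of the $\by_v$'s, and bound each factor via the definition of $\tau(\epsilon)$. If anything, your ordering---factorize conditionally on $\by_u$ and then integrate---is slightly tidier than the paper's presentation, which writes the factorization $\Pr(\max_v\cdots)=\Pr(\rho(\by_v,\by_u)\le 1-r(\epsilon))^{|S|}$ unconditionally even though the events share $\by_u$.
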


\begin{proof}
Fix $\by_u = y_u \in \sY$ that is $r$-\diverse .  By Lemma \ref{discerning_lemma_continuous}, if $\by_v = y_v$ is such that $d(y_u, y_v) \leq \epsilon$, then $\rho(y_u, y_v) > 1 - r(\epsilon)$. Hence,
\begin{align*}
{\Pr}_{\by_v}(d(y_u, \by_v)) \leq \epsilon) \leq {\Pr}_{\by_v}( \rho(y_u,\by_v) > 1-r(\epsilon)). 
\end{align*}
Then, 
\begin{align*}
{\Pr}_{\by_v}( \rho(y_u,\by_v) \leq 1-r(\epsilon)) \leq {\Pr}_{\by_v}(d(y_u, \by_v)) > \epsilon) = 1 - {\Pr}_{\by_v}(d(y_u, \by_v)) \leq \epsilon) \leq 1 - \tau(\epsilon).
\end{align*}
The RHS does not depend on $y_u$, and $\by_v, \by_u$ are independent and almost every $y \in \sY$ is $r$-\diverse, so we can take the expectation with respect to $\by_u$ to obtain
\begin{align}
{\Pr}_{\by_v, \by_u}(\rho(\by_v, \by_u) \leq 1 - r(\epsilon)) \leq 1 - \tau(\epsilon). \label{rho_bound_good_neighbor}
\end{align}

Finally,
\begin{align*}
{\Pr}_{\set{\by_v}_{v \in S}, \by_u}( \max_{v \in [S]} \rho(\by_v, \by_u) \leq 1 - r(\epsilon)) & = {\Pr}_{\by_v, \by_u}(\rho(\by_v, \by_u) \leq 1 - r(\epsilon))^{|S|} \\
& \leq [1 - \tau(\epsilon)]^{|S|},
\end{align*}
where the first equality follows from the independence of $\by_1, \ldots, \by_{n_2}$ and the inequality follows from line \eqref{rho_bound_good_neighbor}.
\end{proof}

Lemma \ref{R_concentration} establishes that $R_{\bu, \bv}$ concentrates around $\rho(\by_u, \by_v)$.

\begin{lemma}
\label{R_concentration}
Let $u \neq v \in [n_2]$, $i \neq j \in [n_1]$, $\eta > 0$, $\beta \geq 2$, and $W^{i,j}_u(\beta)$ be defined as in Lemma \ref{sufficient_overlap}. Then,
\begin{align*}
{\Pr}(|R_{\bu, \bv} - \rho(\by_u, \by_v)| > \frac{\eta}{4} | v \in W^{i,j}_u(\beta)) \leq 2 \exp(- \frac{\eta^2}{4} \left \lfloor\frac{\beta}{2} \right \rfloor ).
\end{align*}
\end{lemma}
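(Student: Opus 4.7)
The plan is to reduce the problem to a Hoeffding-type concentration inequality for a sum of independent Bernoulli random variables, conditional on the feature pair $(\by_u,\by_v)$ and the observation set $\bOmega$.

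First, I would fix $\by_u = y_u$, $\by_v = y_v$ and condition additionally on $\bOmega$ subject to $v \in W^{i,j}_u(\beta)$, so in particular $|N(u,v)| \geq \beta$. Under this conditioning, the set $I(u,v)$ is deterministic, and by construction the index pairs $(s,t) \in I(u,v)$ are pairwise disjoint as subsets of $[n_1]$. Since the features $\bx_1,\dots,\bx_{n_1}$ are i.i.d.\ from $\sP_\sX$ and independent of $\bOmega$, the random variables
\[
Z_{s,t} := \mathbf{1}\{(h_u(\bx_s, y_u) - h_u(\bx_t, y_u))(h_v(\bx_s, y_v) - h_v(\bx_t, y_v)) \geq 0\}, \qquad (s,t) \in I(u,v),
\]
are mutually independent Bernoulli variables (disjoint index sets mean they depend on disjoint subcollections of the i.i.d.\ $\bx_i$'s). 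Moreover, each has mean exactly $\rho(y_u,y_v)$ by the definition of $\rho$.

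Next, since conditioning on $v \in W^{i,j}_u(\beta)$ forces $|N(u,v)| \geq \beta$, the non-overlapping pair selection gives $|I(u,v)| \geq \lfloor \beta/2 \rfloor$. Writing $R_{u,v}$ as the empirical mean of the independent $[0,1]$-valued random variables $\{Z_{s,t}\}_{(s,t) \in I(u,v)}$, Hoeffding's inequality gives
\[
\Pr\bigl(|R_{u,v} - \rho(y_u,y_v)| > \tfrac{\eta}{4} \,\big|\, \by_u=y_u,\by_v=y_v, \bOmega\bigr)
\leq 2\exp\!\left(-\tfrac{\eta^2}{4}\,\lfloor \beta/2 \rfloor\right),
\]
once the conditioning ensures $|I(u,v)| \geq \lfloor \beta/2 \rfloor$ (the exponent absorbs the factor of $2$ from the standard Hoeffding bound into the $\eta^2/4$ coefficient via a looser constant).

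Finally, since the bound on the right-hand side does not depend on $y_u$, $y_v$, or the particular realization of $\bOmega$ (only on the conditioning event), I would integrate out $\by_u,\by_v$ and the remaining randomness in $\bOmega$ against the event $\{v \in W^{i,j}_u(\beta)\}$ to obtain the unconditional bound claimed in the lemma. The main subtlety (and really the only nontrivial content) is the independence argument: one must check that the non-overlapping construction of $I(u,v)$ is precisely what makes the summands independent, so that Hoeffding applies and $\beta$ enters linearly in the exponent rather than being diluted by correlations.
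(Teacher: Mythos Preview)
Your proposal is correct and follows essentially the same argument as the paper: condition on $\by_u,\by_v$ and on $I(u,v)$ (equivalently on $\bOmega$), use the disjointness of the index pairs in $I(u,v)$ to obtain independence of the summands, apply Hoeffding, bound $|I(u,v)| \geq \lfloor \beta/2\rfloor$ from the event $v \in W^{i,j}_u(\beta)$, and then integrate out. The paper's proof is slightly terser but structurally identical, and your remark that the non-overlapping pair construction is precisely what enables the independence step is exactly the point the paper is exploiting.
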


\begin{proof}
Fix $\by_u = y_u$ and $\by_v = y_v$. Recall that if $I(u,v) \neq \emptyset$, then
\begin{align*}
R_{u,v} = \frac{1}{ |I(u,v)| } \sum_{(s,t) \in I(u,v)} \ind{(h_u(\bx_s,y_u) - h_u(\bx_t, y_u)) (h_v(\bx_s, y_v) - h_v(\bx_t, y_v)) \geq 0}.
\end{align*}
Since $I(u,v)$ consists of pairs of indices that do not overlap, conditioned on $\by_u = y_u, \by_v = y_v$, and any nonempty $I(u,v)$, $\set{ \ind{(h_u(\bx_s,y_u) - h_u(\bx_t, y_u)) (h_v(\bx_s, y_v) - h_v(\bx_t, y_v)) \geq 0}:  (s,t) \in I(u,v)}$ is a set of independent random variables. Further, each has mean $\rho(y_u,y_v)$. Thus, by Chernoff's bound (Proposition \ref{hoeffding}),
\begin{align*}
{\Pr}(|R_{u, v} - \rho(y_u, y_v)| > \frac{\eta}{4} | \by_u = y_u, \by_v = y_v, I(u,v)) \leq \exp(-\frac{\eta^2}{2} |I(u, v)|)
\end{align*}
When $v \in v \in W^{i,j}_u(\beta)$, $|I(u,v)| \geq \left \lfloor\frac{\beta}{2} \right \rfloor $. Since the above bound holds for all $y_u, y_v$, it follows that
\begin{align*}
{\Pr}(|R_{\bu, \bv} - \rho(\by_u, \by_v)| > \frac{\eta}{4} | v \in W^{i,j}_u(\beta)) \leq 2 \exp(- \frac{\eta^2}{4} \left \lfloor\frac{\beta}{2} \right \rfloor ).
\end{align*}
\end{proof}

Lemma \ref{error_bound_continuous} establishes that conditional on $A,B,C$ (defined in the proof of Theorem \ref{pairwise_rank_thm}), the event $D_{u,i,j}^\epsilon $ does not occur with probability $1$.

\begin{lemma}
\label{error_bound_continuous}
Under the setting described in Theorem \ref{pairwise_rank_thm}, let $u \in [n_2]$ and $i \neq j \in [n_1]$. Then, $\Pr(D_{u,i,j}^\epsilon | A,B,C) = 0$. 
\end{lemma}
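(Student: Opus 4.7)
The plan is to show that on the event $A\cap B\cap C$, together with the probability-one event that $\by_u$ is $(\frac{\epsilon}{2},\delta)$-\discern, the algorithm's top-$R$ neighbor must lie inside $B_{\epsilon/2}(\by_u)$ in the latent space, after which the Lipschitz/monotonicity structure forces a correct pairwise prediction. So first I would restrict to $\by_u$ being $(\frac{\epsilon}{2},\delta)$-\discern, which is safe since by hypothesis this holds for almost every $y$.

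The first step is to set up a witness. Event $A$ gives $W^{i,j}_u(\beta)\neq\emptyset$, so with $k=1$ Pairwise-Rank outputs $P_{v^*}$ (or a coin flip on a tie) for the single user $v^*\in\arg\max_{v\in W^{i,j}_u(\beta)} R_{u,v}$. Event $B$ produces a $v^\dagger\in W^{i,j}_u(\beta)$ with $\rho(\by_u,\by_{v^\dagger})\geq 1-\delta/2$, and $C$ then gives
\[
R_{u,v^\dagger}\;\geq\;\rho(\by_u,\by_{v^\dagger})-\tfrac{\delta}{4}\;\geq\;1-\tfrac{3\delta}{4}.
\]

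The second step, which is the key step, is to rule out "far" neighbors as argmax candidates. For any $w\in W^{i,j}_u(\beta)$ with $\by_w\notin B_{\epsilon/2}(\by_u)$, the $(\frac{\epsilon}{2},\delta)$-\discern \, property gives $\rho(\by_u,\by_w)<1-\delta$, so $C$ gives
\[
R_{u,w}\;\leq\;\rho(\by_u,\by_w)+\tfrac{\delta}{4}\;<\;1-\tfrac{3\delta}{4}\;\leq\;R_{u,v^\dagger}.
\]
Thus the argmax $v^*$ cannot be any such $w$, i.e.\ $\by_{v^*}\in B_{\epsilon/2}(\by_u)$; in particular the top of the sorted list is strictly above every "bad" candidate, so no tie involving a bad candidate is possible.

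The third step just closes the loop using $1$-Lipschitzness and strict monotonicity of $g_{v^*}$. Suppose $D^\epsilon_{u,i,j}$ occurs; WLOG $f(\bx_i,\by_u)>f(\bx_j,\by_u)+\epsilon$ while $\text{PR}=0$. By $d_\sY(\by_u,\by_{v^*})<\epsilon/2$ and Lipschitzness,
\[
f(\bx_i,\by_{v^*})\;\geq\;f(\bx_i,\by_u)-\tfrac{\epsilon}{2}\;>\;f(\bx_j,\by_u)+\tfrac{\epsilon}{2}\;\geq\;f(\bx_j,\by_{v^*}),
\]
and strict monotonicity of $g_{v^*}$ yields $h_{v^*}(\bx_i,\by_{v^*})>h_{v^*}(\bx_j,\by_{v^*})$, so $P_{v^*}=1$ and Pairwise-Rank returns $1$, contradicting $\text{PR}=0$. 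The symmetric case is identical. Hence $\Pr(D^\epsilon_{u,i,j}\mid A,B,C)=0$. The only substantive obstacle is the separation argument in the second step; once the concentration-based gap $R_{u,v^\dagger}-R_{u,w}>0$ is in hand, the Lipschitz/monotonicity finale is routine.
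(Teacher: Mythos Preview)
Your proposal is correct and follows essentially the same argument as the paper: use $B$ to obtain a witness with $\rho\geq 1-\delta/2$, use $(\epsilon/2,\delta)$-\discern\ plus the concentration event $C$ to show any $w$ outside $B_{\epsilon/2}(\by_u)$ has strictly smaller $R_{u,w}$, conclude the selected neighbor lies in $B_{\epsilon/2}(\by_u)$, and finish with Lipschitzness and strict monotonicity of $g_{v^*}$. The chain of inequalities $R_{u,w}\leq\rho+\delta/4<1-3\delta/4\leq\rho-\delta/4\leq R_{u,v^\dagger}$ and the Lipschitz step are identical to the paper's.
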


\begin{proof}
Define the events
\begin{align*}
E_1 & =  \set{f(\bx_i, \by_u) + \epsilon < f(\bx_j, \by_u)} \\
E_2 & =  \set{f(\bx_i, \by_u)  > f(\bx_j, \by_u)+ \epsilon}
\end{align*}
By the union bound and law of total probability,
\begin{align*}
\Pr(D_{u,i,j}^\epsilon | A,B,C) & \leq \Pr(\text{PR}(u,i,j,\beta,k) = 1  \cap E_1 | A,B,C) \\
& + \Pr(\text{PR}(u,i,j,\beta,k) = 0  \cap E_2 | A,B,C) \\
& \leq  \Pr(\text{PR}(u,i,j,\beta,k) = 1  | A,B,C, E_1) \\
& + \Pr(\text{PR}(u,i,j,\beta,k) = 0 | A,B,C, E_2).
\end{align*}
The argument for bounding each of these is similar and, thus, we bound the term $ \Pr(\text{PR}(u,i,j,\beta,k) = 1  | A,B,C, E_1)$.

Fix $\set{\by_v = y_v}_{v \in [n_2]}$ $r$-\diverse \, and $(\frac{\epsilon}{2}, \delta)$-\discern , $\set{\bx_s = x_s}_{s \in [n_1]}$, and $\bOmega = \Omega$ such that the event $A \cap B \cap C \cap E_1$ occurs. We claim that Pairwise-Rank puts $V = \set{v}$ (see Algorithm \ref{pairwise_rank} for definition of $V$) such that $y_v \in B_{\frac{\epsilon}{2}}(y_u)$. On the event $B $, there is $v \in W_u^{i,j}(\beta)$ with $\rho(y_u, y_v) \geq 1 - \frac{\delta}{2}$. Since $y_u$ is $(\frac{\epsilon}{2}, \delta)$-\discern , it follows that $y_v \in B_{\frac{\epsilon}{2}}(y_u)$. Suppose that $w \in W_u^{i,j}(\beta)$ such that $y_{w} \in B_{\frac{\epsilon}{2}}(y_u)^c$. Since $y_u$ is $(\frac{\epsilon}{2}, \delta)$-\discern, $\rho(y_w, y_u) < 1 - \delta$. Then,
\begin{align}
R_{w,u} & \leq \rho(y_w,y_u) + \frac{\delta}{4} \label{apply_C_1_error_bound_continuous} \\
&  < 1 - \frac{3}{4} \delta \nonumber \\
& \leq \rho(y_u, y_v) - \frac{\delta}{4} \nonumber  \\
& \leq R_{u,v} \label{apply_C_2_error_bound_continuous}
\end{align}
where lines \eqref{apply_C_1_error_bound_continuous} and \eqref{apply_C_2_error_bound_continuous} follow by event $C$ and $v,w \in W_u^{i,j}(\beta)$. Thus, the claim follows. Conditional on $E_1$, we have that $f(x_i, y_u) + \epsilon < f(x_j, y_u)$. Then, using the Lipschitzness of $f$,
\begin{align*}
f(x_i, y_v) \leq f(x_i, y_u) + \frac{\epsilon}{2} < f(x_j, y_u) - \frac{\epsilon}{2} \leq f(x_j, y_v).
\end{align*}
Since $g_v$ is strictly increasing by hypothesis, $h_v(x_i, y_v) < h_v(x_j, y_v)$. Thus, Pairwise-Rank with $k=1$ outputs $0$. Consequently, 
\begin{align*}
\Pr(\text{PR}(u,i,j,\beta,k) = 1  | A,B,C, E_1, \set{\by_v = y_v}_{v \in [n_2]} \set{\bx_s = x_s}_{s \in [n_1]}, \bOmega = \Omega) = 0
\end{align*}
Since almost every $y \in \sY$ is $r$-\diverse \, and $(\frac{\epsilon}{2}, \delta)$-\discern , taking the expectation wrt $\set{\by_v}_{v \in [n_2]}$, $\set{\bx_s }_{s \in [n_1]}$, $\bOmega$ on the set $A \cap B \cap C \cap E_1$ of the last equality gives the result.
\end{proof}

\subsection{Lemmas for Discrete Rating Setting}

Lemma \ref{discerning_lemma_discrete} is the analogoue of Lemma \ref{discerning_lemma_continuous} for the discrete case. The proof is very similar.

\begin{lemma}
\label{discerning_lemma_discrete}
Let $r$ be a positive non-decreasing function. If $y_u \in \sY$ is $r$-\diverse , then for any $\epsilon > 0$, if $y_v \in B_\epsilon(y_u)$, then $\rho^\prime(y_u,y_v) > 1 - r(\epsilon)$.
\end{lemma}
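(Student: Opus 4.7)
The plan is to mirror the proof of Lemma \ref{discerning_lemma_continuous} almost verbatim, observing that the only new ingredient is the randomness of $\bg_u, \bg_v \in \sG$, and that this randomness does not affect the sign-preservation argument because every $g \in \sG$ is nondecreasing deterministically.

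First, I would fix $y_u \in \sY$ that is $r$-\diverse, and fix $y_v \in B_\epsilon(y_u)$, so $d_\sY(y_u, y_v) \leq \epsilon$. Then condition on realizations $\bx_s = x_s$ and $\bx_t = x_t$ satisfying $|f(x_s, y_u) - f(x_t, y_u)| > 2\epsilon$, and without loss of generality suppose $f(x_s, y_u) > f(x_t, y_u)$. The Lipschitz condition on $f$ combined with $d_\sY(y_u, y_v) \leq \epsilon$ gives
\begin{align*}
f(x_s, y_v) \geq f(x_s, y_u) - \epsilon > f(x_t, y_u) + \epsilon \geq f(x_t, y_v).
\end{align*}
So on this event both $f(x_s, y_u) > f(x_t, y_u)$ and $f(x_s, y_v) > f(x_t, y_v)$.

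Second, since every $g \in \sG$ is nondecreasing, this inequality forces $\bg_u(f(x_s, y_u)) - \bg_u(f(x_t, y_u)) \geq 0$ and $\bg_v(f(x_s, y_v)) - \bg_v(f(x_t, y_v)) \geq 0$ for every realization of the random thresholds, so their product is $\geq 0$ almost surely with respect to $\bg_u, \bg_v$. Therefore
\begin{align*}
\rho^\prime(y_u, y_v) \geq {\Pr}_{\bx_s, \bx_t}\bigl(|f(\bx_s, y_u) - f(\bx_t, y_u)| > 2\epsilon\bigr) > 1 - r(\epsilon),
\end{align*}
where the last strict inequality uses that $y_u$ is $r$-\diverse \, (applied with the $\epsilon$ in the definition). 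This gives the claim.

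I do not anticipate any real obstacle: the nondecreasing property of elements of $\sG$ is structural and does not interact with the randomness of the thresholds, so the continuous-case argument transports without modification. The only subtlety is bookkeeping -- the expectation in the definition of $\rho^\prime$ is over $\bg_u, \bg_v, \bx_s, \bx_t$, and the lower bound above is obtained by conditioning on the high-probability event for $(\bx_s, \bx_t)$ and then using the almost-sure sign-preservation for $(\bg_u, \bg_v)$.
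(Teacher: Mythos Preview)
Your proposal is correct and follows essentially the same approach as the paper's own proof: both fix $y_v\in B_\epsilon(y_u)$, use the Lipschitz property to show that the event $\{|f(\bx_s,y_u)-f(\bx_t,y_u)|>2\epsilon\}$ forces the signs of the $f$-differences at $y_u$ and $y_v$ to agree, then invoke monotonicity of every $g\in\sG$ to conclude that the product in the definition of $\rho'$ is nonnegative, and finish with the $r$-\diverse\ bound. The only cosmetic difference is that the paper first conditions on $\bg_u=g_u,\bg_v=g_v$ and then integrates out, whereas you observe directly that the sign-preservation holds for every realization of $(\bg_u,\bg_v)$; these are the same argument.
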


\begin{proof}
Suppose $y_v$ is such that $d(y_u, y_v) \leq \epsilon$. We claim that under this assumption
\begin{align}
\rho^\prime(y_u, y_v) \geq {\Pr}_{\bx_i, \bx_j }(|f(\bx_i, y_u) - f(\bx_j, y_u)| > 2 \epsilon). \label{claim_line_metric_lemma_discrete}
\end{align}
Fix $\bg_u = g_u$ and $\bg_v = g_v$, and $\bx_i = x_i$ and $\bx_j = x_j$ such that $|f(x_i, y_u) - f(x_j, y_u)| > 2 \epsilon$. Without loss of generality, suppose that $h_u(x_i, y_u) \geq h_u(x_j, y_u)$. Then, since $f$ is Lipschitz,
\begin{align*}
f(x_i, y_v) \geq f(x_i, y_u) - \epsilon > f(x_j, y_u) + \epsilon \geq f(x_j, y_v).
\end{align*}
Hence, $h_{v}(x_i, y_v) \geq h_v(x_j, y_v)$, establishing that
\begin{align}
& \rho^\prime(y_u, y_v | \bg_u = g_u, \bg_v = g_v) \nonumber \\
& \qquad =  {\Pr}_{\bx_i, \bx_j}([g_u(f(\bx_i, y_u)) - g_u(f(\bx_j, y_u))][g_v(f(\bx_i, y_v)) - g_u(f(\bx_j, y_v))] \geq 0) \nonumber \\
& \qquad \geq  {\Pr}_{\bx_i, \bx_j}(|f(\bx_i, y_u) - f(\bx_j, y_u)| > 2 \epsilon). \label{rho_line_metric_lemma_discrete}
\end{align}
Since $\set{\bg_u, \bg_v, \bx_i, \bx_j}$ are independent, taking the expectation with respect to $\bg_u$ and $\bg_v$ in line \eqref{rho_line_metric_lemma_discrete} establishes line \eqref{claim_line_metric_lemma_discrete}.
Thus,
\begin{align*}
\rho^\prime(y_u, y_v) \geq {\Pr}_{\bx_i, \bx_j}(|f(\bx_i, y_u) - f(\bx_j, y_u)| > 2 \epsilon) > 1 - r(\epsilon),
\end{align*}
where the last inequality follows from the hypothesis that $y_u$ is $r$-\diverse .
\end{proof}

Lemma \ref{good_neighbor_k} is the analogoue of Lemma \ref{good_neighbor} for the discrete case.
\begin{lemma}
\label{good_neighbor_k}
Let $\epsilon, \delta > 0$. Let $r$ be a positive nondecreasing function such that $r(\epsilon) \geq \delta$ and $r(\eta) < \delta$ for some $\eta > 0$. Suppose that almost every $y \in \sY$ is $(\epsilon, \delta)$-\discern \, and $r$-\diverse . Let $R_2 \geq R_1 \geq 0$ be constants. Then, for any $S \subset [n_2]$ depending on $\bOmega$ and $k \leq R_1$,
\begin{align*}
& {\Pr}_{\by_v, \by_u}( {\max}^{(k)}_{v \in [S]} \rho^\prime(\by_v, \by_u) \leq 1 - r(\eta)\, | R_1 \leq |S| \leq R_2 ) \\
& \qquad \leq  \exp((1  - \kappa(\epsilon) + \tau(\eta)+ \log(R_2))k - k\log(k)  - \tau(\eta)R_1) | R_1 \leq |S| \leq R_2).
\end{align*}
\end{lemma}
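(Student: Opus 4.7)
The plan is to reduce the event in question to counting the $v \in S$ with $\by_v$ close to $\by_u$ in the latent space, and then apply a binomial lower-tail bound. First, Lemma \ref{discerning_lemma_discrete} implies that for almost every $y_u \in \sY$, any $y_v \in B_\eta(y_u)$ satisfies $\rho^\prime(y_u, y_v) > 1 - r(\eta)$. Hence on the event $\{{\max}^{(k)}_{v \in S} \rho^\prime(\by_v, \by_u) \leq 1 - r(\eta)\}$, strictly fewer than $k$ of the $\by_v$'s ($v \in S$) can lie in $B_\eta(\by_u)$. Setting $N_\eta := |\{v \in S : \by_v \in B_\eta(\by_u)\}|$, it therefore suffices to bound $\Pr(N_\eta < k \mid R_1 \leq |S| \leq R_2)$.

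Second, I would condition on $\bOmega$ (which determines $S$) and on $\by_u$. Since $\bOmega$ is independent of $\{\by_v\}$ and the $\by_v$ are i.i.d.\ $\sP_\sY$, under this conditioning $N_\eta$ is binomial with $|S|$ trials and parameter $p(\by_u) := \sP_\sY(B_\eta(\by_u)) \geq \tau(\eta)$. By coupling, $\Pr(N_\eta < k \mid \by_u, |S| = m) \leq \Pr(\text{Bin}(m, \tau(\eta)) < k)$, and this upper bound is nonincreasing in $m$, so we may take $m = R_1$.

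Third, I would apply the largest-term inequality $\Pr(\text{Bin}(R_1, \tau(\eta)) < k) \leq k \binom{R_1}{k-1}(1 - \tau(\eta))^{R_1 - k + 1}$, combined with $\binom{R_1}{k-1} \leq \binom{R_2}{k}$, the Stirling-type estimate $\binom{R_2}{k} \leq \exp(k + k\log(R_2/k))$ (from $k! \geq (k/e)^k$), and $(1-x) \leq e^{-x}$. Assembling these produces a bound of the desired form $\exp([1 + \tau(\eta) + \log R_2]\, k - k \log k - \tau(\eta) R_1)$.

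The obstacle I expect to be most delicate is recovering the additional $-\kappa(\epsilon) k$ factor inside the exponent. This should come from a refined decomposition: split $v \in S$ according to whether $\by_v$ lies in $B_\epsilon(\by_u)$ or not. By the $(\epsilon, \delta)$-\discern\, assumption and the fact that $r(\eta) < \delta$, every $v$ with $\by_v \in B_\epsilon(\by_u)^c$ satisfies $\rho^\prime(\by_u, \by_v) < 1 - \delta < 1 - r(\eta)$, and hence cannot be among the top-$k$. Since $\by_v \in B_\epsilon(\by_u)^c$ occurs with probability at least $\kappa(\epsilon)$, one can effectively restrict the binomial argument to the ``interior'' subpopulation of expected size at most $|S|(1 - \kappa(\epsilon))$; carrying the factor $1 - \kappa(\epsilon)$ through the Stirling estimate is what produces the extra $-\kappa(\epsilon) k$ correction. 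Matching the remaining numerical constants exactly is then a bookkeeping exercise.
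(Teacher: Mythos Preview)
Your first two steps are correct and parallel the paper: reduce the event to a binomial lower-tail count (conditional on $\by_u$ and $\bOmega$), then bound the tail with a largest-term-plus-Stirling argument. The place where the proposal is genuinely incomplete is your recovery of the $-\kappa(\epsilon)k$ term. Your ``refined decomposition'' (restricting to the interior subpopulation $T=\{v:\by_v\in B_\epsilon(\by_u)\}$) does not work as stated: the observation that outsiders have $\rho'<1-r(\eta)$ tells you the successes live in $T$, but $|T|$ is random and can be small, so conditioning on $T$ and then running the binomial argument on $|T|$ trials does not directly produce an upper bound of the desired shape. The direction of the monotonicity is against you here.

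The paper's route is cleaner and avoids this detour. It does \emph{not} reduce to $N_\eta$; instead it works directly with the Bernoulli indicators $Z_v\coloneqq\ind{\rho'(\by_v,y_u)>1-r(\eta)}$, conditional on $\by_u=y_u$. Writing $1-C_\eta(y_u)\coloneqq\Pr(Z_v=1)$, the two key claims are (i) $1-C_\eta(y_u)\geq\tau(\eta)$, from Lemma~\ref{discerning_lemma_discrete} exactly as you argue, and (ii) $1-C_\eta(y_u)\leq 1-\kappa(\epsilon)$, because the $(\epsilon,\delta)$-\discern\ property together with $r(\eta)<\delta$ forces $\{\rho'>1-r(\eta)\}\subseteq\{\by_v\in B_\epsilon(y_u)\}$. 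The exact binomial tail is then
\[
\sum_{l=0}^{k-1}\binom{|S|}{l}\,C_\eta^{|S|-l}(1-C_\eta)^{l}
\;\leq\;
k\max_{l<k}\binom{|S|}{l}(1-\tau(\eta))^{|S|-l}(1-\kappa(\epsilon))^{l},
\]
and the factor $(1-\kappa(\epsilon))^{l}\leq e^{-\kappa(\epsilon)l}$ is exactly what yields $-\kappa(\epsilon)k$ after maximizing over $l$. So the $\kappa(\epsilon)$ term is not a correction bolted on afterwards; it comes from bounding the \emph{success} probability from above, simultaneously with bounding the failure probability from above.

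As an aside, your $N_\eta$ reduction can be salvaged more simply than you propose: since $r(\eta)<\delta\leq r(\epsilon)$ forces $\eta<\epsilon$, the success probability $p(y_u)=\sP_\sY(B_\eta(y_u))$ already satisfies $p(y_u)\leq\sP_\sY(B_\epsilon(y_u))\leq 1-\kappa(\epsilon)$, and then $p^l\leq(1-\kappa(\epsilon))^l$ gives the missing factor directly in your binomial tail---no decomposition needed. Finally, two bookkeeping cautions in your third step: dropping the $\tau(\eta)^{k-1}$ factor is fine (it is $\leq 1$), but you should keep the $\max_{l<k}$ explicit rather than asserting the maximum sits at $l=k-1$, and the inequality $\binom{R_1}{k-1}\leq\binom{R_2}{k}$ is not generally valid; the paper instead uses $\binom{|S|}{l}\leq(|S|e/l)^l$ and then bounds $|S|\leq R_2$ inside the logarithm.
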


\begin{proof}
Let $C_{\eta} = {\Pr}_{\by_v, \by_u}(\rho^\prime(\by_v, \by_u) \leq 1 - r(\eta))$. 

\textbf{Claim: } $C_\eta  \leq 1 - \tau(\eta)$. 

Fix $\by_u = y_u \in \sY$ $r$-\diverse .  By Lemma \ref{discerning_lemma_discrete}, if $\by_v = y_v$ is such that $d(y_u, y_v) \leq \epsilon$, then $\rho^\prime(y_u, y_v) > 1 - r(\epsilon)$. Hence,
\begin{align*}
{\Pr}_{\by_v}(d(y_u, \by_v)) \leq \epsilon) \leq {\Pr}_{\by_v}( \rho^\prime(y_u,\by_v) > 1-r(\epsilon)). 
\end{align*}
Then, 
\begin{align*}
{\Pr}_{\by_v}( \rho^\prime(y_u,\by_v) \leq 1-r(\epsilon)) \leq {\Pr}_{\by_v }(d(y_u, \by_v)) > \epsilon) = 1 - {\Pr}_{\by_v }(d(y_u, \by_v)) \leq \epsilon) \leq 1 - \tau(\epsilon),
\end{align*}
where the last inequality follows by the definition of $\tau(\cdot)$. The RHS does not depend on $y_u$, and $\by_v, \by_u$ are independent, so we can take the expectation with respect to $\by_u$ to establish the claim.

\textbf{Claim: } $1 - C_\eta \leq 1 - \kappa(\epsilon)$. 

Since almost every $y \in \sY$ is $(\epsilon, \delta)$-\discern \, and $r(\eta) < \delta$, $\sY$ is almost-everywhere $(\epsilon, r(\eta))$-\discern . Fix $\by_u = y_u$ such that $y_u$ is $(\epsilon, r(\eta))$-\discern . Then, $\forall y_v \in \sY$, $\rho^\prime(y_u, y_v) > 1 - r(\eta)$ implies that $d_{\sY}(y_u, y_v) \leq \epsilon$. Thus,
\begin{align*}
{\Pr}_{\by_v}(\rho^\prime(y_u, \by_v) > 1 - r(\eta)) & \leq {\Pr}_{\by_v}(d_{\sY}(y_u, \by_v \leq \epsilon) \\
& = 1 - {\Pr}_{\by_v}(d_{\sY}(y_u, \by_v) > \epsilon) \\
& \leq 1 - \kappa(\epsilon).
\end{align*}
Since the RHS does not depend on $y_u$, and $\by_u$ and $\by_v$ are independent, we can take the expectation with respect to $\by_u$ to establish the claim.

\textbf{Main Probability Bound: } Fix $\bOmega = \Omega$ such that $R_1 \leq |S| \leq R_2$.
\begin{align}
{\Pr}_{\by_v, \by_u}( {\max}^{(k)}_{v \in [S]}  \rho^\prime(\by_v, \by_u)  \leq & 1 - r(\eta) | \bOmega = \Omega) \nonumber \\
= & \sum_{l =0}^{k-1} {|S| \choose l} C_{\eta}^{|S| - l}(1-C_{\eta})^l  \nonumber \\
\leq &  k \max_{l \in \set{0, \ldots, k-1}} {|S| \choose l} C_{\eta}^{|S| - l} (1-C_{\eta})^l \nonumber \\
\leq &  k \max_{l \in [k-1]\cup \set{0}} {|S| \choose l} (1- \tau(\eta))^{|S| - l} (1-\kappa(\epsilon))^l\nonumber \\
\leq &  k \max_{l \in \set{0, \ldots, k-1}} (\frac{|S| e}{l})^l(1- \tau(\eta))^{|S| - l}  (1-\kappa(\epsilon))^l \label{good_neighbor_k_binomial}\\
\leq &  k \max_{l \in \set{0, \ldots, k-1}} \exp(l + l\log(\frac{|S|}{l}) - \tau(\eta)[|S| - l] - \kappa(\epsilon) l) \label{good_neighbor_k_exponential} \\
= &   k \max_{l \in \set{0, \ldots, k-1}} \exp([1  - \kappa(\epsilon) + \tau(\eta)]l + l \log(\frac{|S|}{l}) - \tau(\eta)|S|)) \nonumber \\
\leq &  k \exp([1  - \kappa(\epsilon) + \tau(\eta)]k + k\log(\frac{|S|}{k}) - \tau(\eta)|S|)) \label{large_enough_good_neighbor_k} \\
= &  \exp([1  - \kappa(\epsilon) + \tau(\eta)+ \log(|S|)]k - k \log(k)  - \tau(\eta)|S|)) \nonumber \\
\leq & \exp([1  - \kappa(\epsilon) + \tau(\eta)+ \log(R_1)]k - k \log(k)  - \tau(\eta)R_2)) \nonumber 
\end{align}
where line \eqref{good_neighbor_k_binomial} follows from the the inequality ${n \choose k} \leq (\frac{n e}{k})^k$, line \eqref{good_neighbor_k_exponential} follows from the inequality $(1-x) \leq \exp(-x)$, and line \eqref{large_enough_good_neighbor_k} follows since $|S| \geq k$ and $1 - \kappa(\epsilon) > 0$ by Lemma \ref{kappa_tau_lemma}. Finally, we can take the expectation with respect to $\bOmega = \Omega$ over the set $R_1 \leq |S| \leq R_2$ to conclude the result.
\end{proof}

Lemma \ref{R_concentration_discrete} is the analogoue of Lemma \ref{R_concentration} for the discrete case.
\begin{lemma}
\label{R_concentration_discrete}
Consider the discrete ratings setting. Let $u \neq v \in [n_2]$, $i \neq j \in [n_1]$, $\eta > 0$, $\beta \geq 2$, and $W^{i,j}_u(\beta)$ be defined as in Lemma \ref{sufficient_overlap}. Then,
\begin{align*}
\Pr(|R_{\bu, \bv} - \nrho(\by_u, \by_v)| > \frac{\eta}{4} | v \in W^{i,j}_u(\beta)) \leq 2 \exp(- \frac{\eta^2}{4} \left \lfloor\frac{\beta}{2} \right \rfloor ).
\end{align*}
\end{lemma}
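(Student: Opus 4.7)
The plan is to adapt the proof of Lemma \ref{R_concentration} almost verbatim, with only one small adjustment to account for the fact that in the discrete setting the monotonic transformations $\bg_u,\bg_v$ are themselves random (drawn i.i.d.\ from $\sP_{\sG}$), whereas in the continuous setting the corresponding $g_u,g_v$ were deterministic given the user index. Concretely, I would fix $\by_u = y_u$, $\by_v = y_v$, $\bg_u = g_u$, $\bg_v = g_v$, and an arbitrary nonempty $I(u,v)$; non-emptiness is guaranteed whenever $v \in W^{i,j}_u(\beta)$ because $\beta \geq 2$.

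The key structural observation, inherited directly from the continuous case, is that $I(u,v)$ is constructed by pairing consecutive elements of the sorted array $N(u,v)$, so the index sets across distinct pairs are disjoint. Combined with the i.i.d.\ structure of $\bx_1,\dots,\bx_{n_1}$, this makes the indicator variables
\[
\xi_{(s,t)} := \ind{(g_u(f(\bx_s,y_u)) - g_u(f(\bx_t,y_u)))(g_v(f(\bx_s,y_v)) - g_v(f(\bx_t,y_v))) \geq 0}
\]
mutually independent over $(s,t) \in I(u,v)$ once we have conditioned on $(y_u,y_v,g_u,g_v)$. Each $\xi_{(s,t)}$ is $\{0,1\}$-valued with common mean equal to $\nrho(y_u,y_v)$ by the definition of $\nrho$ as the probability of the relevant event under the joint law of $(\bg_u,\bg_v,\bx_s,\bx_t)$. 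Hoeffding's inequality (Proposition \ref{hoeffding}) applied to the average $R_{u,v}$ of these $|I(u,v)|$ independent bounded summands then yields
\[
\Pr\!\big(|R_{u,v} - \nrho(y_u,y_v)| > \tfrac{\eta}{4} \,\big|\, \by_u{=}y_u,\,\by_v{=}y_v,\,\bg_u{=}g_u,\,\bg_v{=}g_v,\,I(u,v)\big) \leq 2\exp\!\big(-\tfrac{\eta^2}{4}|I(u,v)|\big).
\]

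To finish, I would invoke the fact that when $v \in W^{i,j}_u(\beta)$ we have $|N(u,v)| \geq \beta$, and the consecutive-pair construction of $I(u,v)$ forces $|I(u,v)| \geq \lfloor \beta/2\rfloor$; plugging this in gives the claimed exponent $-\eta^2 \lfloor \beta/2\rfloor/4$. Because the conditional bound is uniform in $(y_u,y_v,g_u,g_v)$, averaging over these variables on the conditioning event $\{v \in W^{i,j}_u(\beta)\}$ yields the lemma. I do not anticipate a genuine obstacle here: the only new element relative to Lemma \ref{R_concentration} is the inclusion of $(\bg_u,\bg_v)$ in the conditioning, and from there the argument is a mechanical application of Hoeffding combined with the $\lfloor\beta/2\rfloor$ lower bound on $|I(u,v)|$.
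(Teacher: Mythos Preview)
There is a genuine gap in your argument at the step where you identify the mean of the indicators. You condition on $\by_u=y_u,\ \by_v=y_v,\ \bg_u=g_u,\ \bg_v=g_v$ (which is indeed necessary, since $\bg_u,\bg_v$ are shared across all pairs $(s,t)$ and without fixing them the indicators $\xi_{(s,t)}$ are not independent). But once $g_u,g_v$ are fixed, the only remaining randomness in $\xi_{(s,t)}$ comes from $(\bx_s,\bx_t)$, so the conditional mean is
\[
\Pr_{\bx_s,\bx_t}\bigl[(g_u(f(\bx_s,y_u))-g_u(f(\bx_t,y_u)))(g_v(f(\bx_s,y_v))-g_v(f(\bx_t,y_v)))\ge 0\bigr]
\;=\;\rho'(y_u,y_v\mid \bg_u=g_u,\bg_v=g_v),
\]
and \emph{not} $\nrho(y_u,y_v)=\rho'(y_u,y_v)$, which by definition also averages over $\bg_u,\bg_v$. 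Your justification (``by the definition of $\nrho$ as the probability $\ldots$ under the joint law of $(\bg_u,\bg_v,\bx_s,\bx_t)$'') is precisely what goes wrong: you have removed $\bg_u,\bg_v$ from that joint law by conditioning on them.

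The paper's proof differs exactly here: it applies Hoeffding with the correct centering $\rho'(y_u,y_v\mid \bg_u=g_u,\bg_v=g_v)$, obtaining a conditional deviation bound of the form $\Pr(|R_{u,v}-\rho'(y_u,y_v\mid g_u,g_v)|>\eta/4\mid\cdots)\le 2\exp(-\tfrac{\eta^2}{4}\lfloor\beta/2\rfloor)$, and only then passes to the statement about $\rho'(\by_u,\by_v)$ by integrating over $(\by_u,\by_v,\bg_u,\bg_v)$. So the structural outline you describe is right, but the specific centering in your Hoeffding step would need to be the conditional quantity, not $\nrho(y_u,y_v)$ itself.
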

\begin{proof}
Fix $\by_u = y_u$, $\by_v = y_v$, and $\bg_u = g_u, \bg_v = g_v$. Recall that if $I(u,v) \neq \emptyset$, then
\begin{align*}
R_{u,v} = \frac{1}{ |I(u,v)| } \sum_{(s,t) \in I(u,v)} \ind{ [h_u(\bx_s,y_u) - h_u(\bx_t, y_u)][h_v(\bx_s, y_v) - h_v(\bx_t, y_v)] \geq 0}.
\end{align*}
Since $I(u,v)$ consists of pairs of indices that do not overlap, conditioned on $\by_v = y_v$, $\by_u = y_u$, $\bg_u = g_u$, $\bg_v = g_v$ and any nonempty $I(u,v)$,  
\begin{align*}
\set{ \ind{(g_u(f(\bx_s,y_u)) - g_u(f(\bx_t, y_u))) (g_v(f(\bx_s, y_v)) - g_v(f(\bx_t, y_v))) \geq 0}:  (s,t) \in I(u,v)}
\end{align*}
is a set of independent random variables. Further, each has mean $\rho^\prime(y_u,y_v| \bg_u = g_u, \bg_v = g_v)$. Thus, by Chernoff's bound (Proposition \ref{hoeffding}),
\begin{align*}
& {\Pr}(|R_{u, v} - \rho^\prime(y_u, y_v |  \bg_u = g_u, \bg_v = g_v) | > \frac{\eta}{4} |   \by_u = y_u, \by_v = y_v, \bg_u = g_u, \bg_v = g_v , I(u,v)) \\
 & \qquad \leq  \exp(-\frac{\eta^2}{2} |I(u, v)|)
\end{align*}
When $v \in v \in W^{i,j}_u(\beta)$, $|I(u,v)| \geq \left \lfloor\frac{\beta}{2} \right \rfloor $. Since the above bound holds for all $\by_u, \by_v, \bg_u \bg_v $, it follows that
\begin{align*}
{\Pr}(|R_{\bu, \bv} - \rho^\prime(\by_u, \by_v)| > \frac{\eta}{4} | v \in W^{i,j}_u(\beta)) \leq 2 \exp(- \frac{\eta^2}{4} \left \lfloor\frac{\beta}{2} \right \rfloor ).
\end{align*}
\end{proof}

\begin{lemma}
\label{monotonic_function_lemma}
Let $\epsilon, \delta > 0$, $\frac{1}{2}>\alpha > \alpha^\prime > 0$, and $r$ be a positive nondecreasing function such that $r(\frac{\epsilon}{4}) \geq \delta$ and $r(\eta) < \frac{\delta}{2}$ for some $\eta >0$. Suppose that almost every $y \in \sY$ is $r$-\diverse \, and $(\frac{\epsilon}{4}, \delta)$-\discern . Fix $u \in [n_2]$, $i \neq j \in [n_1]$, and $k \leq \frac{(n_2 - 1) p^2}{2}$. As in the proof of Theorem \ref{pairwise_rank_levels_thm}, define
\begin{align*}
A & = \set{|W^{i,j}_u(\beta)| \in [\frac{(n_2 - 1) p^2}{2}, \frac{3(n_2 - 1) p^2}{2}]}, \\
B & = \set{{\max}^{(k)}_{v \in W^{i,j}_u(\beta)} \rho^\prime(\by_u,\by_v) \geq 1 - \frac{\delta}{2}}, \\
E & = \set{|f(\bx_i, \by_u) - f(\bx_j, \by_u)| > \epsilon} \\
M & = \set{ \exists v \in W^{i,j}_u(\beta) \text{ s.t. } \rho^\prime(\by_u, \by_v) \geq 1 - \frac{\delta}{2} \text{ and } \exists l \in [L-1] \text{ s.t. } \ba_{v,l} \in (f(\bx_j, \by_v), f(\bx_i, \by_v))}.
\end{align*}
Then,
\begin{align*}
\Pr(M^c | A, B, E) \leq \exp(-\gamma(\frac{\epsilon}{4}) k ).
\end{align*}
\end{lemma}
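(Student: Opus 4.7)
\textbf{Proof proposal for Lemma \ref{monotonic_function_lemma}.}
The plan is to condition on realizations of $\set{\bx_i},\set{\by_u},\bOmega$ in the event $A\cap B\cap E$ and then use the independence of the rating thresholds $\set{\bg_v}$ to get an exponential-in-$k$ tail bound via the function $\gamma(\cdot)$. The key point is that on $B$ one can produce $k$ deterministic (given the conditioning) users whose feature vectors lie in a small ball around $\by_u$, so the relevant intervals in the $\bg_v$ argument are guaranteed to be long.

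First, fix a realization of $\set{\bx_s}_{s\in[n_1]},\set{\by_v}_{v\in[n_2]},\bOmega$ for which $A\cap B\cap E$ holds and for which $\by_u$ is $r$-discerning and $(\tfrac{\epsilon}{4},\delta)$-discriminative (an almost-sure set). On $B$, let $v_1,\ldots,v_k \in W_u^{i,j}(\beta)$ be any $k$ users realizing $\rho'(\by_u,\by_{v_q})\ge 1-\tfrac{\delta}{2}$. Since $1-\tfrac{\delta}{2}>1-\delta$, the $(\tfrac{\epsilon}{4},\delta)$-discriminativity of $\by_u$ forces $\by_{v_q}\in B_{\epsilon/4}(\by_u)$ for each $q\in[k]$. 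Combining this with the Lipschitzness of $f$ and event $E$, say WLOG $f(\bx_i,\by_u)>f(\bx_j,\by_u)+\epsilon$, one gets
\begin{align*}
f(\bx_i,\by_{v_q}) \ge f(\bx_i,\by_u)-\tfrac{\epsilon}{4} > f(\bx_j,\by_u)+\tfrac{3\epsilon}{4} \ge f(\bx_j,\by_{v_q})+\tfrac{\epsilon}{2},
\end{align*}
so the open interval $I_q\coloneq(f(\bx_j,\by_{v_q}),f(\bx_i,\by_{v_q}))\subset[-N,N]$ has length $\ge\tfrac{\epsilon}{2}$, and therefore contains the $(\epsilon/4)$-ball around its midpoint.

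Next, I would use that $\set{\bg_{v}}_{v\in[n_2]}$ is independent of $\set{\bx_s},\set{\by_v},\bOmega$ and that the $\bg_v$'s are i.i.d.\ across users. Conditionally on the realization above, the events $\set{\exists l\in[L-1]:\ba_{v_q,l}\in I_q}$ for $q=1,\ldots,k$ are therefore mutually independent, and each has probability at least $\gamma(\tfrac{\epsilon}{4})$ by the definition of $\gamma$ applied at the midpoint of $I_q$. Hence
\begin{align*}
\Pr(M^c\mid A,B,E,\text{realization})
\le \prod_{q=1}^{k}\bigl(1-\gamma(\tfrac{\epsilon}{4})\bigr)
\le \exp\bigl(-\gamma(\tfrac{\epsilon}{4})\,k\bigr).
\end{align*}
Taking expectation over the conditioning realizations inside $A\cap B\cap E$ (noting that the bound is uniform) yields the claim.

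The only delicate point is the conditioning/measurability bookkeeping: the users $v_1,\ldots,v_k$ are defined as a function of $\set{\by_v},\bOmega$ (and implicitly of $\set{\bx_s}$ through the $\rho'$ ordering, though $\rho'$ depends only on $\by$'s), while the thresholds $\ba_{v_q,\cdot}$ must remain independent of this choice. This is fine because $\set{\bg_v}$ is assumed independent of $\set{\bx_s},\set{\by_v},\bOmega$, and independence across the distinct indices $v_1,\ldots,v_k$ survives conditioning on those indices being selected. Handling this cleanly—likely by freezing $\by_u,\set{\by_{v_q}},\bx_i,\bx_j,\bOmega$ and then averaging—is the main care point rather than a real obstacle.
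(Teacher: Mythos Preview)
Your proposal is correct and follows essentially the same approach as the paper: condition on a realization in $A\cap B\cap E$, use $(\tfrac{\epsilon}{4},\delta)$-discriminativity to place the $k$ good neighbors in $B_{\epsilon/4}(\by_u)$, apply Lipschitzness to obtain intervals of length at least $\tfrac{\epsilon}{2}$, and then exploit the independence of the i.i.d.\ thresholds $\set{\bg_v}$ from $\set{\bx_s},\set{\by_v},\bOmega$ to get a product bound. The only cosmetic difference is that the paper first selects a single worst-case interval $I$ via an $\argmin$ over $\set{(f(x_j,y_v),f(x_i,y_v)):v\in R}$ before taking the product, whereas you bound each factor directly by $1-\gamma(\tfrac{\epsilon}{4})$ using the midpoint of $I_q$; both routes are equivalent.
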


\begin{proof}
Fix $\set{\by_v = y_v}_{v \in [n_2]}$ $r$-\diverse \, and $(\frac{\epsilon}{4}, \delta)$-\discern , $\bOmega = \Omega$, and $\set{\bx_s = x_s}_{s \in [n_1]}$ such that $A \cap B \cap E$ holds. Let $R = \set{v \in [n_2] \setminus \set{u} : v \in W_u^{i,j}(\beta) \text{ and } \rho^\prime(y_u, y_v) \geq 1 - \frac{\delta}{2}}$. Events $A$ and $B$ imply that $|R| \geq k$. Since $y_u$ is $(\frac{\epsilon}{4}, \delta)$-\discern \, and for all $v \in R$, $\rho^\prime(y_u, y_v) \geq 1 - \frac{\delta}{2}$, it follows that for all $v \in R$, $y_v \in B_{\frac{\epsilon}{4}}(y_u)$. 

By $E$, $|f(x_i,y_u) - f(x_j,y_u)| > \epsilon$. Suppose that $f(x_i,y_u) > f(x_j,y_u) + \epsilon$ (the other case is similar). Then, by Lipschitzness of $f$, for all $v \in R$
\begin{align*}
f(x_j,y_v) \leq f(x_j, y_u) + \frac{\epsilon}{4} < f(x_i,y_u) - \frac{3}{4} \epsilon \leq f(x_i,y_v) - \frac{\epsilon}{2}.
\end{align*}
Thus, for all $v \in R$, $(f(x_j,y_v), f(x_i,y_v))$ is an open interval of length at least $\frac{\epsilon}{2}$. Fix $v^\prime \in [n_2] \setminus \set{u}$. Since $R$ is a finite set, the following is well-defined: 
\begin{align}
I \coloneq \argmin_{J \in \set{(f(x_j,y_v), f(x_i,y_v)) : v \in R}} {\Pr}_{\set{\ba_{v^\prime, l}}_{l \in [L-1]}}(\exists l \in [L-1] \text{ s.t. } a_{v^\prime, l} \in J). \label{smallest_measure_monotonic_functions_lemma}
\end{align}

Then,
\begin{align}
& {\Pr}_{\set{\ba_{v, l}}}( \forall  v \in R, \forall l \in [L-1], \, \ba_{v,l} \not \in (f(x_j, y_v), f(x_i, y_v)) | \set{\by_v = y_v}_{v \in [n_2]}, \bOmega = \Omega, \set{\bx_s = x_s}_{s \in [n_2]} ) \nonumber \\
& \qquad =   {\Pr}_{\set{\ba_{v, l}}}(\forall  v \in R, \forall l \in [L-1], \, \ba_{v,l} \not \in (f(x_j, y_v), f(x_i, y_v))) \label{independence_monotonic_functions_lemma} \\
& \qquad \leq  {\Pr}_{\set{\ba_{v, l}}}(\forall v \in R, \forall l \in [L-1] , \, \ba_{v,l} \not \in I) \label{apply_smallest_measure_monotonic_functions_lemma} \\
& \qquad =  {\Pr}_{\set{\ba_{v^\prime, l}}_{l \in [L-1]}}(\forall l \in [L-1] , \, \ba_{v^\prime,l} \not \in I)^k \label{independent_functions_monotonic_functions_lemma} \\
& \qquad =  [1 - {\Pr}_{\set{\ba_{v^\prime, l}}_{l \in [L-1]}}(\exists l \in [L-1] \text{ s.t. }  \ba_{v^\prime,l}  \in I)]^k \nonumber \\
& \qquad \leq  (1 - \gamma(\frac{\epsilon}{4}))^k \label{gamma_line_monotonic_functions_lemma} \\
& \qquad \leq  \exp(-\gamma(\frac{\epsilon}{4}) k ). \label{exp_bound_monotonic_functions_lemma} 
\end{align}

Line \eqref{independence_monotonic_functions_lemma} follows from the independence of $\set{\by_v}_{v \in [n_2]}$, $\bOmega$, and $\set{\bx_s}_{s \in [n_1]}$ from $\set{\ba_{v,l}}_{v \in [n_2], l \in [L-1]}$. Line \eqref{apply_smallest_measure_monotonic_functions_lemma} follows from the definition of $I$ in line \eqref{smallest_measure_monotonic_functions_lemma} and because the monotonic functions $\set{\bg_v}_{v \in [n_2]}$ are identically distributed. Line \eqref{independent_functions_monotonic_functions_lemma} follows since $\set{\bg_v}_{v \in R}$ are i.i.d., line \eqref{gamma_line_monotonic_functions_lemma} follows from the definition of $\gamma$, and line \eqref{exp_bound_monotonic_functions_lemma} follows from the inequality $1-x \leq \exp(-x)$. Note that since $\sP_{\sG}$ is diverse by hypothesis, by Lemma \ref{kappa_tau_lemma}, $\gamma(\frac{\epsilon}{4}) > 0$.

Since $\set{\by_v}_{v \in [n_2]}$, $\bOmega \cup \set{\bx_s}_{s \in [n_1]}$, and $\set{\ba_{v,l}}_{v \in [n_2], l \in [L-1]}$ are independent and almost every $y \in \sY$ is $r$-\diverse \, and $(\frac{\epsilon}{4}, \delta)$-\discern , taking the expectation of line \eqref{exp_bound_monotonic_functions_lemma} with respect to $\set{\by_v}_{v \in [n_2]}$, $\bOmega$, and $\set{\bx_s}_{s \in [n_1]}$  over $A \cap B \cap E$ finishes the proof.

\end{proof}

Lemma \ref{error_bound_discrete} gives a bound on the probability of $D_{u,i,j}^\epsilon$ conditional on $A \cap B \cap C \cap E \cap M$ (defined in the proof of Theorem \ref{pairwise_rank_levels_thm}).

\begin{lemma}
\label{error_bound_discrete}
Under the setting described in Theorem \ref{pairwise_rank_levels_thm}, let $u \in [n_2]$ and $i \neq j \in [n_1]$. Then,
\begin{align*}
\Pr(D_{u,i,j}^\epsilon| A, B, C, E, M) = 0.
\end{align*}
\end{lemma}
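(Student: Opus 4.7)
The plan is to mirror the proof of Lemma~\ref{error_bound_continuous}, adapting it to the discrete case in which $g_v$ is merely a step function rather than strictly increasing. I would begin by splitting $D_{u,i,j}^{\epsilon}$ into its two subevents $E_1 = \{f(\bx_i,\by_u) + \epsilon < f(\bx_j,\by_u)\}$ and $E_2 = \{f(\bx_i,\by_u) > f(\bx_j,\by_u) + \epsilon\}$, which together cover $E$. By symmetry it suffices to show $\Pr(\mathrm{PR}(u,i,j,\beta,k) = 0 \mid A,B,C,M,E_2) = 0$, so I would fix a realization of $\{\by_v\}$, $\{\bx_s\}$, $\bOmega$, and $\{\ba_{v,l}\}$ on $A \cap B \cap C \cap M \cap E_2$ in which every $\by_v$ is $r$-\diverse\, and $(\tfrac{\epsilon}{4},\delta)$-\discern, which holds with probability one.

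The first key step is to show $V \subseteq R' := \{v \in W_u^{i,j}(\beta) : \by_v \in B_{\epsilon/4}(\by_u)\}$. Set $R = \{v \in W_u^{i,j}(\beta) : \rho^\prime(\by_u,\by_v) \geq 1 - \delta/2\}$; then $A \cap B$ yields $|R| \geq k$, and event $C$ gives $R_{u,v} \geq 1 - 3\delta/4$ for every $v \in R$. Conversely, the $(\tfrac{\epsilon}{4},\delta)$-\discern\, property of $\by_u$ forces any $w \in W_u^{i,j}(\beta)$ with $\by_w \notin B_{\epsilon/4}(\by_u)$ to satisfy $\rho^\prime(\by_u,\by_w) < 1 - \delta$, and $C$ then gives $R_{u,w} < 1 - 3\delta/4$. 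Because at least $k$ users already beat the threshold $1 - 3\delta/4$, the top-$k$ set $V$ cannot contain any such $w$.

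Next I invoke the Lipschitzness of $f$: for every $v \in V \subseteq R'$,
\[
f(\bx_i,\by_v) \geq f(\bx_i,\by_u) - \tfrac{\epsilon}{4} > f(\bx_j,\by_u) + \tfrac{3\epsilon}{4} \geq f(\bx_j,\by_v) + \tfrac{\epsilon}{2},
\]
so monotonicity of $g_v$ gives $h_v(\bx_i,\by_v) \geq h_v(\bx_j,\by_v)$ and thus $P_v \in \{0,1\}$. To upgrade to $\sum_{v \in V} P_v > 0$, I use $M$ to produce a witness $v^{*} \in R$ equipped with a threshold $\ba_{v^{*},l}$ lying in $(f(\bx_j,\by_{v^{*}}), f(\bx_i,\by_{v^{*}}))$; this forces $h_{v^{*}}(\bx_i,\by_{v^{*}}) > h_{v^{*}}(\bx_j,\by_{v^{*}})$ and $P_{v^{*}} = 1$. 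Provided $v^{*} \in V$, Pairwise-Rank returns $1$; taking expectations over the conditioning event and combining with the symmetric argument for $E_1$ completes the proof.

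The main obstacle, and the only point at which the discrete argument diverges meaningfully from the continuous one, is verifying that the witness $v^{*}$ supplied by $M$ actually lies in the top-$k$ set $V$. In the continuous case every $v \in V$ independently yields the correct strict comparison, but here only those $v$ whose $\bg_v$ happens to have a threshold in the relevant interval break the tie. My plan is to exploit that $v^{*} \in R$ satisfies $R_{u,v^{*}} \geq 1 - 3\delta/4$ while any candidate that could displace $v^{*}$ from the top-$k$ must itself lie in $R'$ by Step~1, so that the tie-breaking in line~3 of Pairwise-Rank can consistently place $v^{*}$ into $V$; should this fail, a fallback is to apply the argument of Lemma~\ref{monotonic_function_lemma} directly to $V$ in place of $R$, using that $\{\bg_v\}_{v \in V}$ are i.i.d.\ and that each resulting interval has length at least $\epsilon/2$, so that $\sP_{\sG}$-diversity produces a threshold with probability $1 - (1-\gamma(\epsilon/4))^{|V|}$.
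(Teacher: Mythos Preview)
Your approach mirrors the paper's: split into $E_1$ and $E_2$, fix a realization on the conditioning event, argue that $V \subseteq R' = W_u^{i,j}(\beta) \cap B_{\epsilon/4}(\by_u)$ via the $(\tfrac{\epsilon}{4},\delta)$-\discern\ property together with event $C$, and then use Lipschitzness to conclude that every $v \in V$ votes weakly in the correct direction. The obstacle you isolate---whether the witness $v^{*}$ supplied by $M$ actually lies in $V$---is genuine, and neither of your proposed fixes resolves it. Tie-breaking in line~3 of Pairwise-Rank is beside the point: the concern is not a tie in the $R_{u,v}$ values but that $R'$ may contain more than $k$ users whose empirical scores all strictly exceed $R_{u,v^{*}}$, pushing $v^{*}$ out of the top $k$ entirely. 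The fallback of rerunning the argument of Lemma~\ref{monotonic_function_lemma} on $V$ in place of $R$ fails because $V$ is \emph{not} independent of $\{\bg_v\}$: each $R_{u,v}$ depends on $\bg_u$ and $\bg_v$ through the observed ratings $h_u(\bx_s,\by_u),\, h_v(\bx_s,\by_v)$, so conditioning on the identity of $V$ destroys the i.i.d.\ structure of $\{\bg_v\}_{v \in V}$ that the $\sP_{\sG}$-diversity argument requires.

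It is worth noting that the paper's own proof handles exactly this step by bare assertion: it writes ``Event $M$ implies that there is some $l \in [k]$ such that $h_{v_l}(\bx_i, \by_{v_l}) < h_{v_l}(\bx_j, \by_{v_l})$'' without explaining why the witness guaranteed by $M$---which is only known to lie in $R = \{v \in W_u^{i,j}(\beta) : \rho'(\by_u,\by_v) \geq 1 - \tfrac{\delta}{2}\}$---must coincide with one of the $v_1,\dots,v_k$ selected into $V$ by the empirical ranking $R_{u,\cdot}$. Your proposal is therefore as complete as the published argument, and the step you correctly single out as the main obstacle is a gap in the paper as well.
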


\begin{proof}
Define the sets
\begin{align*}
E_1 & = \set{f(\bx_i, \by_u) + \epsilon < f(\bx_j, \by_u)} \\
E_2 & = \set{f(\bx_i, \by_u)  > f(\bx_j, \by_u)+ \epsilon}.
\end{align*}
Then, by the union bound and the law of total probability,
\begin{align*}
\Pr(D_{u,i,j}^\epsilon | A,B,C,E,M) & \leq \Pr(\text{PR}(u,i,j,\beta,k) = 1  \cap E_1 | A,B,C,E,M) \\
& +  \Pr(\text{PR}(u,i,j,\beta,k) = 0  \cap E_2 | A,B,C,E,M) \\
& \leq  \Pr(\text{PR}(u,i,j,\beta,k) = 1  | A,B,C, E_1, M) \\
& +   \Pr(\text{PR}(u,i,j,\beta,k) = 0 | A,B,C, E_2, M).
\end{align*}
The argument for bounding each of these terms is similar, so we only bound $ \Pr(\text{PR}(u,i,j,\beta,k) = 1  | A,B,C, E_1, M) $. 

Fix $\set{\by_v = y_v}_{v \in [n_2]}$ $r$-\diverse \, and $(\frac{\epsilon}{4}, \delta)$-\discern , $\set{\bx_s = x_s}_{s \in [n_1]}$, $\bOmega = \Omega$, and $\set{\ba_{v,l} = a_{v,l}}_{v \in [n_2], l \in [L-1]}$ such that $A \cap B \cap C \cap E_1 \cap M$ occurs . We claim that the set $V$ in Pairwise-Rank consists of $v_1, \ldots, v_k \in W^{i,j}_u(\beta)$ such that for all $l \in [k]$, $y_{v_l} \in B_{\frac{\epsilon}{4}}(y_u)$. The event $B$ implies that there are $v_1, \ldots, v_k$ such that for all $l \in [k]$, $\rho^\prime(y_u, y_{v_l}) \geq 1 - \frac{\delta}{2}$. Then, since $y_u$ is $(\frac{\epsilon}{4}, \delta)$-\discern , it follows that $y_{v_1}, \ldots, y_{v_k} \in B_{\frac{\epsilon}{4}}(y_u)$. Suppose that $w \in W_u^{i,j}(\beta)$ such that $y_w \in B_{\frac{\epsilon}{4}}(y_u)^c$. Then, since $y_u$ is $(\frac{\epsilon}{4}, \delta)$-\discern , it follows that that $\rho^\prime(y_u, y_w) < 1 - \delta$. Then, for all $l \in [k]$,
\begin{align}
R_{w,u} & \leq \rho^\prime(y_w,y_u) + \frac{\delta}{4} \label{apply_C_1_error_bound_discrete} \\
&  < 1 - \frac{3}{4} \delta \nonumber \\
& \leq \rho^\prime(y_u, y_{v_l}) - \frac{\delta}{4} \nonumber  \\
& \leq R_{u,v_l} \label{apply_C_2_error_bound_discrete}
\end{align}
where lines \eqref{apply_C_1_error_bound_discrete} and \eqref{apply_C_2_error_bound_discrete} follow by event $C$  and $v_l,w \in W_u^{i,j}(\beta)$. Thus, Pairwise-Rank selects $v_1, \ldots, v_k \in W_u^{i,j}(\beta)$ such that for all $l \in [k]$, $y_{v_l} \in B_{\frac{\epsilon}{4}}(y_u)$. Thus, the claim follows. 

Event $E_1$ implies that $f(x_i, y_u) + \epsilon < f(x_j, y_u)$. Fix $l \in [k]$. Then, by the Lipschitzness of $f$,
\begin{align*}
f(x_i, y_{v_l}) \leq f(x_i, y_u) + \frac{\epsilon}{4} < f(x_j, y_u) - \frac{3 \epsilon}{4} \leq f(x_j, y_{v_l}) - \frac{\epsilon}{2}.
\end{align*}
Hence, $\forall l \in [k]$, $f(x_i, y_{v_l}) + \frac{\epsilon}{2} < f(x_j, y_{v_l})$ and $h_{v_l}(x_i, y_{v_l}) \leq h_{v_l}(x_j, y_{v_l})$. Then, event $M$ implies that there is some $l \in [k]$ such that $h_{v_l}(x_i, y_{v_l}) < h_{v_l}(x_j, y_{v_l})$. Thus, the majority vote outputs the correct result. Thus, 
\begin{align}
\Pr(\text{PR}(u,i,j,\beta,k) = 1  | & A,B,C, E_1, M, \nonumber \\ 
& \set{\by_v = y_v}_{v \in [n_2]}, \set{\bx_s = x_s}_{s \in [n_1]}, \nonumber \\
& \bOmega = \Omega, \set{\ba_{v,l} = a_{v,l}}_{v \in [n_2], l \in [L-1]}) = 0. \label{last_line_error_bound_discrete}
\end{align}
Since line \eqref{last_line_error_bound_discrete} holds for all  $\set{y_v}_{v \in [n_2]}$ $r$-\diverse \, and $(\frac{\epsilon}{4}, \delta)$-\discern , $\set{a_{v,l}}_{v \in [n_2], l \in [L-1]}$, $\set{x_s}_{s \in [n_1]}$, $ \Omega$ conditioned on the set the set $A \cap B \cap C \cap E_1 \cap M$ and almost every $y \in \sY$ is $r$-\diverse \, and $(\frac{\epsilon}{4}, \delta)$-\discern , the result follows.
\end{proof}

\section{Proofs for Section \ref{necessary_sufficient_section}}
\label{necessary_sufficient_section_app}

\begin{proof}[Proof of Theorem \ref{sufficient_condition_thm_continuous}]
By compactness of $\sY$, there exists a finite subcover $\set{C_1, \ldots, C_n}$ of $\sY$ where each open ball $C_i$ has diameter $\frac{\epsilon}{2}$. Since by assumption, for all $r >0$ and $y \in \sY$, $\sP_{\sY}(B_r(y)) > 0$, we have that $\sP_{\sY}(C_i) > 0$ for all $i = 1, \ldots, n$. Let $Q_{n_2}$ denote the event that for every $l \in [n]$ and $i,j \in [n_1]$, there exists $u \in [n_2]$ such that $\by_u \in C_l$ and we observe $(i,u) \in \Omega$ and $(j,u) \in \bOmega$. Since $p > 0$, as $n_2 \longrightarrow \infty$, $\Pr(Q_{n_2}) \longrightarrow 1$.

Let $\set{\bx_i = x_i}_{i \in [n_1]}$, $\set{\by_u = y_u}_{u \in [n_2]}$, and $\bOmega = \Omega$ such that $Q_{n_2}$ occurs. Let $\sigma \in \sS^{n_1 \times n_2}$ be an $\frac{\epsilon}{2}$-\local \, minimizer of $\edis(\cdot, H)$ over the sample. Towards a contradiction, suppose there exists $y_u$ and $i \neq j \in [n_1]$ such that $\sigma(i, u) < \sigma(j, u)$, $h_u(x_i, y_u) > h_u(x_j, y_u)$, and $f(x_i,y_u) > f(x_j,y_v) + \epsilon$. Without loss of generality, suppose that $y_u \in C_1$. 

Since $Q_{n_2}$ occurs by assumption, there exists $v \in [n_2]$ such that $y_v \in C_1$ and $(i,v), (j,v) \in \Omega$. Since $\sigma$ is an $\frac{\epsilon}{2}$-\local \, \multirank \, and the diameter of $C_1$ is $\frac{\epsilon}{2}$, $\sigma$ gives the same ranking to $y_u$ and $y_v$. Then, since $\sigma(i, u) < \sigma(j, u)$, it follows that $\sigma(i, v) < \sigma(j, v)$. By Lipschitzness of $f$,
\begin{align}
f(x_i, y_v) \geq f(x_i,y_u) - \frac{\epsilon}{2} > f(x_j,y_u) + \frac{\epsilon}{2} \geq f(x_j,y_v). \label{suff_condition_contradiction}
\end{align} 
Since $g_v$ is strictly increasing, line \eqref{suff_condition_contradiction} implies that $h_v(x_i,y_v) > h_v(x_j,y_v)$. Thus, $\sigma$ is not a minimizer of $\edis(\cdot, H)$--a contradiction. Thus, $\forall u \in [n_2]$ and $i \neq j \in [n_1]$ if $\sigma(i, u) < \sigma(j, u)$ and $h_u(x_i, y_u) > h_u(x_j, y_u)$, then $f(x_i, y_u) \leq f(x_j, y_u) + \epsilon$, implying that ${\dis}_\epsilon(\sigma, H) = 0$.
\end{proof}


\begin{proof}[Proof of Theorem \ref{sufficient_condition_thm_discrete}]
Fix $\set{\bx_i = x_i}_{i \in [n_1]}$. By compactness of $\sY$, there exists a finite subcover $\set{C_1, \ldots, C_n}$ of $\sY$ where each open ball $C_i$ has diameter $\frac{\epsilon}{8}$. For every $l \in [n]$, fix $z_l \in C_l$ and define $P_l = \set{(i,j) : f(x_i, z_l) > f(x_j, z_l) + \frac{\epsilon}{2}}$. 

Fix $l \in [n]$ and $(i,j) \in P_l$. Let $Q^{l,i,j}_{n_2}$ denote the event that there exists $\by_u \in C_l$ with $(i,u),(j,u) \in \bOmega$ and $\ba_{u,q} \in (f(x_j, \by_u), f(x_i,\by_u))$ for some $q \in [L-1]$. Further, define
\begin{align*}
Q_{n_2} = \cap_{l \in [n], (i,j) \in P_l} Q^{l,i,j}_{n_2}.
\end{align*}
Observe that by the Lipschitzness of $f$, for every $z \in C_l$, if $(i,j) \in P_l$, then $f(x_i, z) > f(x_j, z) + \frac{\epsilon}{4}$. Since $n$ is fixed and finite, $|P_l|$ is fixed and finite, and the probability of observing a rating, $p$, is fixed, there exists a positive constant $C > 0$ such that $\Pr_{\by_u, \bOmega}(Q^{l,i,j}_{n_2} \, | \, \set{\bx_s = x_s}_{s \in [n_1]}) \geq C$. Thus, $\Pr(Q^{l,i,j}_{n_2} \, | \, \set{\bx_s = x_s}_{s \in [n_1]}) \longrightarrow 1$ as $n_2 \longrightarrow \infty$. Then, by the union bound,
\begin{align*}
\lim_{n_2 \longrightarrow \infty} {\Pr}_{\by_u, \bOmega}([Q_{n_2}]^c \, | \, \set{\bx_s = x_s}_{s \in [n_1]}) & \leq \lim_{n_2 \longrightarrow \infty} n {\ n_1 \choose 2} {\Pr}_{\by_u, \bOmega}([Q^{l,i,j}_{n_2}]^c \, | \, \set{\bx_s = x_s}_{s \in [n_1]}) \\
& = 0.
\end{align*}

Since $\bbE[ \ind{Q_{n_2}} | \set{\bx_i}_{i \in [n_1]}] \leq 1$, by the dominated convergence theorem,
\begin{align*}
\lim_{n_2 \longrightarrow \infty} \Pr(Q_{n_2}) & = \lim_{n_2 \longrightarrow \infty}  {\bbE}_{\set{\bx_i}} \bbE[\ind{Q_{n_2}} | \set{\bx_i }_{i \in [n_1]}] \\
& = {\bbE}_{\set{\bx_i}} \lim_{n_2 \longrightarrow \infty} \bbE[\ind{Q_{n_2}} | \set{\bx_i}_{i \in [n_1]}]  \\
& = 1
\end{align*}

Now, condition on $\set{\bx_i = x_i}_{i \in [n_1]}, \set{\by_u = y_u}_{u \in [n_2]}, \bOmega = \Omega, \set{\ba_{u,l} = a_{u,l}}_{u \in [n_2], l \in [L-1]}$ such that $Q_{n_2}$ happens. Let $\sigma \in \sS^{n_1 \times n_2}$ be an $\frac{\epsilon}{8}$-\local \, minimizer of $\edis(\cdot, H)$. Towards a contradiction, suppose there exists $y_u$ and $i \neq j \in [n_1]$ such that $\sigma(i, u) < \sigma(j, u)$, $h_u(x_i, y_u) > h_u(x_j, y_u)$, and $f(x_i,y_u) > f(x_j,y_v) + \epsilon$. Without loss of generality, suppose that $y_u \in C_1$. We have that $(i,j) \in P_1$ since
\begin{align*}
f(x_i,z_1) & \geq f(x_i,y_u) - \frac{\epsilon}{8} \\
& \geq f(x_j,y_u) + \frac{7}{8} \epsilon \\
& \geq f(x_j,z_1) + \frac{3}{4} \epsilon.
\end{align*}

Therefore, the event $Q_{n_2}$ implies that there exists $y_v \in C_1$ such that $(i,v), (j,v) \in \Omega$ and there exists $a_{v,q} \in (f(x_j,y_v), f(x_i,y_v))$. By the Lipschitzness of $f$, $f(x_j,y_v) < f(x_i,y_v)$, so that $h(x_j,y_v) < h(x_i,y_v)$. Since $\sigma$ is $\frac{\epsilon}{8}$-\local , $\sigma(i, v) < \sigma(j, v)$. But, then $\sigma$ is not a minimizer of $\edis(\cdot, H)$ over the sample--a contradiction.  Thus, $\forall u \in [n_2]$ and $i \neq j \in [n_1]$ if $\sigma(i, u) < \sigma(j, u)$ and $h_u(x_i, y_u) > h_u(x_j, y_u)$, then $f(x_i, y_u) \leq f(x_j, y_u) + \epsilon$, implying that ${\dis}_\epsilon(\sigma, H) = 0$.
\end{proof}

\begin{proof}[Proof of Theorem \ref{necessary_condition_thm}]
Let $\bx_1 = x_1, \ldots, \bx_{n_1} = x_{n_1}, \by_1 = y_1, \ldots, \by_{n_2} = y_{n_2}$. Towards a contradiction, suppose that $\sigma$ is not an $\epsilon$-\local \, \multirank \, over $T$. Then, there exists $i,j \in [n_1]$ and $u,v \in [n_2]$ such that $(i,j,u), (i,j,v) \in T$ and
\begin{align}
& d_{\sY}(y_u, y_v)  \leq \epsilon, \label{2_nec_suff_distinance_line} \\
& \sigma(j,u)  < \sigma(i,u), \label{2_nec_suff_line_1} \\
& \sigma(j,v) > \sigma(i,v). \label{2_nec_suff_line_2}
\end{align}
Further, by definition of $T$, 
\begin{align}
|f(x_j, y_u) - f(x_i, y_u)| & > \epsilon \label{2_nec_suff_line_3} \\
|f(x_i, y_v) -f(x_j, y_v)| & > \epsilon. \label{2_nec_suff_line_4} \\
h(x_i,y_u)  \neq h(x_j,y_u) & \label{2_nec_suff_line_5} \\
h(x_i,y_v)  \neq h(x_j,y_v) & \label{2_nec_suff_line_6} 
\end{align}
Since  ${\dis}_\epsilon(\sigma, H) = 0$ by hypothesis, and by inequalities \eqref{2_nec_suff_line_1}, \eqref{2_nec_suff_line_2}, \eqref{2_nec_suff_line_3}, \eqref{2_nec_suff_line_4}, \eqref{2_nec_suff_line_5}, and \eqref{2_nec_suff_line_6} it follows that $h(x_j,y_u) < h(x_i,y_u)$ and $h(x_i,y_v) < h(x_j,y_v)$. Thus, by monotonicity of $g_u, g_v$,
\begin{align*}
\epsilon + f(x_j, y_u) & < f(x_i, y_u), \\
\epsilon + f(x_i, y_v) & < f(x_j, y_v).
\end{align*}
Then,
\begin{align*}
f(x_i, y_u) - f(x_i, y_v) & = f(x_i, y_u) - f(x_j, y_u) + f(x_j, y_u) - f(x_j, y_v) + f(x_j, y_v) - f(x_i, y_v) \\
& > 2 \epsilon + f(x_j, y_u) - f(x_j, y_v).
\end{align*}
Then, rearranging the above equation and applying the Lipschitzness of $f$, we have that
\begin{align*}
2 \epsilon < f(x_j, y_v) - f(x_j, y_u) + f(x_i, y_u) - f(x_i, y_v) \leq 2 d_{\sY}(y_v,y_u),
\end{align*}
which contradicts inequality \eqref{2_nec_suff_distinance_line}.
\end{proof}

\section{Proof of Proposition \ref{example_prop} and other Results}
\label{example_sec}

In the following proposition, we give a simple illustrative example of a $1$-Lipschitz function that is  $(\epsilon, \delta)$-\discern \, and $r$-\diverse . 

\begin{prop}
\label{example}
Let $\sX = [0,1]$, $\sY = [0,1]$, $\sP_\sX$ be the Lebesgue measure over $\sX$, and $\sP_{\sY}$ be the Lebesgue measure over $\sY$. Suppose that for all $u \in [n_2]$, $g_u$ is strictly increasing. Consider the function
\begin{align*}
   f(x,y) = \left\{
     \begin{array}{lr}
       x & : x \in [0, y]\\
      y-x & : x \in (y, 1]
     \end{array}
   \right.
\end{align*}
Then, for all $1 > \epsilon > 0$, every $y \in \sY$ is $(\epsilon, \epsilon^2)$-\discern  . Further, there exists a positive nondecreasing $r$ such that $\lim_{r \longrightarrow 0} r(z) = 0$ and every $y \in \sY$ is $r$-\diverse .
\end{prop}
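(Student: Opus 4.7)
First, I would analyze the one-variable function $f_y(x) := f(x,y)$ for each fixed $y \in [0,1]$. It is piecewise linear: on $[0,y]$ it equals $x$ (slope $+1$, values in $[0,y]$), while on $(y,1]$ it equals $y - x$ (slope $-1$, values in $[y-1,0)$). In particular $f_y$ is injective on $[0,1]$ and $|f_y'(x)| = 1$ wherever defined, which is the key quantitative fact for both claims.

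For the $(\epsilon, \epsilon^2)$-\discern{} property, fix $y \in \sY$ and $z \in B_{\epsilon}(y)^c$; by symmetry assume $y < z$, so $z - y > \epsilon$. I would exhibit the explicit disagreement region $R := (y,z) \times (y,z) \subset \sX \times \sX$. For $(x_1,x_2) \in R$, both coordinates satisfy $x_i > y$, so $f(x_i, y) = y - x_i$, and both satisfy $x_i < z$, so $f(x_i, z) = x_i$. Therefore
\[
[f(x_1,y) - f(x_2,y)]\,[f(x_1,z) - f(x_2,z)] \;=\; (x_2 - x_1)(x_1 - x_2) \;=\; -(x_1 - x_2)^2 \;<\; 0
\]
whenever $x_1 \ne x_2$. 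Since each $g_u$ is strictly increasing, $h_u = g_u \circ f$ preserves strict orderings, so users $y$ and $z$ strictly disagree on all of $R$ off the (measure-zero) diagonal. Hence $1 - \rho(y,z) \ge (z-y)^2 > \epsilon^2$, giving $\rho(y,z) < 1 - \epsilon^2$ as required.

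For the $r$-\diverse{} property, I would bound the $\sP_{\sX}$-measure of the preimage of a short interval under $f_y$. Because $f_y$ has slope $\pm 1$ on each of its two monotone pieces, $\sP_{\sX}(f_y^{-1}(I)) \le 2|I|$ for any interval $I \subset \bbR$. Applying this with $I = [f_y(\bx_1) - 2\epsilon,\, f_y(\bx_1) + 2\epsilon]$, conditioning on $\bx_1$, and using the independence of $\bx_2$ yields ${\Pr}_{\bx_1, \bx_2}(|f_y(\bx_1) - f_y(\bx_2)| \le 2\epsilon) \le 8\epsilon$. Setting $r(\epsilon) := 9\epsilon$ then gives a positive nondecreasing function with $r(z) \to 0$ as $z \to 0^+$ and strict inequality in the $r$-\diverse{} definition for every $y \in \sY$.

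The only real subtlety is the choice of $R$: naive candidates such as $[0,y]^2$, $(y',1]^2$, or regions that straddle the jump of $f_y$ at $x = y$ all yield agreement rather than disagreement. The right intuition is that on $(y,z)^2$ both items lie on user $y$'s decreasing branch (ordered by $-x$) but on user $z$'s increasing branch (ordered by $+x$), which are opposite orderings; this is what makes the bound $(z-y)^2$ automatic and uniform in $y$, including the edge cases $y \in \{0,1\}$.
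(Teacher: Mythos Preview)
Your proof is correct and follows essentially the same route as the paper: both identify the square $(y,z)^2$ (written in the paper as $(y_1, y_1+\epsilon)^2$) as a region of strict disagreement between users $y$ and $z$, yielding the $(\epsilon,\epsilon^2)$-\discern{} bound. For the $r$-\diverse{} claim you go further than the paper, which simply asserts that such an $r$ ``clearly'' exists; your slope-$\pm 1$ preimage argument giving $r(\epsilon)=9\epsilon$ is a clean way to make that step explicit.
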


\begin{proof}
Let $\epsilon \in (0,1)$ and suppose that $|y_1 - y_2| = \epsilon$. Without loss of generality, suppose that $y_1 < y_2$. Then, when $x_1 < x_2 \in (y_1, y_1 + \epsilon)$, $f(x_1, y_1) > f(x_2, y_1)$ and $f(x_1, y_2) < f(x_2, y_2)$. Since $g_u$ is strictly increasing, $h_1(x_1, y_1) > h_1(x_2, y_1)$ and $h_2(x_1, y_2) < h_2(x_2, y_2)$. Since $\sP_{\sX} \times \sP_{\sX}((y_1, y_1 + \epsilon) \times (y_1, y_1 + \epsilon) ) = \epsilon^2$, it follows that $\rho(y_1, y_2) < 1 - \epsilon^{2}$. 

Clearly, there exists a positive nondecreasing $r$ such that $\lim_{r \longrightarrow 0} r(z) = 0$ and every $y \in \sY$ is $r$-\diverse .
\end{proof}

This example can easily be generalized to $f(x,y) = \norm{x - y}_2$. The following proposition shows that by adding a dimension, the model $f(\bx, \by) = \bx^t \by$ with $\bx, \by \in \bbR^d$  is a special case of the model $f(\tilde{\bx},\tilde{\by}) = \norm{\tilde{\bx} - \tilde{\by}}_2$ with $\tilde{\bx}, \tilde{\by} \in \bbR^{d+1}$. A similar construction in the other direction exists.

\begin{prop}
Let $\bx_1, \ldots, \bx_{n_1} \in \bbR^{d}$ and $\by_1,\ldots, \by_{n_2} \in \bbR^d$. There exist $\tilde{\bx}_1, \ldots, \tilde{\bx}_{n_1} \in \bbR^{d+1}$ and $\tilde{\by}_1, \ldots, \tilde{\by}_{n_2} \in \bbR^{d+1}$ such that $\forall u \in [n_2]$ and $\forall i \neq j \in [n_1]$, $\bx_i^t \by_u > \bx_j^t \by_u$ if and only if $\norm{\tilde{\bx}_i - \tilde{\by}_u}_2 > \norm{\tilde{\bx}_j - \tilde{\by}_u}_2$. 
\end{prop}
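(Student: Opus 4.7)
The plan is to add one coordinate to each item vector that compensates for its norm, and to embed the user vectors with a flipped sign (so that the inner product emerges as a cross term). First I would pick a scalar $C \geq \max_{i \in [n_1]} \norm{\bx_i}_2^2$ and define
\begin{align*}
\tilde{\bx}_i = (\bx_i, \sqrt{C - \norm{\bx_i}_2^2}) \in \bbR^{d+1}, \qquad \tilde{\by}_u = (-\by_u, 0) \in \bbR^{d+1}.
\end{align*}
The nonnegativity of $C - \norm{\bx_i}_2^2$ is what makes the square root well-defined.

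Next I would compute the squared distance and check that all $i$-dependence reduces to the inner product. Explicitly,
\begin{align*}
\norm{\tilde{\bx}_i - \tilde{\by}_u}_2^2 = \norm{\bx_i + \by_u}_2^2 + (C - \norm{\bx_i}_2^2) = C + \norm{\by_u}_2^2 + 2 \bx_i^t \by_u,
\end{align*}
where the $\norm{\bx_i}_2^2$ coming out of expanding $\norm{\bx_i + \by_u}_2^2$ cancels against $-\norm{\bx_i}_2^2$. Since the term $C + \norm{\by_u}_2^2$ depends only on $u$, subtracting the analogous expression for $j$ yields
\begin{align*}
\norm{\tilde{\bx}_i - \tilde{\by}_u}_2^2 - \norm{\tilde{\bx}_j - \tilde{\by}_u}_2^2 = 2 (\bx_i^t \by_u - \bx_j^t \by_u).
\end{align*}

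Finally, because squared Euclidean norms are nonnegative, $\norm{\tilde{\bx}_i - \tilde{\by}_u}_2 > \norm{\tilde{\bx}_j - \tilde{\by}_u}_2$ is equivalent to the same inequality between the squared norms, and the displayed identity shows this is in turn equivalent to $\bx_i^t \by_u > \bx_j^t \by_u$. There is no real obstacle here; the only thing to verify is that $C$ can be chosen uniformly in $i$, which is immediate since $[n_1]$ is finite.
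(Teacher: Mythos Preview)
Your proof is correct and is essentially identical to the paper's: both append an extra coordinate $\gamma_i=\sqrt{C-\norm{\bx_i}_2^2}$ to normalize all $\tilde{\bx}_i$ to the same norm and set $\tilde{\by}_u=(-\by_u,0)$, so that the difference of squared distances reduces to $2(\bx_i^t\by_u-\bx_j^t\by_u)$. Your write-up is in fact a bit more explicit, since you give the closed form for the extra coordinate rather than appealing to continuity and monotonicity of the norm.
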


\begin{proof}
Let $B = \max_{i \in [n_1]} \norm{\bx_i}_2$. For all $i \in [n_1]$, there exists $\gamma_i \geq 0$ such that $\tilde{\bx}_i \coloneq (\bx_i^t, \gamma_i)^t$ and $\norm{\tilde{\bx}_i}_2 = B$ (by continuity and monotonicity of $\norm{\cdot}_2$). For all $u \in [n_2]$, define $\tilde{\by}_u  = (- \by^t_u, 0)^t$.

Fix $u \in [n_2]$ and $i \neq j \in [n_1]$. Then, 
\begin{align*}
\norm{\tilde{\bx}_i - \tilde{\by}_u}^2_2 - \norm{\tilde{\bx}_j - \tilde{\by}_u}^2_2 & = \norm{\tilde{\bx}_i}_2^2 + \norm{\tilde{\by}_u}_2^2 - 2 \tilde{\bx}_i^t \tilde{\by}_u - (\norm{\tilde{\bx}_j}_2^2 + \norm{\tilde{\by}_u}_2^2 - 2 \tilde{\bx}_j^t \tilde{\by}_u) \\
& = - 2 \tilde{\bx}_i^t \tilde{\by}_u + 2 \tilde{\bx}_j^t \tilde{\by}_u \\
& = \bx_i^t \by_u - \bx_j^t \by_u.
\end{align*}
The result follows.
\end{proof}

\begin{proof}[Proof of Proposition \ref{example_prop}]

\begin{enumerate}

\item Consider a fixed $y \in \sY$. Fix $\bx_2 = x_2 \in \sX$. Then,
\begin{align*}
{\Pr}_{\bx_1}(| \norm{\bx_1 - y}_2 - \norm{\bx_2 - y}_2| \leq 2 \epsilon) & \leq {\Pr}_{\bx_1}( \bx_1 \in B_{\norm{\bx_2 - y} + 2 \epsilon}(y) \setminus  B_{\norm{\bx_2 - y} - 2 \epsilon}(y) ) \\
& \leq \sup_{z \in [0,2]} {\sP}_{\sX}(B_z(y) \setminus B_{z - 4 \epsilon}(y)) \\
& = r(\epsilon)
\end{align*}
Taking the expectation with respect to $\bx_2$ establishes the first part of this result. 

Fix $y_u \in \sY$ and $\epsilon > 0$ and set $\delta = 2\sP_{\sX}(B_{\frac{\epsilon}{2}}(y_u))^2$. Fix $y_v \in B_{\epsilon}(y_u)^c \cap \sY$. If $\bx_1 = x_1 \in  B_{\frac{\epsilon}{2}}(y_u)$ and $\bx_2 = x_2 \in B_{\frac{\epsilon}{2}}(y_v)$, then 
\begin{align*}
[f(x_1,y_u) - f(x_2, y_u)][f(x_1,y_v) - f(x_2,y_v)] < 0.
\end{align*}
A similar argument applies to the case $\bx_1 = x_1 \in  B_{\frac{\epsilon}{2}}(y_v)$ and $\bx_2 = x_2 \in B_{\frac{\epsilon}{2}}(y_u)$. Thus, since by hypothesis, $g_u$ is strictly increasing for all $u \in [n_2]$,
\begin{align*}
\rho(y_u,y_v) < 1 - \delta.
\end{align*}

\item Both results follow immediately.

\end{enumerate}

\end{proof}

\section{Useful Bounds}
\label{useful_bounds_section}

\begin{prop}[Chernoff-Hoeffding's Bound]
\label{hoeffding}
Let $X_1, \ldots, X_n$ be independent random variables with $X_i \in [a_i, b_i]$. Let $\bar{X} = \frac{1}{n} \sum_{i=1}^n X_i$. Then,
\begin{align*}
\Pr(|\bar{X} - \bbE[\bar{X}]| \geq t) \leq 2 \exp(-\frac{2n^2 t^2}{\sum_{i=1}^n (b_i-a_i)^2}).
\end{align*}
\end{prop}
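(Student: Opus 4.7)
The plan is to prove the bound via the classical Chernoff method, i.e., exponential moment + Markov's inequality, together with Hoeffding's lemma to control the moment-generating function of a bounded random variable. First I would reduce to a one-sided bound: by symmetry (applying the argument to $\{-X_i\}$ as well), it suffices to show
\[
\Pr(\bar X - \bbE \bar X \geq t) \leq \exp\!\left(-\frac{2 n^2 t^2}{\sum_{i=1}^n (b_i - a_i)^2}\right),
\]
and then union-bound with the analogous lower-tail estimate to pick up the factor of $2$.

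For the one-sided bound, for any $\lambda > 0$ I would write
\[
\Pr\!\left(\sum_{i=1}^n (X_i - \bbE X_i) \geq n t\right) = \Pr\!\left(e^{\lambda \sum_i (X_i - \bbE X_i)} \geq e^{\lambda n t}\right) \leq e^{-\lambda n t}\prod_{i=1}^n \bbE\!\left[e^{\lambda (X_i - \bbE X_i)}\right]
\]
using Markov's inequality and the independence of the $X_i$'s. The next step, and the main technical ingredient, is Hoeffding's lemma: for any random variable $Y \in [a,b]$ with $\bbE Y = 0$, one has $\bbE[e^{\lambda Y}] \leq \exp(\lambda^2 (b-a)^2 / 8)$. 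I would prove this by writing $Y$ as a convex combination $Y = \frac{b - Y}{b - a} a + \frac{Y - a}{b - a} b$, invoking convexity of $y \mapsto e^{\lambda y}$, taking expectations, and then showing via a second-derivative/Taylor estimate applied to $\phi(h) := -h p + \log(1 - p + p e^h)$ (with $p = -a/(b-a)$ and $h = \lambda(b-a)$) that $\phi(h) \leq h^2/8$. This step is the only nontrivial piece; everything else is bookkeeping.

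Substituting Hoeffding's lemma into the product gives
\[
\Pr(\bar X - \bbE \bar X \geq t) \leq \exp\!\left(-\lambda n t + \frac{\lambda^2}{8}\sum_{i=1}^n (b_i - a_i)^2\right).
\]
The right-hand side is minimized in $\lambda$ by the choice $\lambda^\star = 4 n t / \sum_i (b_i - a_i)^2 > 0$, which yields exactly the exponent $-2 n^2 t^2 / \sum_i (b_i - a_i)^2$ claimed. Combining with the symmetric lower-tail bound completes the proof. I expect the only subtle point to be the proof of Hoeffding's lemma; once that is in hand the rest is a one-line optimization.
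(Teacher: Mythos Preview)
Your proposal is correct and is the standard textbook proof of Hoeffding's inequality via the Chernoff method and Hoeffding's lemma. Note, however, that the paper does not actually prove this proposition: it is listed in the ``Useful Bounds'' section purely as a reference result to be invoked elsewhere, with no accompanying argument. So there is nothing in the paper to compare your approach to; you have simply supplied the classical proof where the paper cites the inequality as known.
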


\begin{prop}[Chernoff's multiplicative bound]
\label{chernoff_mult}
Let $X_1, \ldots, X_n$ be independent random variables with values in $[0,1]$. Let $X = \sum_{i=1}^n X_i$. Then, for any $\epsilon > 0$,
\begin{align*}
\Pr(X > (1+ \epsilon) \bbE[X]) & < \exp(-\frac{ \epsilon^2 \bbE[X]}{3}), \\
\Pr(X < (1- \epsilon) \bbE[X]) & < \exp(-\frac{ \epsilon^2 \bbE[X]}{2}).
\end{align*}
\end{prop}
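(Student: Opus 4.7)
The plan is to apply the standard Chernoff moment-generating-function argument separately to each tail. For the upper tail, fix $s > 0$ and apply Markov's inequality to the nonnegative random variable $e^{sX}$:
\begin{align*}
\Pr(X > (1+\epsilon)\bbE[X]) \leq e^{-s(1+\epsilon)\bbE[X]} \, \bbE[e^{sX}].
\end{align*}
By independence, $\bbE[e^{sX}] = \prod_i \bbE[e^{sX_i}]$. Since $X_i \in [0,1]$, convexity of $x \mapsto e^{sx}$ on $[0,1]$ gives $e^{sX_i} \leq 1 + X_i(e^s - 1)$, and combining with $1 + u \leq e^u$ yields $\bbE[e^{sX_i}] \leq \exp(\bbE[X_i](e^s - 1))$. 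Multiplying, $\bbE[e^{sX}] \leq \exp(\bbE[X](e^s - 1))$, so
\begin{align*}
\Pr(X > (1+\epsilon)\bbE[X]) \leq \exp\bigl(\bbE[X](e^s - 1 - s(1+\epsilon))\bigr).
\end{align*}
Optimizing in $s$ by setting $s = \log(1+\epsilon)$ produces the bound $\exp(-\bbE[X] \varphi(\epsilon))$, where $\varphi(\epsilon) := (1+\epsilon)\log(1+\epsilon) - \epsilon$.

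The lower tail is handled symmetrically. Apply Markov to $e^{-sX}$ with $s > 0$, use the same convexity bound on $e^{-sX_i}$ to obtain $\bbE[e^{-sX}] \leq \exp(\bbE[X](e^{-s} - 1))$, then optimize with $s = -\log(1-\epsilon)$ (valid for $\epsilon \in (0,1)$) to get $\Pr(X < (1-\epsilon)\bbE[X]) \leq \exp(-\bbE[X] \psi(\epsilon))$ with $\psi(\epsilon) := (1-\epsilon)\log(1-\epsilon) + \epsilon$.

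What remains, and is the only mildly nontrivial step, is the elementary calculus verification $\varphi(\epsilon) \geq \epsilon^2/3$ and $\psi(\epsilon) \geq \epsilon^2/2$. The $\psi$ inequality is immediate from the Taylor expansion $\psi(\epsilon) = \sum_{k \geq 2} \epsilon^k / (k(k-1))$, whose leading term already delivers $\epsilon^2/2$. The $\varphi$ inequality is the main obstacle because its series is alternating; here I would set $g(\epsilon) := \varphi(\epsilon) - \epsilon^2/3$ and check $g(0) = g'(0) = 0$ together with the sign of $g''(\epsilon) = 1/(1+\epsilon) - 2/3$, extending by direct comparison in the regimes of $\epsilon$ actually used in the paper. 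The probabilistic content of the proposition is entirely captured by the MGF computation above; everything else is analysis of two one-variable functions.
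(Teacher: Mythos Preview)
The paper does not prove this proposition; it is stated without proof in the ``Useful Bounds'' appendix as a standard reference result invoked elsewhere (e.g., in Lemma~\ref{sufficient_overlap}). So there is no paper proof to compare against.

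Your proposal is the standard moment-generating-function argument and is correct. One caveat worth making explicit rather than deferring to ``the regimes of $\epsilon$ actually used in the paper'': the inequality $\varphi(\epsilon) \geq \epsilon^2/3$ that you need for the upper tail is \emph{false} for large $\epsilon$ (e.g., at $\epsilon = 2$ one has $\varphi(2) = 3\log 3 - 2 \approx 1.296 < 4/3$). The bound as stated in the proposition, with the $\epsilon^2/3$ exponent ``for any $\epsilon > 0$,'' is therefore literally incorrect; the usual formulation restricts to $\epsilon \in (0,1]$ for that form and uses $\epsilon/3$ for $\epsilon > 1$. Your second-derivative check $g''(\epsilon) = 1/(1+\epsilon) - 2/3$ changes sign at $\epsilon = 1/2$, so it alone does not settle the question on all of $(0,1]$; you need an additional observation (e.g., $g'(1) > 0$ together with $g'$ being concave) to close the argument cleanly on $(0,1]$. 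Since the paper only ever invokes the bound with $\epsilon = 1/2$, none of this affects the downstream results, but your write-up should state the restriction rather than gesture at it.
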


\subsubsection*{Acknowledgements}

This work was supported in part by NSF grant 1422157.

\bibliography{references}

\end{document}